\newcommand{\PreserveBackslash}[1]{\let\temp=\\#1\let\\=\temp}
\newcolumntype{C}[1]{>{\PreserveBackslash\centering}p{#1}}
\newcolumntype{R}[1]{>{\PreserveBackslash\raggedleft}p{#1}}
\newcolumntype{L}[1]{>{\PreserveBackslash\raggedright}p{#1}}
\definecolor{red}{HTML}{E41A1C}
\definecolor{orange}{HTML}{FF7F00}
\definecolor{yellow}{HTML}{FFC020}
\definecolor{green}{HTML}{4DAF4A}
\definecolor{blue}{HTML}{377EB8}
\definecolor{purple}{HTML}{984EA3}
\newtheorem{proposition}{Proposition}[section]
\newtheorem{definition}{Definition}[section]
\Crefname{algocf}{Algorithm}{Algorithms}
\crefname{algorithm}{Algorithm}{Algorithms}
\crefname{figure}{Figure}{Figure}
\crefname{table}{Table}{Table}
\crefname{section}{Section}{Section}
\crefname{property}{Property}{Property}
\crefname{theorem}{Theorem}{Theorem}
\crefname{proposition}{Proposition}{Proposition}
\crefname{lemma}{Lemma}{Lemma}
\crefname{definition}{Definition}{Definition}
\newglossaryentry{lip}
{
    name=lipschitz,
    description={Lipschtiz constant}
}
\newglossaryentry{Lip}
{
    name=lipschitz,
    description={Lipschtiz constant}
}
\newacronym{nn}{NN}{neural networks}
\newacronym{sgd}{SGD}{stochastic gradient descent}
\newacronym{svd}{SVD}{Singular Value Decomposition}
\newacronym{sota}{SOTA}{state-of-the-art}
\newacronym{gan}{GAN}{Generative Adversarial Networks}
\newacronym{srn}{SRN}{Stable Rank Normalization}
\newacronym{srngan}{SRN-GAN}{Stable Rank Normalization GAN}
\newacronym{sngan}{SN-GAN}{Spectral Normalization GAN}
\newacronym{sn}{SN}{Spectral Normalization}
\DeclareMathOperator*{\argmin}{arg\,min} %
\newcommand{\bfx}{\mathbf{x}}
\newcommand{\bfy}{\mathbf{y}}
\newcommand{\real}{\mathbb{R}}
\newcommand{\bfa}{\mathbf{a}}
\newcommand{\bfb}{\mathbf{b}}
\newcommand{\bfz}{\mathbf{z}}
\newcommand{\bfh}{\mathbf{h}}
\newcommand{\bfu}{\mathbf{u}}
\newcommand{\bfv}{\mathbf{v}}
\newcommand{\eye}{\mathbb{I}}
\newcommand{\ip}[2]{\left\langle{#1},{#2}\right\rangle}
\newcommand{\wmat}{\mathbf{W}}
\newcommand{\wmathw}{\widehat{\mathbf{W}}}
\newcommand{\amat}{\mathbf{A}}
\newcommand{\bmat}{\mathbf{B}}
\newcommand{\norm}[1]{\left\lVert#1\right\rVert}
\newcommand{\srank}[1]{\mathrm{srank}(#1)}
\newcommand{\forb}{\mathrm{F}}
\newcommand{\data}{\mathcal{D}}
\newcommand{\eg}{{\em e.g.~}}
\newtheorem{thm}{Theorem}
\newtheorem{lem}[thm]{Lemma}
\newtheorem{claim}[thm]{Claim}  %
\newcommand{\tildeO}[1]{\widetilde{\cO}\br{{#1}}}
\renewcommand{\vec}[1]{{\mathbf{#1}}}
\newcommand{\bc}[1]{\left\{{#1}\right\}}
\newcommand{\br}[1]{\left({#1}\right)}
\newcommand{\bs}[1]{\left[{#1}\right]}
\title{Stable Rank Normalization for Improved Generalization in Neural
  Networks and GANs}
\author{Amartya Sanyal \\
  Department of Computer Science\\
  University of Oxford,\\
The Alan Turing Institute \\
\texttt{amartya.sanyal@cs.ox.ac.uk} \\
\And
Philip H. Torr \\
Department of Engineering Science\\
University of Oxford \\
\texttt{philip.torr@eng.ox.ac.uk} \\
\And
Puneet K. Dokania \\
Department of Engineering Science\\
University of Oxford \\
\texttt{puneet@robots.ox.ac.uk}
}
\newcommand{\abs}[1]{\left| {#1} \right|}
\begin{document}
\maketitle

\begin{abstract}

Exciting new work on generalization bounds for \gls{nn} given
by~\citet{bartlett2017spectrally,neyshabur2018a} closely depend on two parameter-dependant quantities
\begin{inparaenum}[a)]
\item the \Gls{lip} constant upper bound and 
\item the {\em stable rank} (a softer version of rank).
\end{inparaenum} %
Even though these bounds typically have minimal practical utility, they facilitate questions on whether controlling such quantities together could improve the generalization behaviour of \gls{nn}s in practice. To this end, we propose {\em stable rank normalization} (SRN), a novel, provably optimal, and computationally efficient weight-normalization scheme which minimizes the {\em stable rank} of a linear operator. 
Surprisingly we find that SRN, despite being {\em non-convex}, can be shown to have a unique optimal solution.
We provide extensive analyses across a wide variety of \gls{nn}s (DenseNet, WideResNet, ResNet, Alexnet, VGG), where applying SRN to their linear layers leads to improved classification accuracy, while simultaneously showing  improvements in generalization, evaluated empirically using shattering experiments~\citep{Zhang2016}; and three measures of sample complexity by~\citet{bartlett2017spectrally}, \citet{neyshabur2018a}, \& \citet{wei2019}. %
Additionally, we show that, when applied to the discriminator of GANs, it improves Inception, FID, and Neural divergence scores, while learning mappings with a low empirical \Gls{lip} constant.
\end{abstract}

\section{Introduction}
\label{sec:intro}

Deep neural networks have shown astonishing ability in tackling a wide
variety of machine learning problems including a great ability to
generalize under extreme over-parameterization. Within this work we
leverage very recent, and important, theoretical results on the
generalization bounds of deep networks to yield a practical low cost
method to normalize the weights within a network using a scheme -
which we call \gls{srn}. The motivation for \gls{srn} comes from the
generalization bound for \glspl{nn} given by~\citet{neyshabur2018a}
and \citet{bartlett2017spectrally}, \( \small \cO{ \sqrt{\prod_{i}^d\norm{\wmat_i}_2^2\sum_{i=1}^d  \srank{\wmat_i}}} \)
\footnote{$d$ and $\norm{\wmat_i}_2$ represents the number of layers and the spectral norm of the $i$-th linear layer $\wmat_i$, respectively.}
, that depend on two parameter-dependent quantities: 
\begin{inparaenum}[a)]
\item the scale-dependent \Gls{lip} constant upper-bound $\prod_{i}^d\norm{\wmat_i}_2$ (product of spectral norms) and 
\item the sum of scale-independent {\em stable ranks} ($\srank{\wmat}$).
\end{inparaenum} 
Stable rank is a softer version of the rank operator and is defined as
the squared ratio of the Frobenius norm to the spectral norm.
Although these two terms appear frequently in these bounds, the empirical impact of simultaneously
controlling them on the generalization behaviour of
\gls{nn}s has not been explored yet possibly because of the
difficulties associated with optimizing stable rank. 
This is precisely the goal of this work and based on extensive experiments across a wide variety of \gls{nn} architectures, 
we show that, indeed, controlling them simultaneously improves the generalization behaviour, while improving the classification performance of \gls{nn}s. We observe improved training of
\gls{gan}~\cite{goodfellow2014generative} as well. 

To this end, we propose \acrfull{srn} which allows us to
simultaneously control the \Gls{lip} constant and the stable rank of a
linear operator. Note that the widely used
\gls{sn}~\citep{miyato2018spectral} allows explicit control over the
\Gls{lip} constant, however, as will be discussed in the paper, it
does not have any impact on the stable rank. We would like to
emphasize that, as opposed to \gls{sn},~the \gls{srn} solution is optimal and unique even in situations when it is non-convex. It is one of those rare cases where an optimal solution to a provably non-convex problem could be obtained. Computationally, our proposed \gls{srn} for \gls{nn}s is no more complicated than \gls{sn}, just requiring computation of the largest singular value which can be efficiently obtained using the power iteration method~\citep{Mises1929}.
\paragraph{Experiments} Although \gls{srn} is in principle applicable to any problem involving a sequence of affine transformations, considering recent interests, we show its effectiveness when applied to the linear layers of deep neural networks. We perform extensive experiments on a wide variety of \gls{nn} architectures (DenseNet, WideResNet, ResNet, Alexnet, VGG) for the analyses and show that, \gls{srn}, while providing the best classification accuracy (compared against standard, or vanilla, training and \gls{sn}), consistently shows improvement on the generalization behaviour. 
We also experiment with \gls{gan}s and show that, \gls{srn} prefers learning discriminators with low empirical \Gls{lip} while providing improved Inception, FID and Neural Divergence scores~\citep{gulrajani2018towards}. 

We would like to note that although \gls{sn} is being widely used for the training of \gls{gan}s, its effect on the generalization behaviour over a wide variety of \gls{nn}s has not yet been explored. To the best of our knowledge, we are the first to do so.
\paragraph{Contributions}
\begin{compactitem}
	\item We propose \gls{srn}--- a novel normalization scheme for simultaneously controlling the \Gls{lip} constant and the stable rank of a linear operator.
	\item Optimal and unique solution to the provably non-convex stable rank normalization problem.
	\item Efficient and  easy to implement \gls{srn} algorithm for \gls{nn}s.
\end{compactitem}

\section{Background and Intuitions}
\label{sec:background}
\vspace{-2pt}
\paragraph{Neural Networks} Consider $f_{\theta}: \real^m \rightarrow \real^k$ to be a feed-forward multilayer \gls{nn} parameterized by $\theta \in \real^n$, each layer of which consists of a linear followed by a non-linear\footnote{\eg ReLU, tanh, sigmoid, and maxout.} mapping. Let $\bfa_{l-1} \in \real^{n_{l-1}}$ be the input (or pre-activations) to the $l$-th layer, then the output (or activations) of this layer is represented as $\bfa_l = \phi_l(\bfz_l)$, where $\bfz_l = \wmat_l \bfa_{l-1} + \bfb_l$ is the output of the linear (affine) layer parameterized by the weights $\wmat_l \in \real^{n_{l-1} \times n_l}$ and biases $\bfb_l \in \real^{n_l}$, and $\phi_l(.)$ is the element-wise non-linear function applied to $\bfz_l$. For classification tasks, given a dataset with input-output pairs denoted as $(\bfx \in \real^m, \bfy \in \{0,1\}^k; \sum_j y_j = 1)$
\footnote{$y_j$ is the $j$-th element of vector $\bfy$. Only one class is assigned as the ground-truth label to each $\bfx$.},
the parameter vector  $\theta$ is learned using back-propagation to optimize the classification loss ({\em e.g.}, cross-entropy). 
\paragraph{\gls{svd}} Given $\wmat \in \real^{s \times r}$ with rank $k \leq \min (s,r)$, we denote $\{\sigma_i\}_{i=1}^k$, $\{\bfu_i\}_{i=1}^k$, and $\{\bfv_i\}_{i=1}^k$ as its singular values, left singular vectors, and right singular vectors, respectively. Throughout this work, a set of singular values is assumed to be sorted $\sigma_1 \geq \cdots \geq \sigma_k$. $\sigma_i(\wmat)$ denotes the $i$-th singular value of the matrix $\wmat$. Using singular values, the matrix 2-norm $\norm{\wmat}_2$ and the Frobenius norm $\norm{\wmat}_\forb$ can be computed as $\sigma_1$ and $\sqrt{\sum_i \sigma_i^2}$, respectively.
\paragraph{Stable Rank} Below we provide the formal definition and
some properties of stable rank. %
\begin{definition}
\label{def:stableRank}
The Stable Rank~\citep{Rudelson2007} of an arbitrary matrix $\wmat$ is
defined as \( \srank{\wmat} =
\frac{\norm{\wmat}_\forb^2}{\norm{\wmat}_2^2} = \frac{\sum_{i=1}^k
  \sigma_i^2(\wmat)}{\sigma_1^2(\wmat)} \), where $k$ is the rank of
the matrix. Stable rank is
\begin{compactitem}
\item  a soft version of the rank operator and, unlike rank, is less sensitive to small perturbations. 
\item differentiable as both Frobenius and Spectral norms are almost always differentiable.
\item upper-bounded by the rank: \( \srank{\wmat} = \frac{\sum_{i=1}^k \sigma_i^2(\wmat)}{\sigma_1^2(\wmat)} \le \frac{\sum_{i=1}^k \sigma_1^2(\wmat)}{\sigma_1^2(\wmat)}  = k\).
\item invariant to scaling, implying, \( \srank{\wmat} = \srank{\frac{\wmat}{\eta}} \), for any $\eta \in \real\setminus\bc{0}$.
\end{compactitem}
\end{definition}

\paragraph{\Gls{lip} Constant} Here we describe the global and the
local \Gls{lip} constants. Briefly, the \Gls{lip} constant is a
quantification of the sensitivity of the output with respect to the
change in the input. A function $f: \real^m \mapsto \real^k$ is {\em
  globally L-\Gls{lip} continuous} if \(\exists L \in \real_+: \norm{f(\bfx_i) - f(\bfx_j)}_q \leq L \norm{\bfx_i - \bfx_j}_p, \forall (\bfx_i, \bfx_j) \in \real^m \), where $\norm{\cdot}_p$ and $\norm{\cdot}_q$ represents the norms in the input and the output metric spaces, respectively. The global \Gls{lip} constant $L_g$ is:
\begin{align}
\label{eq:lipGlobal}
L_g = \max_{\substack{\bfx_i, \bfx_j \in \real^m\\\bfx_i\neq\bfx_j}} \frac{\norm{f(\bfx_i) - f(\bfx_j)}_q}{\norm{\bfx_i - \bfx_j}_p}.
\end{align}
The above definition of the \Gls{lip} constant depends on all  pairs
of inputs in the domain $\real^m\times\real^m$,~(thus,
global). However, one can define the local \Gls{lip} constant based on
the sensitivity of $f$ in the vicinity of a given point
$\bfx$. Precisely, at $\bfx$, for an arbitrarily small $\delta>0$, the
local \Gls{lip} constant is computed on the open ball of radius
$\delta$ centered at $\bfx$. Let $\bfh \in \real^m$, $\norm{\bfh}_p <
\delta$, then, similar to $L_g$, the {\em local \Gls{lip} constant} of
$f$ at $\bfx$, $L_l(\bfx)$, is greater than or equal to $\sup_{\bfh \neq 0, \norm{\vec{h}}_p < \delta}~\frac{\norm{f(\bfx + \bfh) - f(\bfx)}_q}{\norm{\bfh}_p}$. 
Assuming $f$ to be Fr\'echet differentiable, as $\bfh \rightarrow 0$,
using $f(\bfx + \bfh) - f(\bfx) \approx J_f(\bfx) \bfh$, $L_l(\bfx)$
is the matrix (operator) norm of the Jacobian $\br{J_f(\bfx) = \frac{\partial f\br{\vec{z}}}{\partial \vec{z}}\vert_{\vec{x}} \in \real^{k \times m}}$ as follows:\footnote{Here, (a) is
 by definition of local lipschitzness and (b) is due to the property
 of  norms that for any non-negative scalar~$c$, 
 $\norm{c\vec{x}} = c\norm{\vec{x}}$.}

\begin{align}
\label{eq:lipLocal}
 L_l(\bfx) \stackrel{(a)}{=} \lim_{\delta\rightarrow 0}\sup_{\substack{\bfh \neq 0 \\ \norm{\bfh}_p < \delta}} \frac{\norm{J_f(\bfx) \bfh}_q}{\norm{\bfh}_p} \stackrel{(b)}{=}  \sup_{ \substack{\bfh \neq 0 \\ \bfh \in \real^m}} \frac{\norm{J_f(\bfx) \bfh}_q}{\norm{\bfh}_p} = \norm{J_f(\bfx)}_{p,q}.
\end{align}

A function is said to be \emph{locally \Gls{lip}} with
\emph{local Lipschitz constant} $L_l$ if for all $\bfx \in \real^m$
he function is  \emph{$L_l$ locally-\Gls{lip}}
at $\vec{x}$. %
 Thus, $L_l  =\sup_{\vec{x}\in\real^m}{L_l\br{\vec{x}}}$.
\noindent
Notice that the \Gls{lip} constant (global or local), greatly depends
on the chosen norms. When $p = q = 2$, the upperbound on the local
\Gls{lip} constant at $\bfx$ boils down to the 2-matrix norm (maximum
singular value) of the Jacobian $J_f(\bfx)$~(see last equality of~\cref{eq:lipLocal}). With these preliminary definitions, in~\cref{sec:whyStable}, we discuss
more optimistic~(or empirical) estimates of $L_l$ and $L_g$, its link
with generalization and then in~\cref{sec:experiments},we show
empirically the effect of SRN on them and on generalization.
\paragraph{The local \Gls{lip} upper-bound for Neural Networks}
As mentioned earlier~\cref{eq:lipLocal}, $L_l(\bfx) = \norm{J_f(\bfx)}_{p,q}$, where,
in the case of \gls{nn}s~(proof along with why it is loose in~\cref{sec:LipNNUB})
\begin{align}
    \label{eq:lipboundNN1}
    L_l(\bfx) = \norm{J_f(\bfx)}_{p,q} \leq \norm{\wmat_l}_{p,q}
  \cdots \norm{\wmat_1}_{p,q}\quad\text{and}\quad  L_l = L_l(\bfx)
\end{align}

\section{Why Stable Rank Normalization?}
\label{sec:whyStable}
\paragraph{\Gls{lip} alone is not sufficient} Although learning low \Gls{lip} functions has been shown to provide better generalization~\citep{anthony2009neural,bartlett2017spectrally,neyshabur2018a,neyshabur2015norm,Yoshida2017,Gouk2018}, enable stable training of GANs~\citep{Arjovsky2017,Gulrajani2017,miyato2018spectral} and help provide robustness against adversarial attacks~\citep{cisse17a}, controlling \Gls{lip} upper bound alone is not sufficient to provide assurance on the generalization error. One of the reasons is that it is scale-dependent, implying, for example, even though scaling an entire ReLU network would not alter the classification behaviour, it can massively increase the \Gls{lip} constant and thus the theoretical generalization bounds.
This suggests that either the bound is of no practical utility, or at least one should regulate both---the \Gls{lip} constant, and the stable rank (scale-independent)---in a hope to see improved generalization in practice.

\paragraph{Stable rank controls the noise-sensitivity}
As shown by~\citet{arora18b}, one of the critical properties of generalizable \gls{nn}s 
is low noise sensitivity--- the ability of a network to
preferentially carry over the true signal in the data. For a given  
noise distribution $\cN$, it can be quantified as 
\[\Phi_{f_{\theta},\cN} = \max_{\bfx \in \data}\Phi_{f_{\theta},\cN}\br{\vec{x}}, \quad \textit{where} \quad \Phi_{f_{\theta},\cN}\br{\vec{x}} :=
  \bE_{\eta\sim\cN} \bs{\dfrac{\norm{f_\theta\br{\vec{x} +
        \eta\norm{\vec{x}}} -
      f_\theta\br{\vec{x}}}^2}{\norm{f_\theta\br{\vec{x}}}^2}}. \] 
For a linear mapping with parameters $\vec{W}$ and the noise
distribution being normal-$\cN\br{0,\vec{I}}$, it can be shown
that $\Phi_{f_\vec{W},\cN} \ge \srank{\vec{W}}$~(c.f. Proposition 3.1 in~\citet{arora18b}). 
Thus, decreasing the stable rank directly decreases the lower bound of the
noise sensitivity. In~\cref{fig:noise_sensit}, we show $\Phi_{f_\theta, \cN}$
of a ResNet110 trained on CIFAR100.
Note that although the \Gls{lip} upper bound of \gls{srn} and \gls{sn} are the same,
\gls{srn} (algorithmic details in~\cref{sec:stable_rank_alg}) is much less sensitive to noise as the constraints imposed
enforce the stable rank to decrease to $30\%$ of its original value, which in effect reduces the noise sensitivity.

\begin{wrapfigure}{r}{0.3\linewidth}
  \centering
  \def\svgwidth{0.99\columnwidth}
  \resizebox{0.99\linewidth}{!}{
  \input{./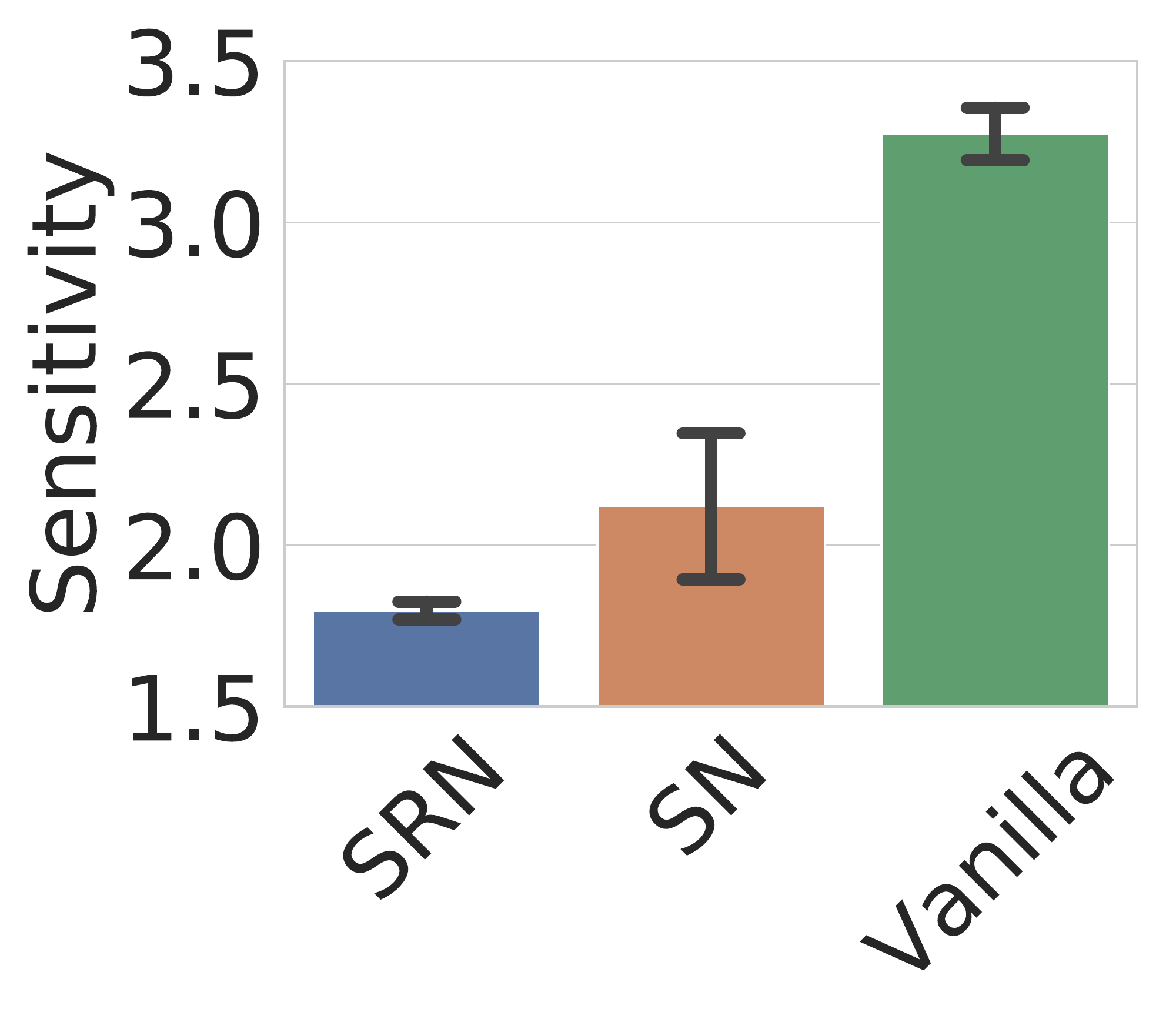_tex}
  } \vspace{-2ex}
  \caption{Noise Sensitivity (lower the better). Test accuracy: SRN ($73.1\%$), SN ($71.5\%$), and Vanilla ($72.4\%$).
  }\vspace{-2ex} \label{fig:noise_sensit}
\end{wrapfigure}
\paragraph{Stable rank impacts empirical \Gls{lip} constant}
It is apparent that the \Gls{lip} constant upper bound (\( \prod_{i}^d\norm{\wmat_i}_2\)), along with being scale-dependent, also is {\em data-independant} and hence, provides a pessimistic estimate of the behaviour of a  model on a particular task or dataset. Considering this, a relatively optimistic estimate of the model's behaviour would be an {\em empirical} estimate of the \Gls{lip} constant ($L_e$) on a task-specific dataset
\footnote{It can be the train-/test- data, the generated data ({\em
  e.g.}, in \gls{gan}s), or some interpolated
data points.}.
Note that local $L_e$ is just the norm of the Jacobian at a given point
\footnote{For completeness, we provide the relationship between the global and the local $L_e$ in~\cref{prop:empiricalLocalLip}.}. 
Empirically,~\citet{novak2018sensitivity} provided results
showing how local $L_e$ (in the vicinity of train data) is correlated
with the generalization error of \gls{nn}s. 
This observation is further supported by the work of~\citet{wei2019,nagarajan2018deterministic,arora18b} 
whereby the variants of $L_e$ are used to derive generalization bounds. 
Thus, a tool that favours low $L_e$ is likely to provide better generalization behaviour in practice.
To this end, we first consider a simple two layer linear-\gls{nn} example in~\cref{sec:rankEffectLe} 
and show that low rank mappings do favour low $L_e$. 
Since direct minimization of rank for \gls{nn}s is non-trivial,
the expectation is that learning low stable rank (softer version of rank)
might induce similar behaviour. 
We experimentally validate this hypothesis by showing that, as we decrease the stable rank, the empirical \Gls{lip} decreases. 
This shows \gls{srn} indeed prefers mappings with a low empirical \Gls{lip} constant.

\vspace{-4pt}
\section{Stable Rank Normalization}
\label{sec:stable_rank_alg}

Here we provide a theoretically sound procedure to do \gls{srn}. A big challenge in stable rank normalization comes from the fact that it is scale-invariant~(refer~\cref{def:stableRank}), thus, any normalization scheme that modifies $\wmat = \sum_i \sigma_i \bfu_i \bfv_i^{\top}$ to $\widehat{\wmat} = \sum_i \frac{\sigma_i}{\eta}  \bfu_i \bfv_i^{\top}$ will have no effect on the stable rank, making \gls{srn} non-trivial. Examples of such schemes are \gls{sn}~\citep{miyato2018spectral} where $\eta = \sigma_1$, and Frobenius normalization where $\eta = \norm{\wmat}_\forb$. 
As will be shown, our approach to stable rank normalization is optimal and efficient. Note, the widely used \gls{sn}~\citep{miyato2018spectral} is not optimal (proof in~\cref{sec:spectralNormOptimal}).

\paragraph{The \gls{srn} Problem Statement} Given a matrix $\wmat \in \real^{m \times n}$ with rank $p$ and spectral partitioning index $k$ ($0 \leq k < p$), we formulate the SRN problem as:
\begin{align}
\label{eq:srankProblem}
& \argmin_{\widehat{\wmat}_k\in\real^{m\times n}} \norm{\wmat - \widehat{\wmat}_k}_{\forb}^2 \quad \textit{s.t.} \quad \underbrace{\srank{\widehat{\wmat}_k} = r}_{\textrm{stable rank constraint}}, \; \underbrace{\lambda_i = \sigma_i, \forall i \in \{1, \cdots, k\}}_{\textrm{spectrum preservation constraints}}. 
\end{align}

where, $1 \leq r < \srank{\wmat}$ is the desired stable rank, $\lambda_i$s and $\sigma_i$s are the singular values of $\widehat{\wmat}_k$ and $\wmat$, respectively. The partitioning index $k$ is used for the {\em singular value (or the spectrum) preservation constraint}. It gives us the flexibility to obtain $\widehat{\wmat}_k$ such that its top $k$ singular values are exactly the same as that of the original matrix. Note, the problem statement is more general in the sense that putting $k=0$ removes the spectrum preservation constraint. 

\paragraph{The Solution to \gls{srn}}
The optimal unique solution to the above problem is provided
in~\cref{thm:srankOptimal} and proved in~\cref{sec:srankProof}. 
Note, at $k=0$, the problem~\cref{eq:srankProblem} is non-convex, otherwise convex.
\begin{restatable}{thm}{optimalthm}
\label{thm:srankOptimal}
Given a real matrix $\wmat\in\real^{m\times n}$ with rank $p$, a target spectrum (or singular value) preservation index $k$ $(0\le k < p)$, and a target stable rank of $r$ $(1 \leq r < \srank{\wmat})$, the optimal solution $\wmathw_k$ to problem~\cref{eq:srankProblem} is $\wmathw_k = \gamma_1\vec{S}_{1} + \gamma_{2} \vec{S}_{2}$, where  $\vec{S}_1 = \sum_{i=1}^{\mathrm{max}\br{1,k}} \sigma_i \bfu_i \bfv_i^\top$ and $\vec{S}_{2} = \wmat - \vec{S}_{1}$. $\{\sigma_i\}_{i=1}^k$, $\{\bfu_i\}_{i=1}^k$ and $\{\bfv_i\}_{i=1}^k$ are the top $k$ singular values and vectors of $\wmat$, and, depending on $k$, $\gamma_1$ and $\gamma_2$ are defined below. For simplicity, we first define $\gamma = \frac{\sqrt{r \sigma_1^2 - \norm{\vec{S}_1}_\forb^2}}{\norm{\vec{S}_2}_\forb}$, then
\begin{compactitem}
\item[a)]If $k=0$ (no spectrum preservation), the problem becomes
  non-convex, the optimal solution to which is obtained for $\gamma_2
  = \dfrac{\gamma + r - 1}{r}$ and $\gamma_1 =
  \dfrac{\gamma_2}{\gamma}$, when $r>1$. If $r=1$, then $\gamma_2 = 0$
  and $\gamma_1 = 1$. Since, in this case, $\norm{\vec{S}_1}_{\forb}^2 = \sigma_1^2$, \( \gamma = \frac{\sqrt{r-1} \sigma_1}{\norm{\vec{S}_2}_\forb}\).
\item[b)] If $k\ge 1$, the problem is convex. If $r\ge
  \frac{\norm{\vec{S}_1}_\forb^2}{\sigma_1^2}$ the optimal solution is
  obtained for $\gamma_1 = 1$,  and $\gamma_2=\gamma$ and if not, the
  problem is not feasible.
\item [c)] Also, $\norm{\wmathw_k - \wmat}_\forb$ monotonically increases with $k$ for $k \geq 1$.
\end{compactitem}
\end{restatable}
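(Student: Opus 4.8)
The plan is to reduce part (c) to the monotonicity of a single one-variable function, using the closed form of the optimizer established in part (b). Throughout I fix $k\ge 1$, where the problem is convex and the optimal solution is $\gamma_1=1$, $\gamma_2=\gamma$. First I would write $\wmathw_k-\wmat=(\gamma_1-1)\vec{S}_1+(\gamma_2-1)\vec{S}_2=(\gamma-1)\vec{S}_2$, the $\vec{S}_1$ term vanishing precisely because $\gamma_1=1$. Since $\vec{S}_1=\sum_{i=1}^k\sigma_i\bfu_i\bfv_i^\top$ and $\vec{S}_2=\sum_{i=k+1}^p\sigma_i\bfu_i\bfv_i^\top$ are built from disjoint singular triples, they are orthogonal in the Frobenius inner product (the cross terms carry factors $\bfu_i^\top\bfu_j=0$ for $i\ne j$), so $\norm{\wmathw_k-\wmat}_\forb^2=(\gamma-1)^2\norm{\vec{S}_2}_\forb^2$.

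Next I would make the $k$-dependence explicit. Using $\gamma\norm{\vec{S}_2}_\forb=\sqrt{r\sigma_1^2-\norm{\vec{S}_1}_\forb^2}$ from the definition of $\gamma$, the squared distance becomes $\big(\sqrt{r\sigma_1^2-\norm{\vec{S}_1}_\forb^2}-\norm{\vec{S}_2}_\forb\big)^2$. Introducing $t=\norm{\vec{S}_1}_\forb^2=\sum_{i=1}^k\sigma_i^2$, $C_1=r\sigma_1^2$, and $C_2=\norm{\wmat}_\forb^2$, and noting $\norm{\vec{S}_2}_\forb^2=C_2-t$, the distance is $g(t):=\big(\sqrt{C_1-t}-\sqrt{C_2-t}\big)^2$. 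The key structural fact I would record here is $C_1<C_2$: since $r<\srank{\wmat}=\norm{\wmat}_\forb^2/\sigma_1^2$, multiplying by $\sigma_1^2$ gives $r\sigma_1^2<\norm{\wmat}_\forb^2$. Feasibility in part (b) guarantees $t\le C_1$, so on the relevant range $C_1-t\ge 0$ and $C_2-t\ge C_2-C_1>0$.

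Finally I would prove that $g$ is increasing in $t$, which finishes the argument because $t=\sum_{i=1}^k\sigma_i^2$ strictly increases as $k$ grows (each increment is $\sigma_{k+1}^2>0$ for $k<p$), so $\norm{\wmathw_k-\wmat}_\forb=\sqrt{g(t)}$ inherits the strict increase. Because $C_1<C_2$ we have $\sqrt{C_1-t}\le\sqrt{C_2-t}$, hence $g(t)=h(t)^2$ with $h(t):=\sqrt{C_2-t}-\sqrt{C_1-t}\ge 0$; differentiating, $h'(t)=\tfrac12\big((C_1-t)^{-1/2}-(C_2-t)^{-1/2}\big)>0$ since $C_1-t<C_2-t$, so both $h$ and therefore $g=h^2$ increase. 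Equivalently one can expand $g(t)=C_1+C_2-2t-2\sqrt{(C_1-t)(C_2-t)}$ and invoke AM--GM, $C_1+C_2-2t>2\sqrt{(C_1-t)(C_2-t)}$, to see $g'>0$.

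The main obstacle is not the calculus but reaching the scalar reduction cleanly: one must exploit $\gamma_1=1$ (so that $\vec{S}_1$ is preserved and drops out of the difference) together with the Frobenius-orthogonality of $\vec{S}_1$ and $\vec{S}_2$, and then correctly identify the strict inequality $r\sigma_1^2<\norm{\wmat}_\forb^2$ coming from $r<\srank{\wmat}$ as the single fact that forces $g$ to be monotone rather than constant; without the strict gap $C_1<C_2$ the distance would be flat in $k$.
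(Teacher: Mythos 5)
There is a genuine gap: your proposal proves only part (c) of the theorem, and it does so by assuming part (b) as a premise (``the optimal solution is $\gamma_1=1$, $\gamma_2=\gamma$''). But parts (a) and (b) are the core of the statement, and they are exactly what the paper spends most of its proof establishing: for $k=0$ it substitutes the non-convex equality constraint $\lambda_1^2 = \ip{\Lambda_2}{\Lambda_2}/(r-1)$ into the objective, finds all critical points, shows they are scalar multiples of $\Sigma_2$, verifies the Hessian is positive semi-definite at the candidate, handles the $r=1$ sub-case separately, and proves uniqueness via a Mirsky-type alignment lemma (\cref{lem:opt-frobenius}) which also justifies reducing the matrix problem to a problem over singular values in the first place; for $k\ge 1$ it relaxes the equality to $\srank{\widehat{\wmat}_k}\le r$, runs a KKT argument, shows the dual variable $\mu$ of the stable-rank constraint must be strictly positive (so the constraint is active) and the positivity multiplier $\Gamma$ must vanish, which yields $\gamma_2=\gamma$ and the feasibility condition $r\ge \norm{\vec{S}_1}_\forb^2/\sigma_1^2$. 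None of these steps --- in particular the singular-vector alignment reduction, without which one cannot even assert that the optimizer has the form $\gamma_1\vec{S}_1+\gamma_2\vec{S}_2$ --- appear in your proposal, so as a proof of \cref{thm:srankOptimal} it is incomplete: it is a proof of part (c) conditional on part (b).

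For part (c) itself your argument is correct and is, modulo a change of variable, the paper's own (\cref{lem:monotonicityK}). The paper writes the distance as $\norm{\wmat-\widehat{\wmat}_k}_\forb=(1-\gamma)\norm{\vec{S}_2^k}_\forb = x-\sqrt{a+x^2}$ with $x=\norm{\vec{S}_2^k}_\forb$ and $a=r\sigma_1^2-\norm{\wmat}_\forb^2\le 0$, observes that this is decreasing in $x$, and notes that $x$ decreases as $k$ grows; you work instead with $t=\norm{\vec{S}_1}_\forb^2=\norm{\wmat}_\forb^2-x^2$ and show that $g(t)=\br{\sqrt{C_1-t}-\sqrt{C_2-t}}^2$ increases in $t$. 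The two reductions are identical under $x^2=C_2-t$, and both hinge on the same structural facts you isolate: Frobenius orthogonality of $\vec{S}_1$ and $\vec{S}_2$, feasibility $t\le C_1$, and $C_1=r\sigma_1^2<\norm{\wmat}_\forb^2=C_2$ coming from $r<\srank{\wmat}$ (the paper only invokes $a\le 0$, so your remark that the strict gap gives strict monotonicity is a marginally sharper bookkeeping of the same fact). The verdict, then, is that part (c) is fine and follows the paper's route, but parts (a) and (b) are missing entirely and cannot be waved through, since everything you do depends on them.
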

Intuitively,~\cref{thm:srankOptimal} partitions the given matrix into two parts, depending on $k$, and then scales them differently in order to obtain the optimal solution. The value of the partitioning index $k$ is a design choice. 
If there is no particular preference to $k$, then $k=0$ provides the
most optimal solution. We provide a simple example to better understand this.
Given $\wmat = \eye_3$ (rank =
$\srank{\vec{W}}$ = 3), the objective is to project it to a new matrix
with stable rank of $2$. Solutions to this problem are:
\begin{align}\small
\label{eq:srnExample}
\widehat{\wmat}_1=
  \begin{bmatrix}
    1 & 0 & 0\\
    0 & 1 & 0\\
    0 & 0 & 0
  \end{bmatrix}
  , \;
  \widehat{\wmat}_2=
  \begin{bmatrix}
    1 & 0 & 0\\
    0 & \frac{1}{\sqrt{2}} & 0\\
    0 & 0 & \frac{1}{\sqrt{2}}
  \end{bmatrix}
  , \;
  \widehat{\wmat}_3=
  \begin{bmatrix}
    \frac{\sqrt{2}+1}{2} & 0 & 0\\
    0 & \frac{\sqrt{2}+1}{2\sqrt{2}} & 0\\
    0 & 0 & \frac{\sqrt{2}+1}{2\sqrt{2}}
  \end{bmatrix}
\end{align}
Here, $\widehat{\wmat}_1$ is obtained using the standard rank minimization (Eckart-Young-Mirsky~\citep{Eckart1936}) while $\widehat{\wmat}_2$ and $\widehat{\wmat}_2$ are the solutions of~\cref{thm:srankOptimal} with $k=1$ and $k=0$, respectively. %
It is easy to verify that the stable rank of all the above solutions is $2$. However, the Frobenius distance (lower the better) of these solutions from the original matrix follows the order 
\(
\norm{\wmat - \widehat{\wmat}_1}_{\forb} > \norm{\wmat - \widehat{\wmat}_2}_{\forb} > \norm{\wmat - \widehat{\wmat}_3}_{\forb}
\).
As evident from the example, the solution to \gls{srn}, instead of completely removing a particular singular value, scales them (depending on $k$) such that the new matrix has the desired stable rank. Note that for $\widehat{\wmat}_1$ (true for any $k\geq1$), the spectral norm of the original and the normalized matrices are the same, implying, $\gamma_1 = 1$. However, for $k=0$, the spectral norm of the optimal solution is greater than that of the original matrix. It is easy to verify from~\cref{thm:srankOptimal} that as $k$ increases, $\gamma_2$ decreases. Thus, the amount of scaling required for the second partition $\vec{S}_2$ is more aggressive. In all situations, the following inequality holds: $\gamma_2 \leq 1 \leq \gamma_1$.

\begin{minipage}[t]{0.45\linewidth}
  \begin{algorithm}[H]
    \centering
    \caption{Stable Rank Normalization}
    \label{alg:stablerankNorm}
    \begin{algorithmic}[1]\footnotesize
      \Require $\wmat \in \real^{m \times n}$, $r$, $k \geq 1$
      \State $\vec{S}_1 \gets \mathbf{0}$, $\beta  \gets \norm{\wmat}_\forb^2$, $\eta \gets 0$, $l \gets 0$
      \For {$i \in \{1, \cdots, k\}$}
      \State $\{\bfu_i, \bfv_i, \sigma_i\} \gets SVD(\wmat, i)$\\
      \Comment{Power method to get $i$-th singular value}
      \If{$r \geq \br{\sigma_i^2 + \eta}/\sigma_1^2$}
      \label{eq:stableIf}
      \State $\vec{S}_1 \gets \vec{S}_1 + \sigma_i \bfu_i \bfv_i^{\top}$
      \label{eq:greedyStep}
      \State $\eta \gets  \eta + \sigma_i^2, \beta \gets \beta - \sigma_i^2$
      \State $l \gets  l+1$
      \Else
      \State break
      \EndIf
      \EndFor
      \State $\eta \gets r\sigma_1^2 - \eta$ \\
      \Return $\widehat{\wmat}_l \gets \vec{S}_1 +\sqrt{\frac{ \eta}{\beta}} (\wmat - \vec{S}_1)$, $l$
      \label{eq:returnStep}
    \end{algorithmic}
  \end{algorithm}
\end{minipage}\;\;\;\;\begin{minipage}[t]{0.45\linewidth}
  \begin{algorithm}[H]
    \centering
    \caption{\label{algo:spectralStable} SRN for a Linear Layer in NN}
    \label{alg:final}
    \begin{algorithmic}[1]\footnotesize
      \Require $\wmat \in \real^{m \times n}$, $r$, learning rate $\alpha$, mini-batch dataset $\mathcal{D}$
      \State Initialize $\bfu \in \real^m$ with a random vector.%
      \State $\bfv \gets \frac{\wmat^\top \bfu}{\norm{\wmat^\top \bfu}}$, $\bfu \gets \frac{\wmat^\top \bfv}{\norm{\wmat^\top \bfv}}$\\
      \Comment{Perform power iteration}
      \State  $\sigma(\wmat)= \bfu^{\top} \wmat \bfv$
      \State  $\wmat_f = \wmat/ \sigma(\wmat)$
      \Comment {Spectral Normalization}
      \State $\widehat{\wmat} = \wmat_f - \bfu \bfv^{\top}$
      \If {$\norm{\widehat{\wmat}}_{\forb} \le \sqrt{r-1}$}
      \State \Return $\wmat_f$ 
      \EndIf
      \State $\wmat_f = \bfu \bfv^{\top} + \widehat{\wmat} \frac{ \sqrt{r- 1} }{ \norm{\widehat{\wmat}}_{\forb} }$
      \Comment {Stable Rank Normalization}
      \State \Return $\wmat \leftarrow \wmat - \alpha \nabla_{\wmat} L(\wmat_f, \mathcal{D})$
    \end{algorithmic}  
  \end{algorithm}
\end{minipage}

\paragraph{Algorithm for Stable Rank Normalization} We provide a general procedure in~\cref{alg:stablerankNorm} to solve the stable rank normalization problem for $k \geq 1$ (the solution for $k=0$ is straightforward from~\cref{thm:srankOptimal}).~\cref{claim:stableRankAlgo} provides the properties of the algorithm. The algorithm is constructed so that the prior knowledge of the rank of the matrix is not necessary. %
\begin{claim} 
\label{claim:stableRankAlgo}
Given a matrix $\wmat$, the desired stable rank $r$, and the partitioning index $k\ge 1$,~\cref{alg:stablerankNorm} requires computing the top $l$ $(l \leq k)$ singular values and vectors of $\wmat$. It returns $\widehat{\wmat}_l$ and the scalar $l$ such that $\srank{\widehat{\wmat}_l} = r$, and the top $l$ singular values of $\wmat$ and $\widehat{\wmat}_l$ are the same. If $l=k$, then the solution provided is the optimal solution to the problem~\eqref{eq:srankProblem} with all the constraints satisfied, otherwise, it returns the largest $l$ up to which the spectrum is preserved. %
\end{claim}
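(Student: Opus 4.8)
The plan is to read \cref{alg:stablerankNorm} as an adaptive, greedy instantiation of the convex ($k\ge1$) branch of \cref{thm:srankOptimal}, in which the partitioning index is chosen to be the \emph{largest} value for which the corresponding instance of \cref{eq:srankProblem} is feasible. First I would establish the loop invariants: immediately before the $i$-th acceptance, $\vec{S}_1=\sum_{j=1}^{i-1}\sigma_j\bfu_j\bfv_j^\top$, $\eta=\sum_{j=1}^{i-1}\sigma_j^2=\norm{\vec{S}_1}_\forb^2$, $\beta=\norm{\wmat}_\forb^2-\eta=\norm{\wmat-\vec{S}_1}_\forb^2$, and the counter $l$ records the number of accepted components. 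These follow by a one-line induction from the inner-loop updates, and they already give the first assertion, since the loop computes exactly the top $l$ singular triples via the deflated power method and never more than $k$ of them.

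Next I would identify the returned matrix with the \cref{thm:srankOptimal} solution at index $l$. After the reassignment $\eta\gets r\sigma_1^2-\eta$, the return value is $\widehat{\wmat}_l=\vec{S}_1+\sqrt{\eta/\beta}\,(\wmat-\vec{S}_1)=\vec{S}_1+\gamma\vec{S}_2$ with $\gamma=\frac{\sqrt{r\sigma_1^2-\norm{\vec{S}_1}_\forb^2}}{\norm{\vec{S}_2}_\forb}$, exactly the $\gamma_1=1,\gamma_2=\gamma$ form of \cref{thm:srankOptimal}(b). Because $\vec{S}_1$ and $\vec{S}_2=\wmat-\vec{S}_1$ are built from disjoint, mutually orthogonal singular blocks of $\wmat$, the singular values of $\widehat{\wmat}_l$ are $\sigma_1,\dots,\sigma_l$ together with $\gamma\sigma_{l+1},\dots,\gamma\sigma_p$. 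Since $r<\srank{\wmat}$ forces $\gamma<1$, every scaled tail value satisfies $\gamma\sigma_{j}<\sigma_l$ for $j>l$, so the operator norm remains $\sigma_1$ and the top $l$ singular values are preserved, which is the third assertion. Computing $\norm{\widehat{\wmat}_l}_\forb^2=\norm{\vec{S}_1}_\forb^2+\gamma^2\norm{\vec{S}_2}_\forb^2=r\sigma_1^2$ then yields $\srank{\widehat{\wmat}_l}=r\sigma_1^2/\sigma_1^2=r$, the second assertion.

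Finally I would settle the $l=k$ versus $l<k$ dichotomy. The acceptance test $r\ge(\sigma_i^2+\eta)/\sigma_1^2$ is equivalent to $\sum_{j=1}^{i}\sigma_j^2\le r\sigma_1^2$; since the partial sums increase monotonically in $i$, the accepted indices form a prefix and breaking at the first failure is correct. If the loop completes with $l=k$, then $\norm{\vec{S}_1}_\forb^2=\sum_{j=1}^k\sigma_j^2\le r\sigma_1^2$, which is precisely the feasibility hypothesis $r\ge\norm{\vec{S}_1}_\forb^2/\sigma_1^2$ of \cref{thm:srankOptimal}(b); hence the returned matrix is the optimal solution of \cref{eq:srankProblem} with all $k$ spectrum-preservation constraints met. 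If instead the loop breaks at some $i=l+1\le k$, then $\sum_{j=1}^{l+1}\sigma_j^2>r\sigma_1^2$, so the instance of \cref{eq:srankProblem} with preservation index $l+1$ fails that same feasibility condition and is infeasible; therefore $l$ is the largest index up to which the spectrum can be preserved while still attaining $\srank=r$.

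The main obstacle is the spectral bookkeeping in the second step: one must verify that scaling the tail block by $\gamma$ neither raises the operator norm above $\sigma_1$ nor promotes any scaled tail value above $\sigma_l$, so that both the top-$l$ preservation claim and the identity $\norm{\widehat{\wmat}_l}_2=\sigma_1$ hold; everything hinges on the strict inequality $\gamma<1$, which is exactly where the hypothesis $r<\srank{\wmat}$ enters. Linking the adaptively chosen $l$ back to \cref{thm:srankOptimal} is then routine, amounting to checking that the algorithm's acceptance/break test coincides with the theorem's feasibility condition, after which optimality and maximality follow immediately.
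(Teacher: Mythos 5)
Your proposal is correct. The paper itself never writes out a proof of this claim---it is asserted on the strength of \cref{alg:stablerankNorm}'s construction and \cref{thm:srankOptimal}(b)---and your argument is exactly the intended justification: the loop invariants identify the returned matrix with the $\gamma_1=1,\ \gamma_2=\gamma$ solution of the theorem at partition index $l$, the hypothesis $r<\srank{\wmat}$ gives $\gamma<1$ (so the spectral norm stays $\sigma_1$, the top $l$ values are preserved, and $\norm{\wmathw_l}_\forb^2=r\sigma_1^2$ yields $\srank{\wmathw_l}=r$), and the monotone partial sums $\sum_{j\le i}\sigma_j^2$ show the acceptance test coincides with the theorem's feasibility condition, settling both the $l=k$ optimality case and the maximality of $l$ when the loop breaks early. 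Your write-up in fact supplies details the paper leaves implicit, such as why the scaled tail values cannot overtake $\sigma_l$ and why preservation indices beyond $l$ are infeasible.
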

\paragraph{Combining Stable Rank and Spectral Normalization for \gls{nn}s}
Following the arguments provided in~\cref{sec:intro,sec:whyStable}, for better generalizability, we propose to 
normalize {\em both} the stable rank and the spectral norm of each linear layer of a \gls{nn} simultaneously.
To do so, we first perform approximate \gls{sn}~\citep{miyato2018spectral}, and then perform
optimal \gls{srn} (using \cref{alg:stablerankNorm}).
We use $k=1$ to ensure that the first singular value (which is now
normalized) is preserved.~\cref{algo:spectralStable} provides a
simplified procedure for the same for a given linear layer of a
\gls{nn}. Note, the computational cost of this algorithm is
  {\em exactly the same as that of~\gls{sn}}, which is to compute the top singular value using the power iteration method.

\section{Experiments}
\label{sec:experiments}
\paragraph{Dataset and Architectures} For classification, we perform experiments on ResNet-110~\citep{HZRS:2016}, WideResNet-28-10~\citep{Zagoruyko2016}, DenseNet-100~\citep{Huang2017},
VGG-19~\citep{simonyan2014very}, and
AlexNet~\citep{krizhevsky2009learning} using the
CIFAR100~\citep{krizhevsky2009learning}
dataset. We present further
experiments with CIFAR10 in~\cref{sec:gener-behav} in~\cref{fig:test-acc-epoch-c10,fig:cifar-10-gen}.
We train them using standard
training recipes with SGD, using a learning rate of $0.1$~(except
AlexNet where we use a learning rate of $0.01$), and a momentum of
$0.9$ with a batch size of $128$ (further details in
Appendix~\ref{sec:experimental-details}). In addition to training for a fixed
number of epochs, we also present results in the Appendix in ~\cref{fig:test-acc-stop,fig:test-acc-stop-c10} where the
training accuracy~(as opposed to number of iterations) is used as a stopping
criterion to show that
our regularizor performs well with a range of stopping criterions.

For GAN experiments, we use CIFAR100, CIFAR10, and CelebA~\citep{liu2015faceattributes} datasets. We show results on both, conditional and unconditional
GANs. Please refer to~\cref{sec:gansetup} for further
details about the training setup.

\paragraph{Choosing stable rank} Given a matrix $\wmat \in \real^{m \times
  n}$, the desired stable rank $r$ is controlled using a single
hyperparameter $c$ as $r = c \; \min(m,n)$, where $c \in (0, 1]$.
For simplicity, we use the same $c$ for all the linear layers. 
It is trivial to note that if $c=1$, or for a given $c$, if $\srank{\wmat} \leq r$,
then SRN boils down to SN.
For classification, we choose $c = \{0.3, 0.5\}$, and compare SRN against standard
training (Vanilla) and training with \gls{sn}. For GAN experiments, we choose $c = \{0.1, 0.3, 0.5, 0.7, 0.9\}$, and
compare SRN-GAN against SN-GAN~\citep{miyato2018spectral}, WGAN-GP~\citep{Gulrajani2017}, and orthonormal regularization GAN
(Ortho-GAN)~\citep{Brock2016}.

\paragraph{Result Overview}
\begin{compactitem}
\item SRN improves classification accuracy on a wide variety of architectures. 
\item Normalizing stable rank improves the learning capacity of spectrally normalized networks.
\item SRN shows remarkably less memorization, even on settings very hard to generalize.
\item SRN shows much improved generalization behaviour evaluated using recently proposed sample complexity measures.
\item As we decrease the stable rank, the empirical \Gls{lip} of SRN-GAN decreases. Proving our arguments provided in~\cref{sec:whyStable}.
\item SRN-GAN provides much improved Neural divergence score (ND)~\citep{gulrajani2018towards} compared to SN-GAN, proving that it is robust to memorization in GANs as well.
\item SRN-GAN also provide improved Inception and FID scores in all our experiments (except one where SN-GAN is better).
\end{compactitem}

\begin{figure}[t]
  \centering\small
  \begin{subfigure}[!t]{0.19\linewidth}
    \def\svgwidth{0.98\linewidth}
    \input{./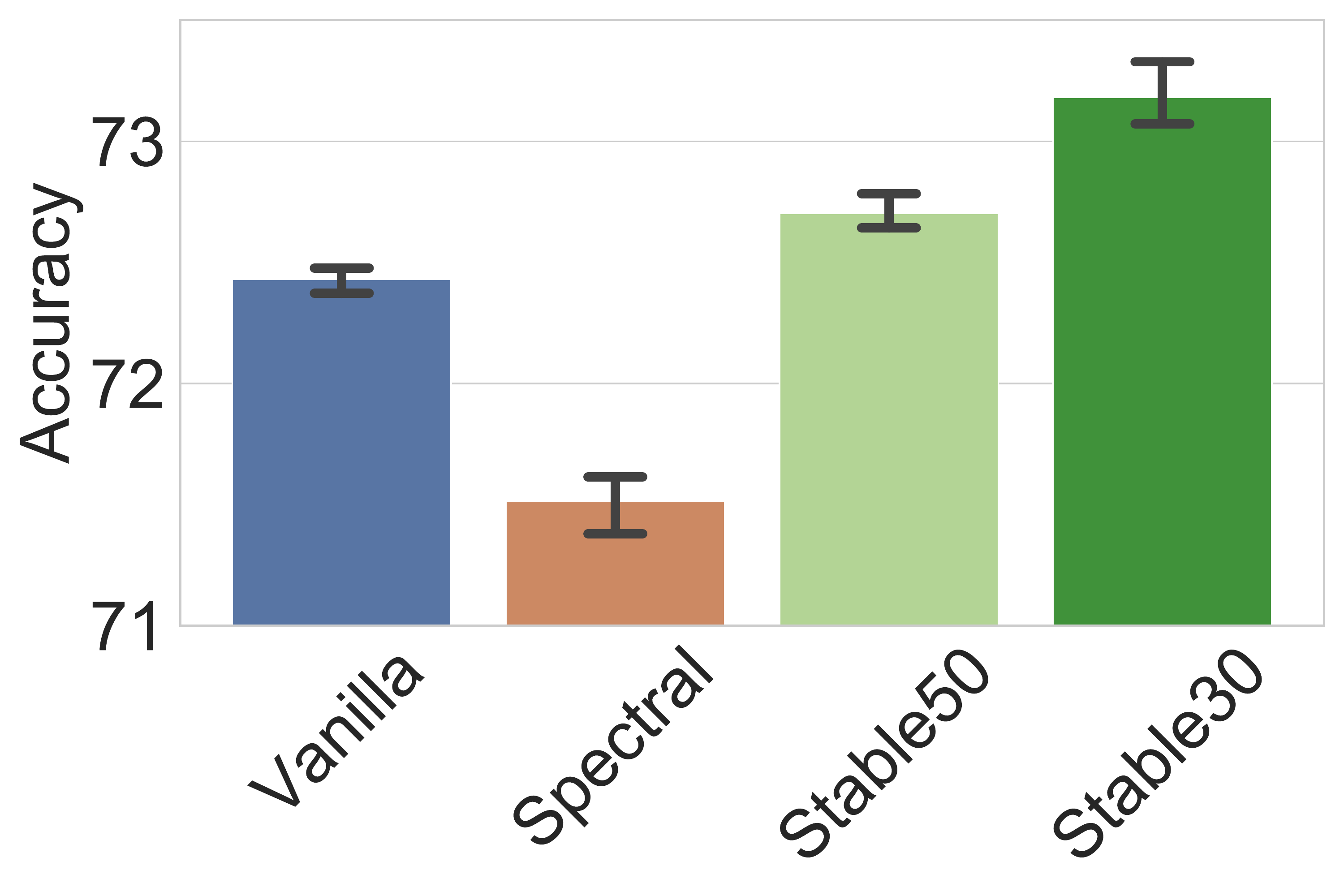_tex}
    \subcaption{ Resnet110}
  \end{subfigure}
  \begin{subfigure}[!t]{0.19\linewidth}
    \def\svgwidth{0.98\linewidth}
    \input{./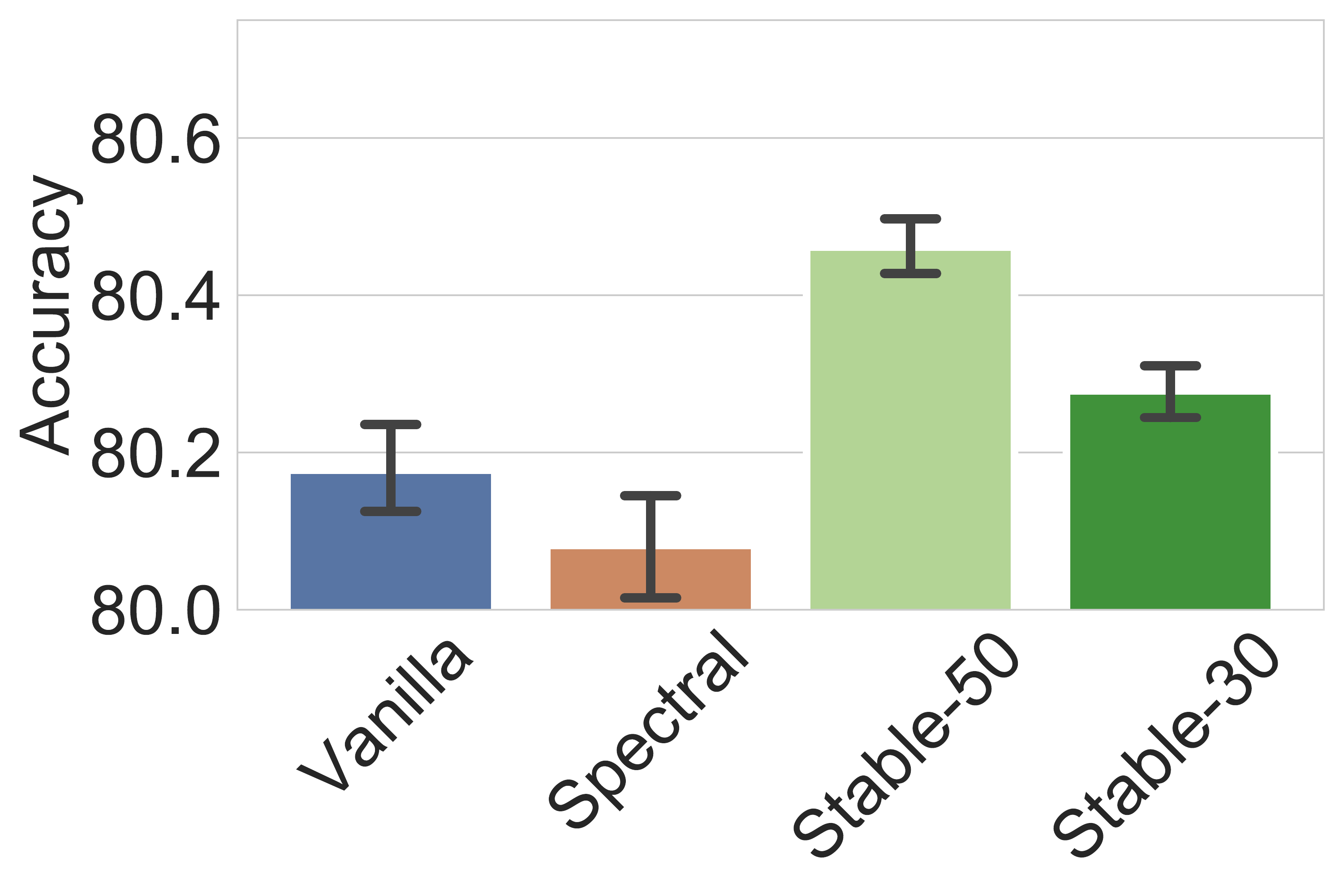_tex}
    \subcaption{ WideResnet-28}
  \end{subfigure}
   \begin{subfigure}[!t]{0.19\linewidth}
    \def\svgwidth{0.98\linewidth}
    \input{./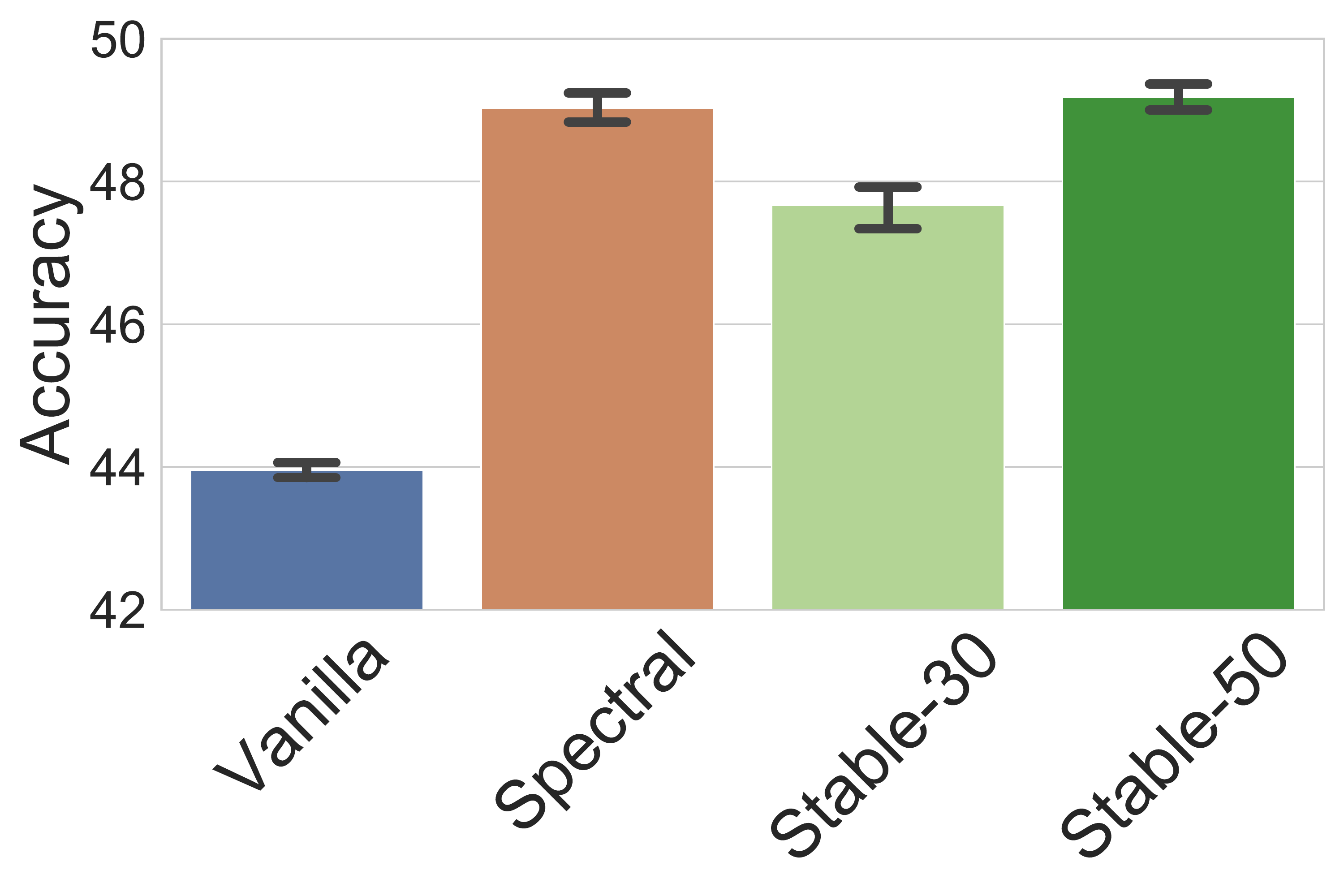_tex}
    \subcaption{ Alexnet}
  \end{subfigure}
  \begin{subfigure}[!t]{0.19\linewidth}
    \def\svgwidth{0.98\linewidth}
    \input{./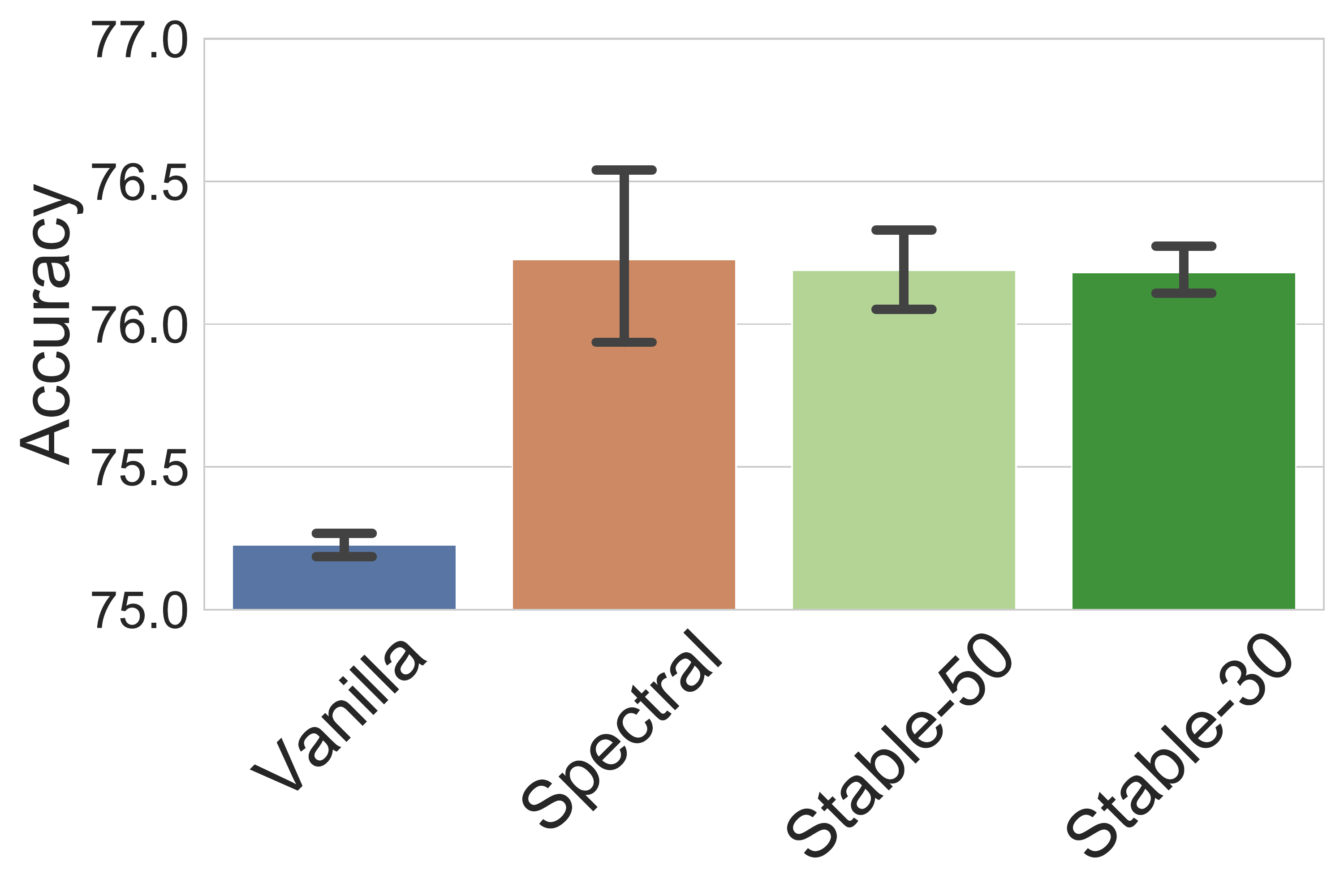_tex}
    \subcaption{ Densenet-100}
  \end{subfigure}
   \begin{subfigure}[!t]{0.19\linewidth}
    \def\svgwidth{0.98\linewidth}
    \input{./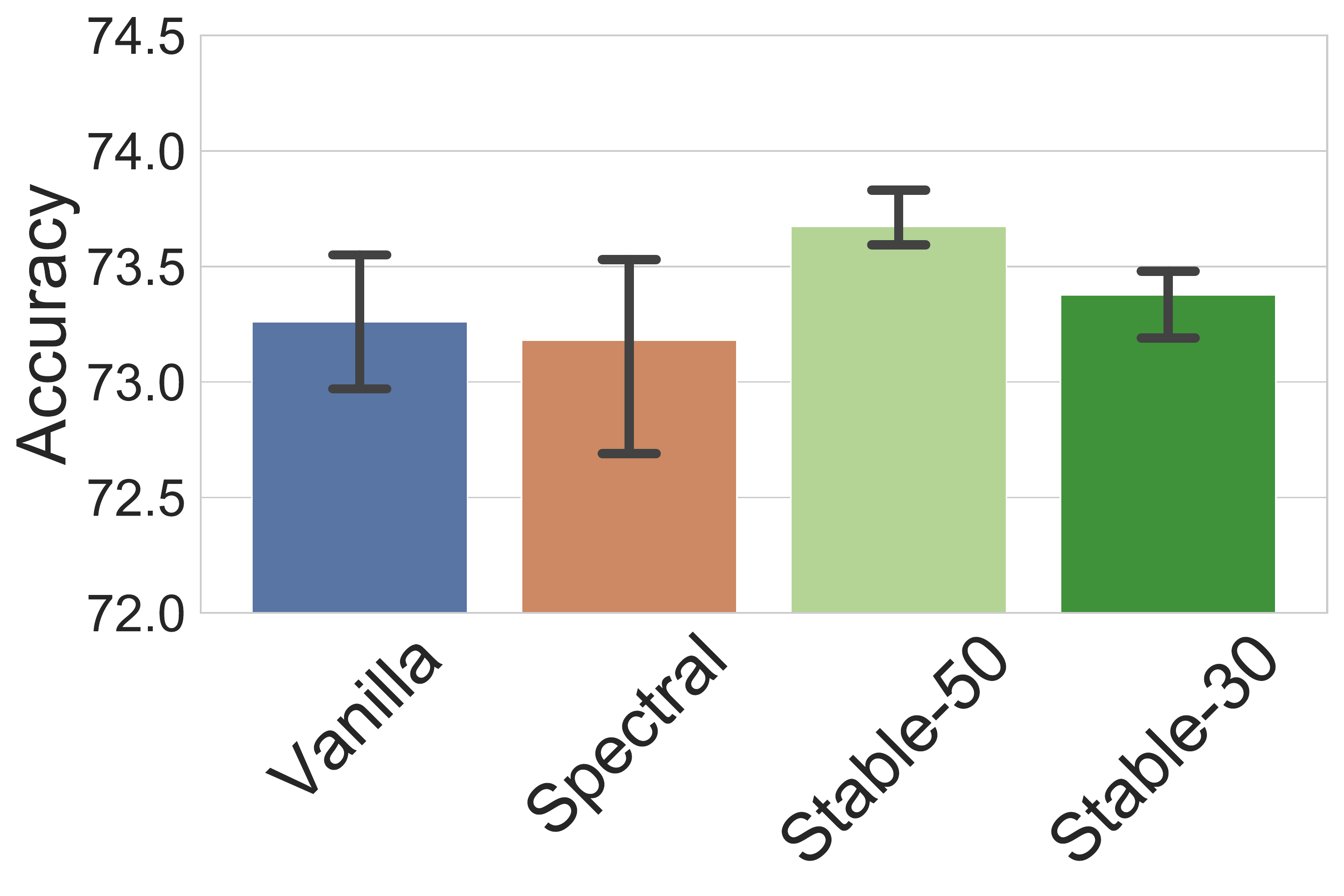_tex}
    \subcaption{ VGG-19}
  \end{subfigure}
  \caption{Test accuracies on CIFAR100 for clean data. Higher is better.}
  \label{fig:test-acc}
\end{figure}

\subsection{Classification Experiments}\label{sec:gen_exp_results}
We perform each experiment $5$ times using a new random seed each time and report the mean, 
and the $75\%$ confidence interval for the test error in~\cref{fig:test-acc}. 
These experiments show that the test accuracy of \gls{srn}, on a wide variety \gls{nn}s, 
is always higher than the Vanilla and SN (except for SRN-50  on Alexnet where \gls{srn} and \gls{sn} are almost equal). 
However, \gls{sn} performs slightly worse than Vanilla for WideResNet-28 and ResNet110. 
The fact that \gls{srn} does involve \gls{sn}, combined with the above statement, indicate that
even though \gls{sn} reduced the learning capability of these networks, 
normalizing stable rank must have improved it significantly in order for \gls{srn} to outperform Vanilla. For example,
in the case of ResNet110, \gls{sn} is $71.5\%$ accurate whereas \gls{srn} provides an accuracy of $73.2\%$. In addition to this,
we would like to note that even though \gls{sn} is being used extensively for the training of GANs, it is not
a popular choice when it comes to training standard \gls{nn}s for classification. We suspect that this is because of the decrease in the capacity, which 
we have shown to be increased by the stable rank normalization, proving the worth of \gls{srn} for classification tasks as well.
\subsection{Study of Generalization Behaviour}
\label{sec:empir-eval-gener}
Our last set of experiments established that \gls{srn} provides improved classification accuracies on various \gls{nn}s.
Here we study the generalization behaviour of these models. Quantifying generalization behaviour is non-trivial
and there is no clear answer to it. However, we utilize recent efforts that explore the theoretical understanding of 
generalization and use them to study it in practice.
\begin{figure}%
  \centering
  \begin{subfigure}[!t]{0.19\linewidth}
    \def\svgwidth{0.98\linewidth}
    \input{./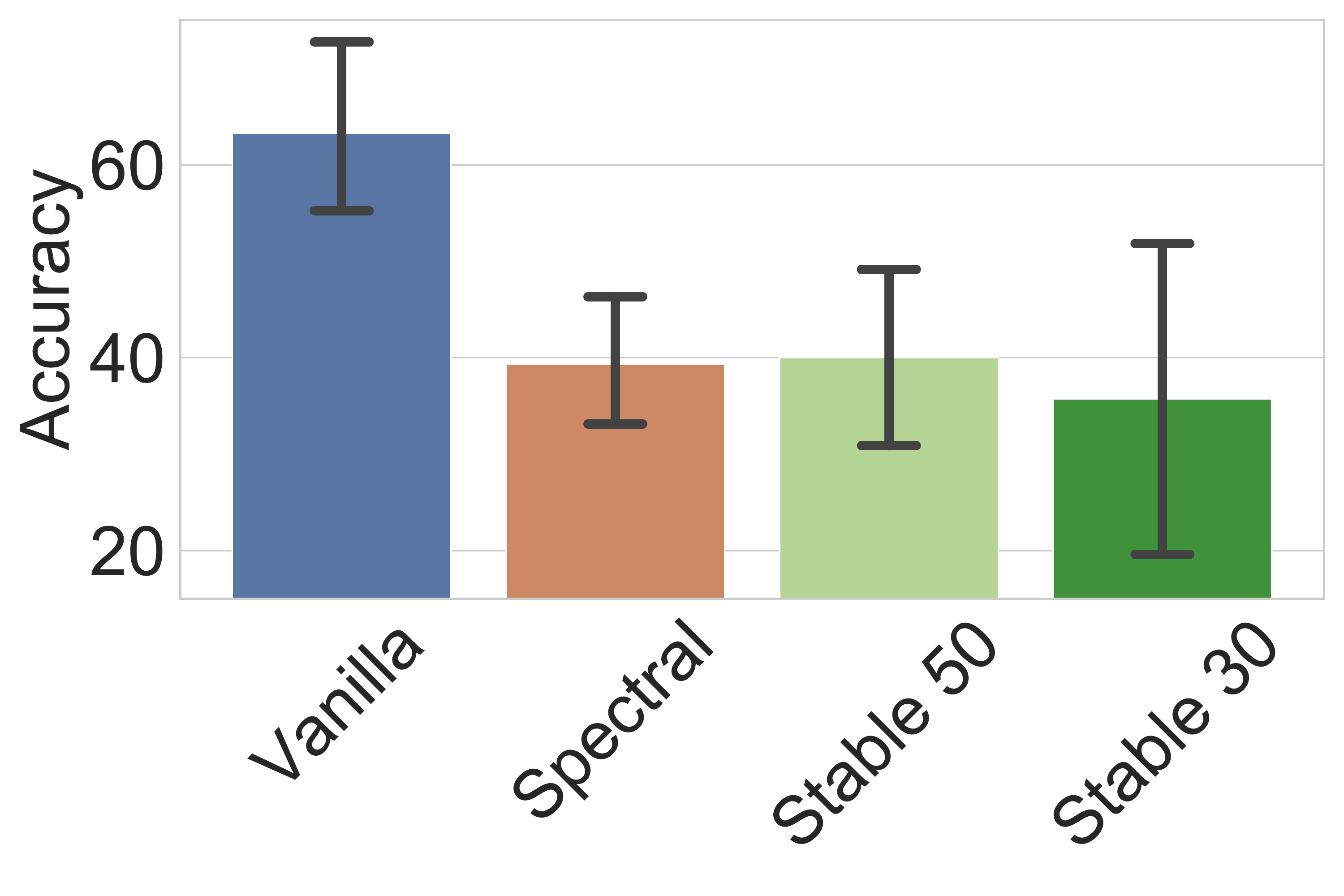_tex}
    \subcaption{Resnet110}
  \end{subfigure}
     \begin{subfigure}[!t]{0.19\linewidth}
    \def\svgwidth{0.98\linewidth}
    \input{./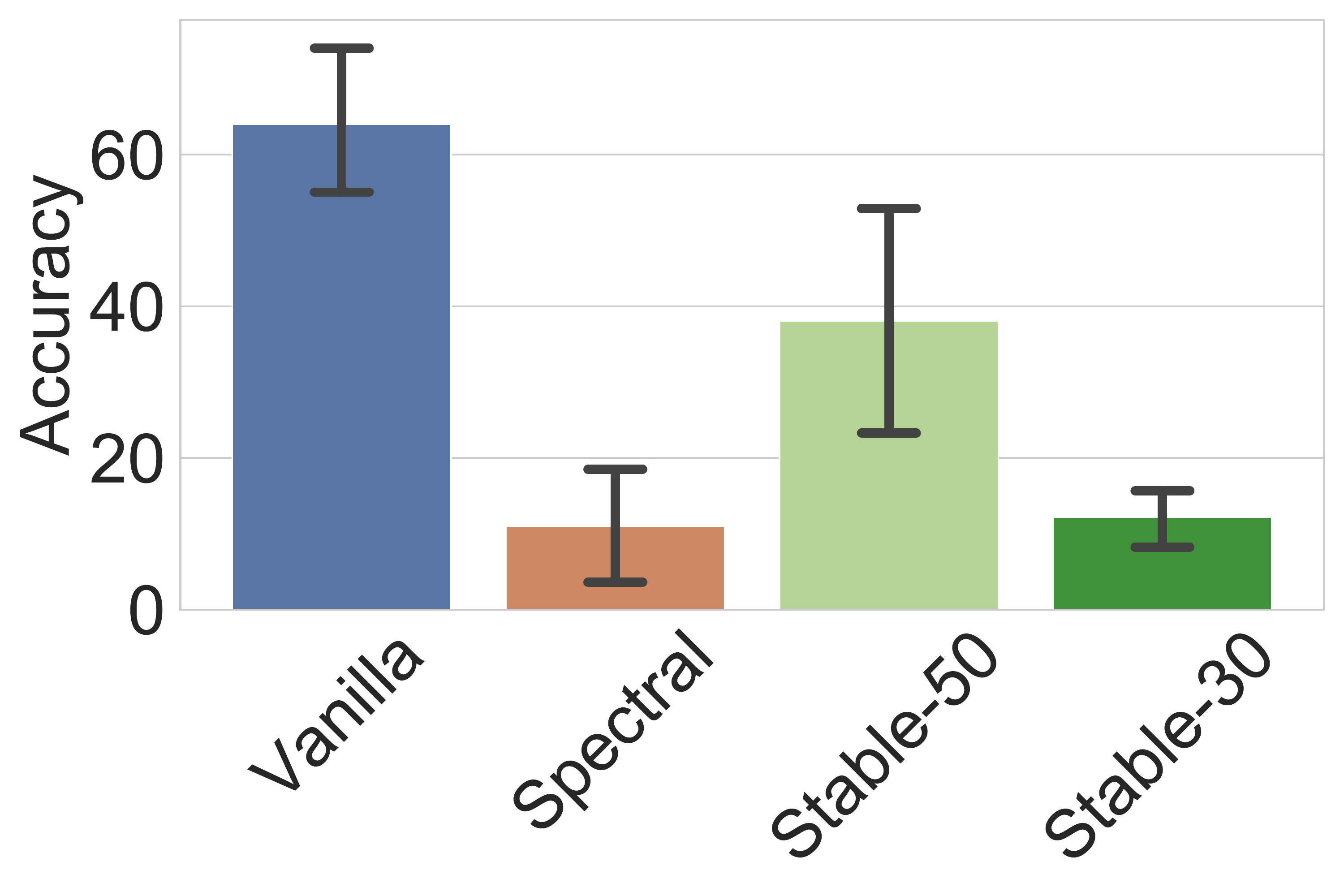_tex}
    \subcaption{WideResNet-28}
  \end{subfigure}
   \begin{subfigure}[!t]{0.19\linewidth}
    \def\svgwidth{0.98\linewidth}
    \input{./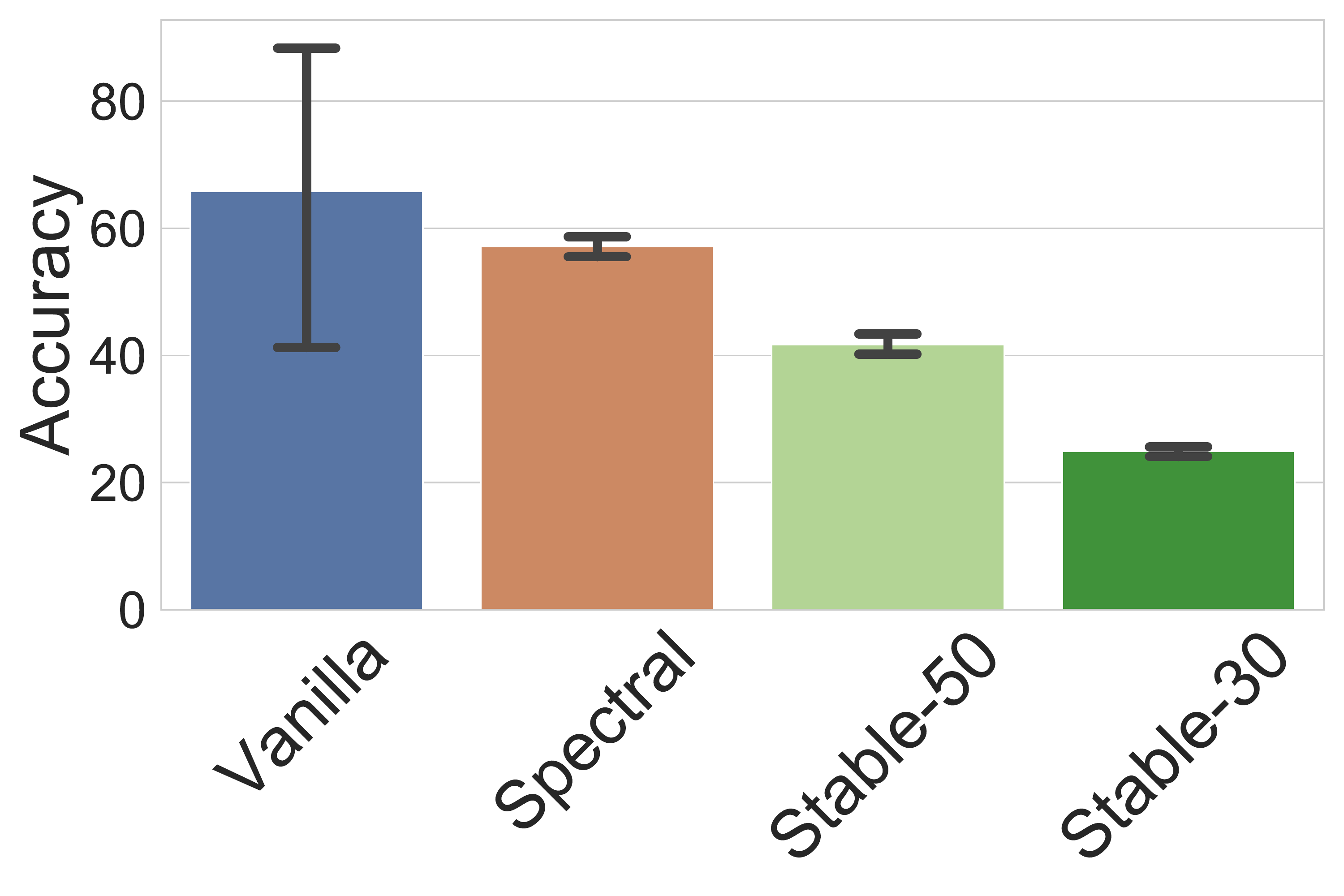_tex}
    \subcaption{Alexnet}
  \end{subfigure}
   \begin{subfigure}[!t]{0.19\linewidth}
    \def\svgwidth{0.98\linewidth}
    \input{./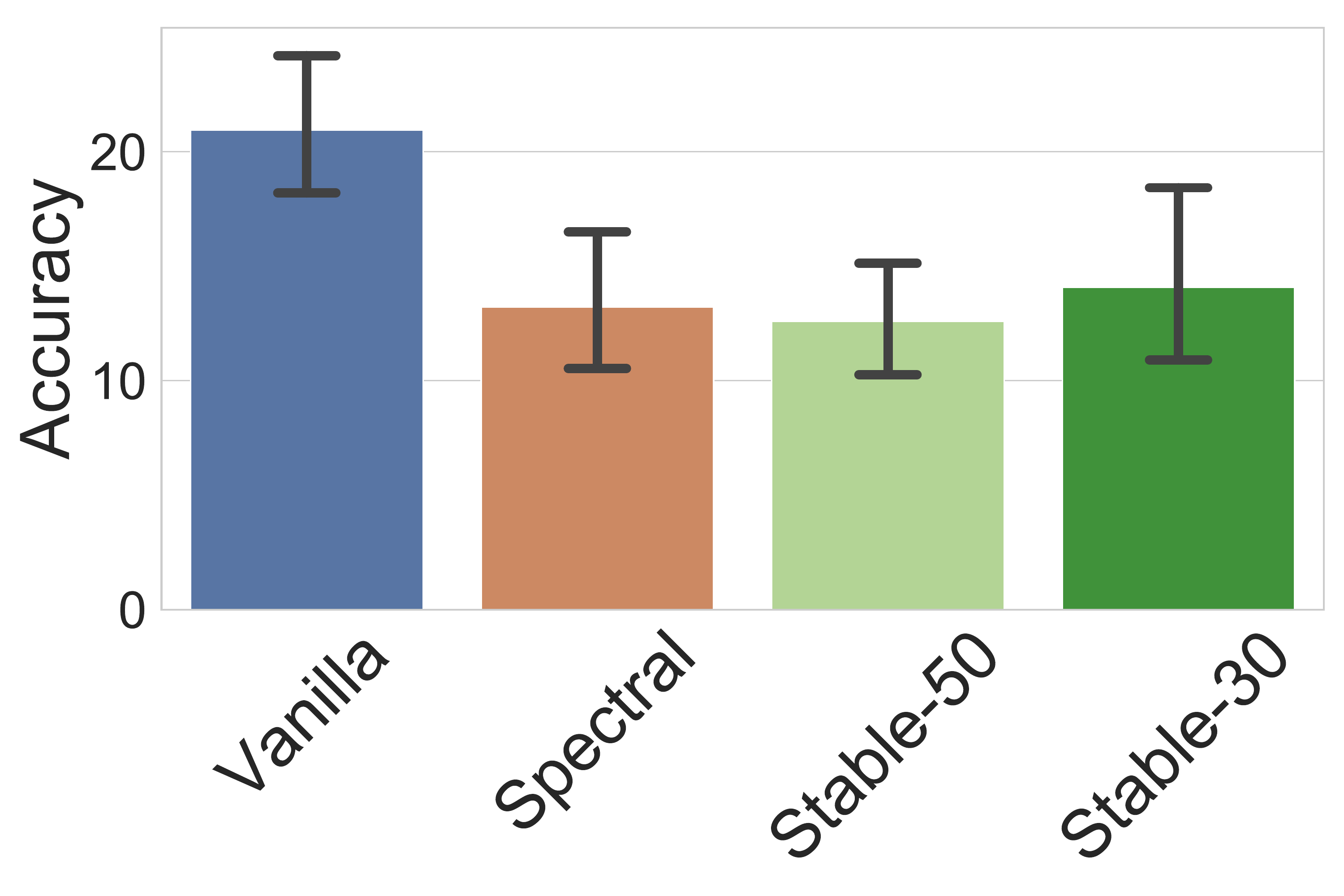_tex}
    \subcaption{Densenet-100}
  \end{subfigure}
   \begin{subfigure}[!t]{0.19\linewidth}
    \def\svgwidth{0.98\linewidth}
    \input{./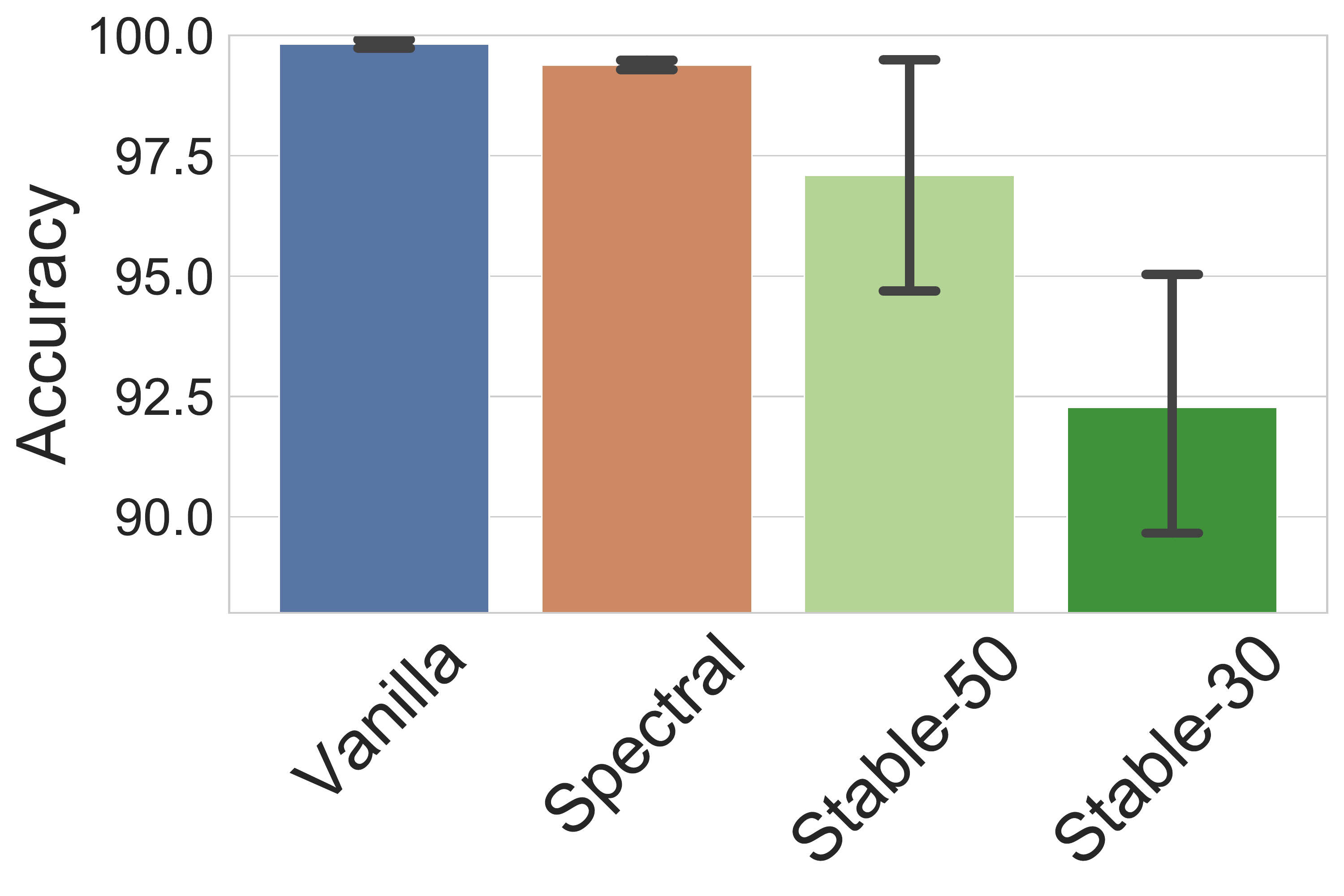_tex}
    \subcaption{VGG-19}
  \end{subfigure}
  \caption{Train accuracies on CIFAR100 for shattering experiment. Lower indicate less memorization, thus, better.}
  \label{fig:rand-acc}
\end{figure}

\paragraph{Shattering Experiments}
To inspect the generalization behaviour in \gls{nn}s we begin with the shattering experiment~\citep{Zhang2016}. 
It is a test of whether the network
can fit the training data well but not a label-randomized version of it (each image
of the dataset is assigned a random label). As
there is no correlation of the labels with the data points
$P\br{y\vert \vec{x}}$ is essentially uninformative because it is uniformly random. Thus, the test accuracy on this task is
almost $1\%$. A high training accuracy --- which indicates a high
generalization gap (difference between train and test accuracy) can be achieved only by memorizing the train data
~\footnote{The training
    of all the models of one architecture were stopped  after the same number of epochs
    - double the number of epochs the model were trained on the clean dataset.}.
~\cref{fig:rand-acc}
shows that \gls{srn} reduces memorization on random labels~(thus, reduces the estimate of the Rademacher
complexity~\citep{Zhang2016}). 
Note, as shown in the classification
  experiments, the same model was able to achieve the highest training
  accuracy when the labels were not randomized.

\begin{table}[!htb]
  \centering\footnotesize
  \begin{tabular}{l@{\quad}c@{\quad}c@{\quad}c@{\quad}c@{\quad}}\toprule
    &SRN-50&SRN-30&Spectral (SN) &Vanilla\\\midrule
    WD & $12.02 \pm 1.77$ &$11.87 \pm 0.57$ & $11.13 \pm 2.56$ & $10.56
    \pm 2.32$ \\
    w/o WD & $17.71 \pm 2.30$ &$19.04 \pm 4.53$ & $17.22 \pm 1.94$ & $13.49
    \pm 1.93$ \\\bottomrule
  \end{tabular}
  \caption{{\bf Highly non-generalizable setting}. Training error for ResNet-110 on CIFAR100 with randomized
    labels, low lr$=0.01$, and with and without weight decay.~(Higher
    is better.) The clean test accuracy for this setting is shown in~\cref{tab:clean-c100-lowlr}.}
  \label{tab:low_lr_rnet}
\end{table}

We also look specifically at  highly non-generalizable settings --- {\em low
learning rate and without weight decay}. As shown in~\cref{tab:low_lr_rnet}, \gls{srn} consistently achieves
lower generalization error~(by achieving a low train error) both in
the presence and the absence of weight decay
\footnote{These result are
reported after 200 epochs. It can be looked on as combined with early
stopping, which is a powerful way of avoiding memorizing random
labels~\citep{Li2019}.}.
Similar results are
reported for Alexnet and WideResNet in~\cref{sec:gener-behav}. 

\begin{figure}[t]
  \centering
  \begin{subfigure}[t]{1.0\linewidth}
    \centering
  \begin{subfigure}[t]{0.32\linewidth}
    \def\svgwidth{0.98\linewidth} \input{./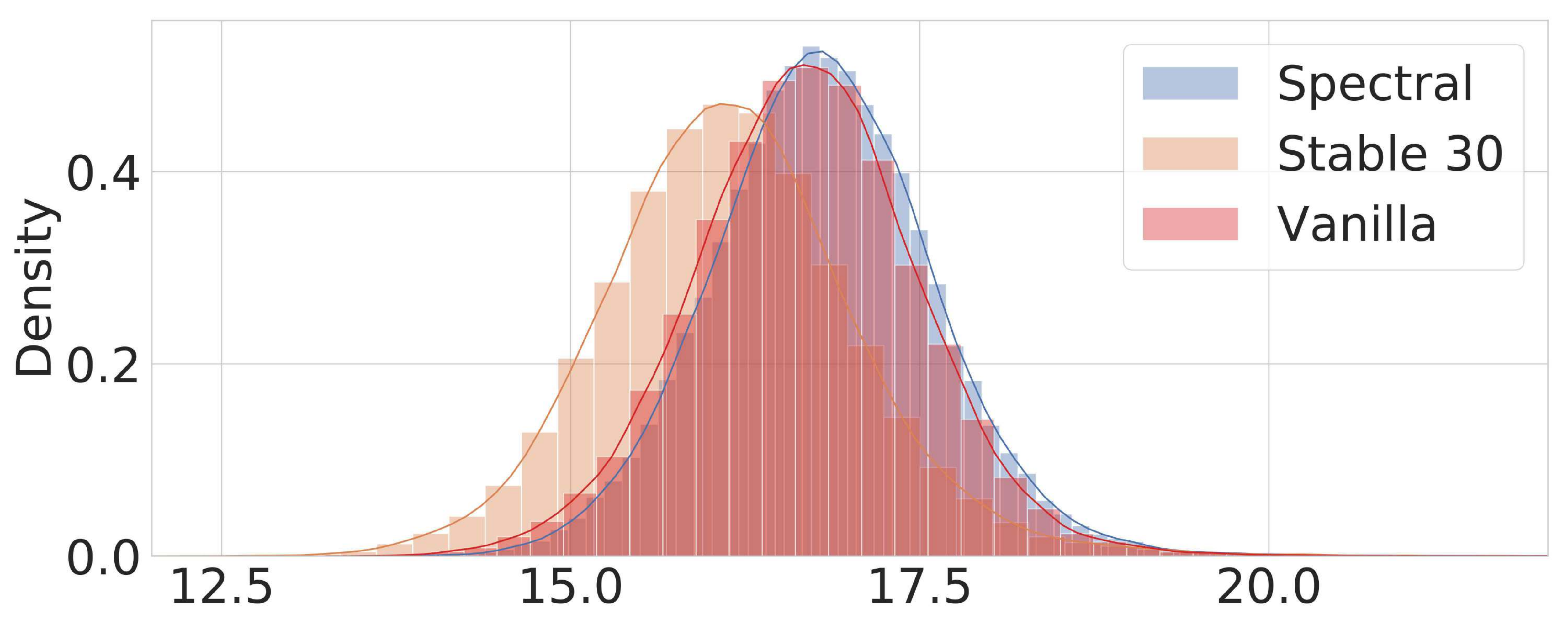_tex}
    \subcaption{R110-Jac-Norm}\label{fig:r110-jac-comp}
  \end{subfigure}\begin{subfigure}[t]{0.32\linewidth}
    \def\svgwidth{0.98\linewidth} \input{./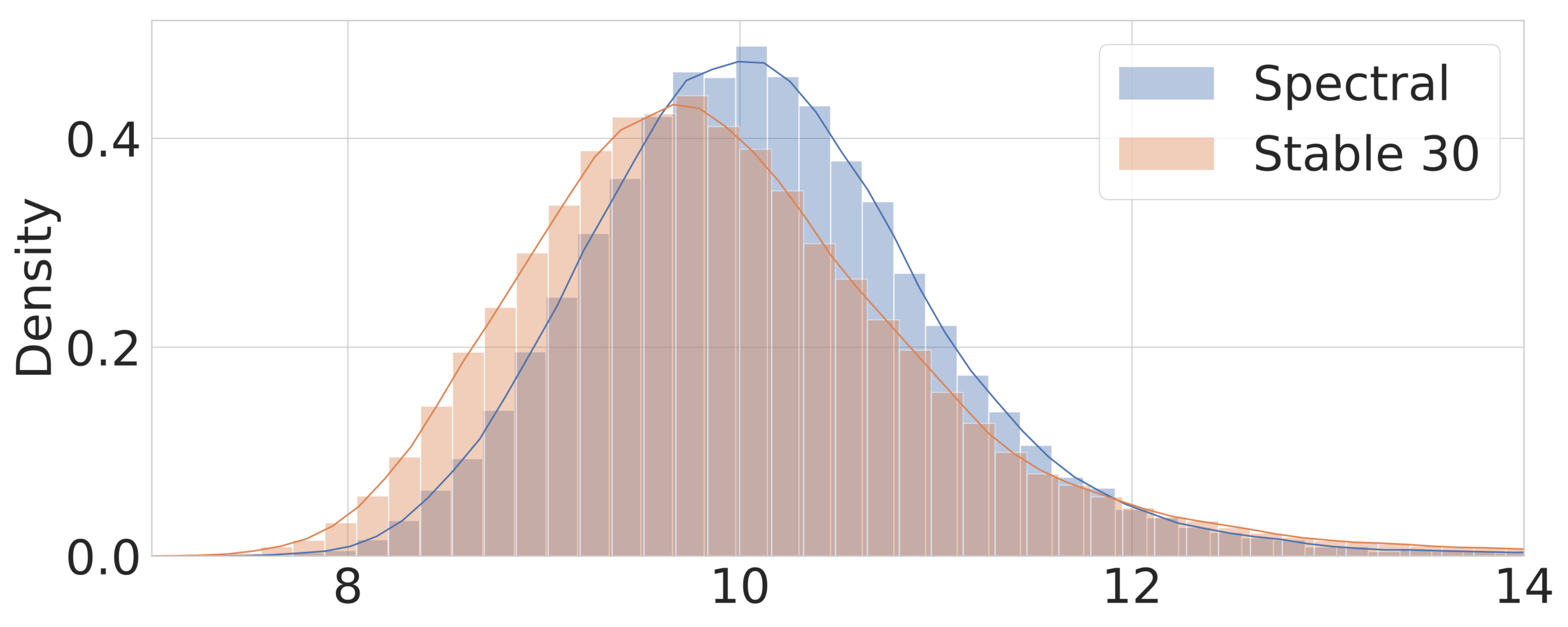_tex}
    \subcaption{R110-Spec-$L_1$}\label{fig:r110-spec-l1-comp}
  \end{subfigure}
  \begin{subfigure}[t]{0.32\linewidth}
    \def\svgwidth{0.98\textwidth}
    \input{./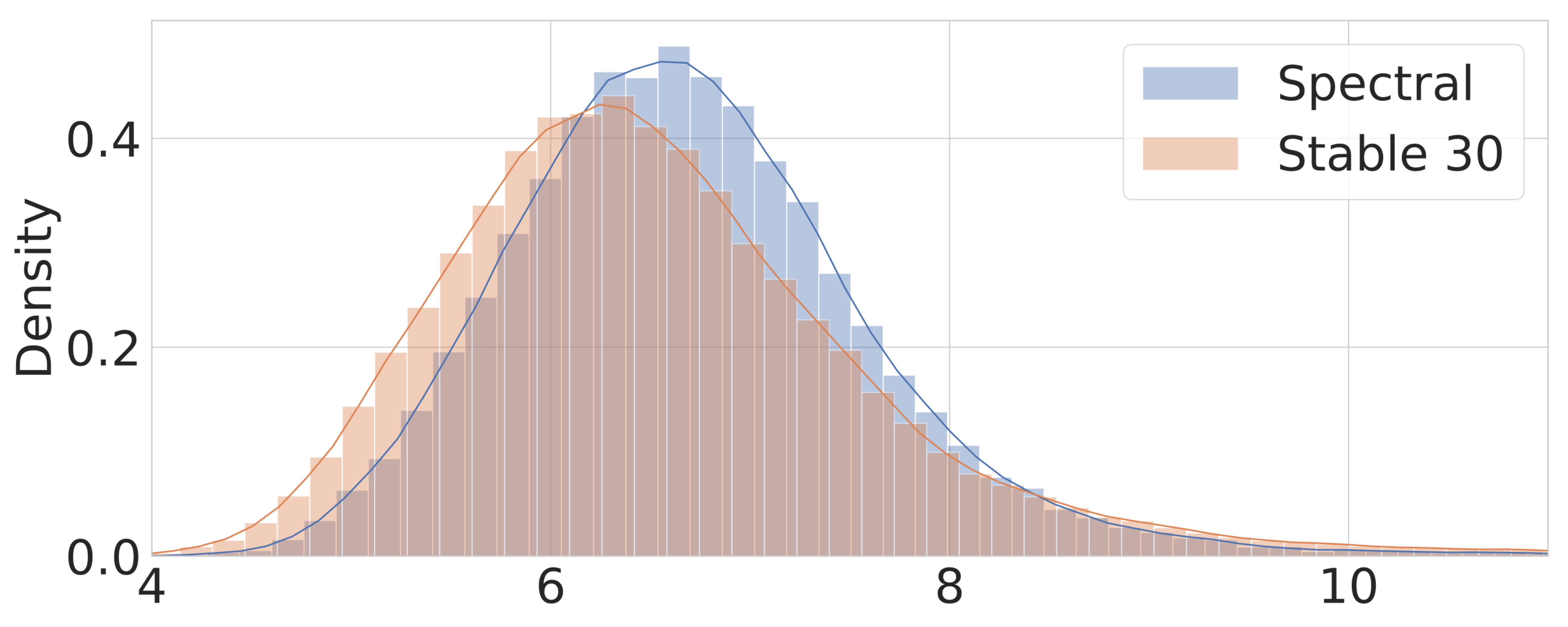_tex}
    \subcaption{R110-Spec-Fro}\label{fig:r110-spec-fro-comp}
  \end{subfigure}
\end{subfigure}
\begin{subfigure}[t]{1.0\linewidth}
    \centering
  \begin{subfigure}[t]{0.32\linewidth}
    \def\svgwidth{0.98\linewidth} \input{./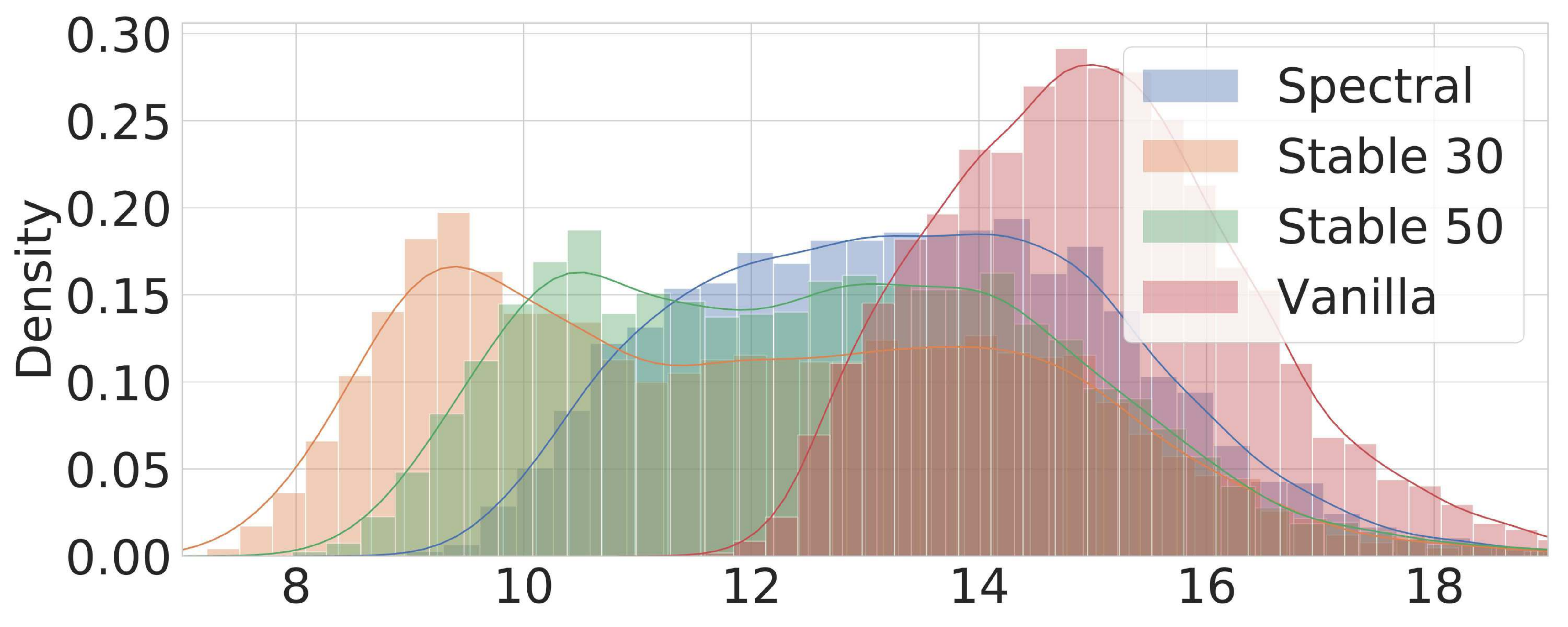_tex}
    \subcaption{WRN-Jac-Norm}\label{fig:wrn-jac-comp}
  \end{subfigure}
  \begin{subfigure}[t]{0.32\linewidth}
    \def\svgwidth{0.98\linewidth} \input{./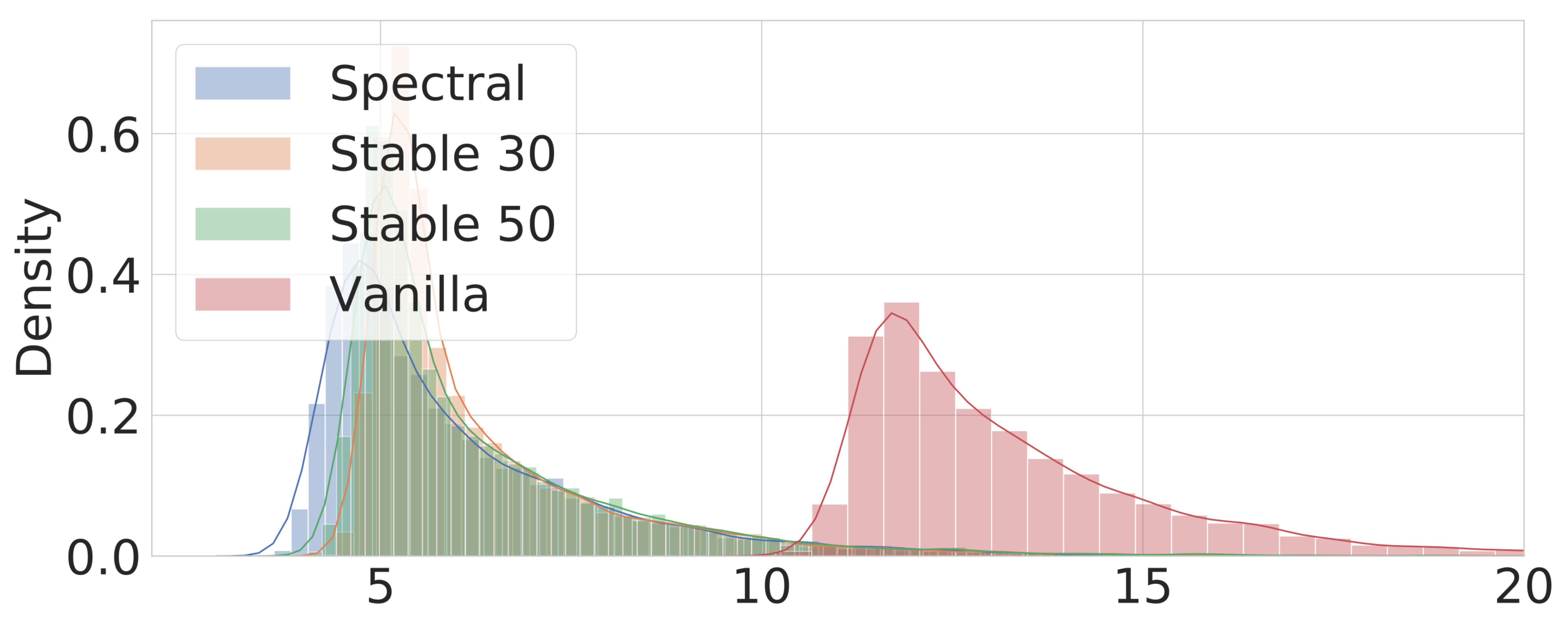_tex}
    \subcaption{WRN-Spec-$L_1$}\label{fig:wrn-spec-l1-comp}
  \end{subfigure}
  \begin{subfigure}[t]{0.32\linewidth}
    \def\svgwidth{0.98\textwidth}
    \input{./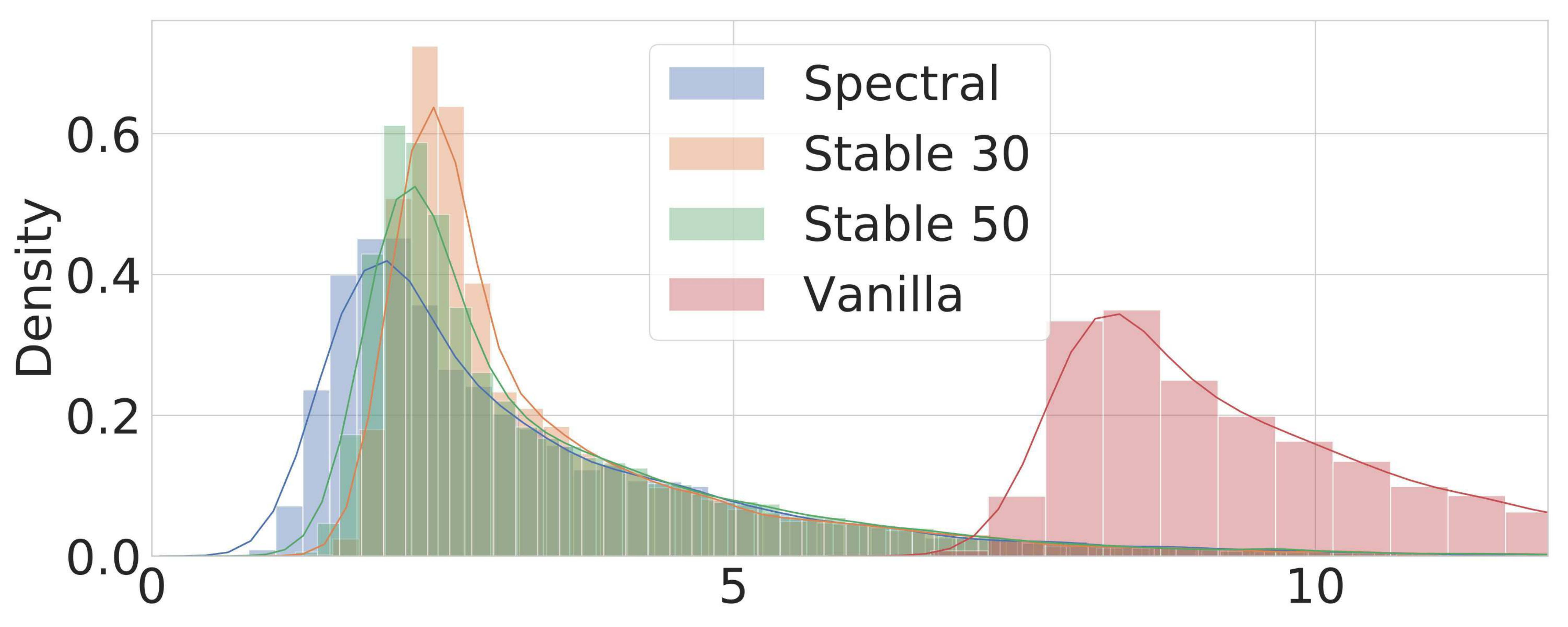_tex}
    \subcaption{WRN-Spec-Fro}\label{fig:wrn-spec-fro-comp}
  \end{subfigure}
\end{subfigure}
\begin{subfigure}[t]{1.0\linewidth}
    \centering
  \begin{subfigure}[t]{0.32\linewidth}
    \def\svgwidth{0.98\linewidth} \input{./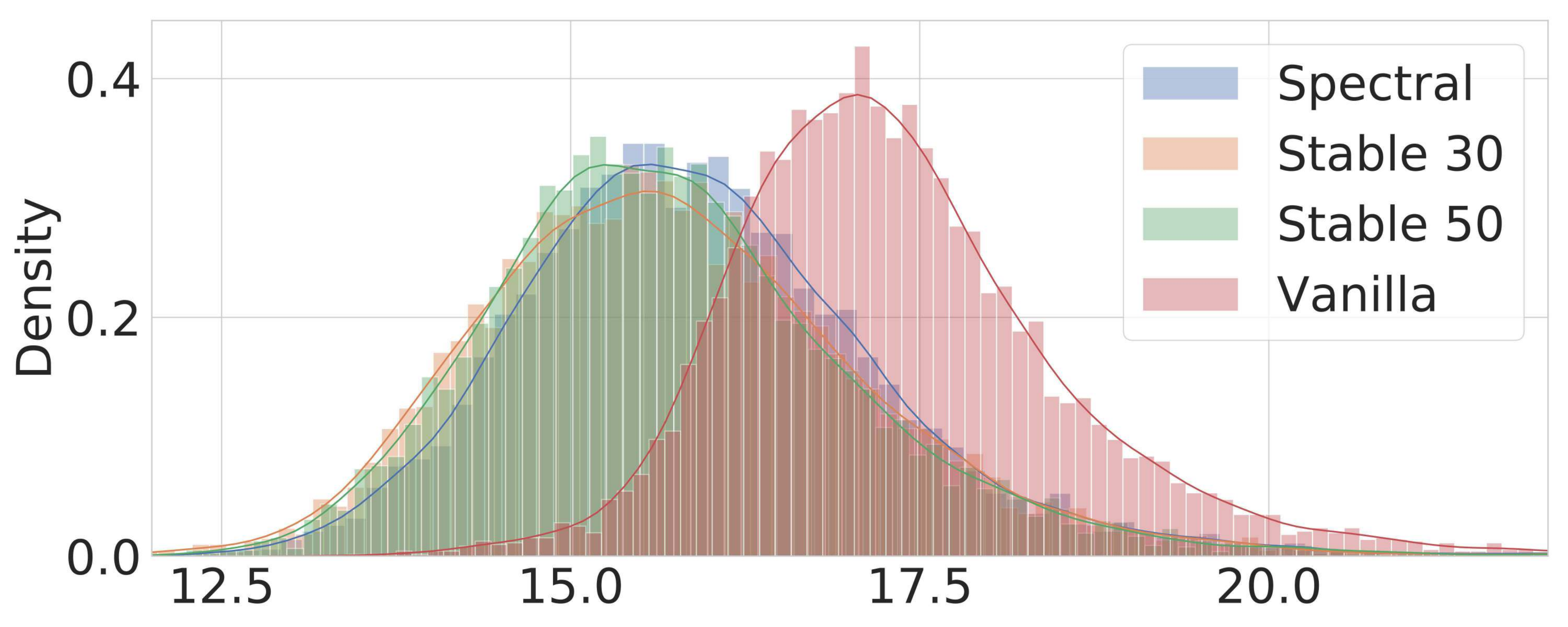_tex}
    \subcaption{D100-Jac-Norm}\label{fig:d100-jac-comp}
  \end{subfigure}
  \begin{subfigure}[t]{0.32\linewidth}
    \def\svgwidth{0.98\linewidth} \input{./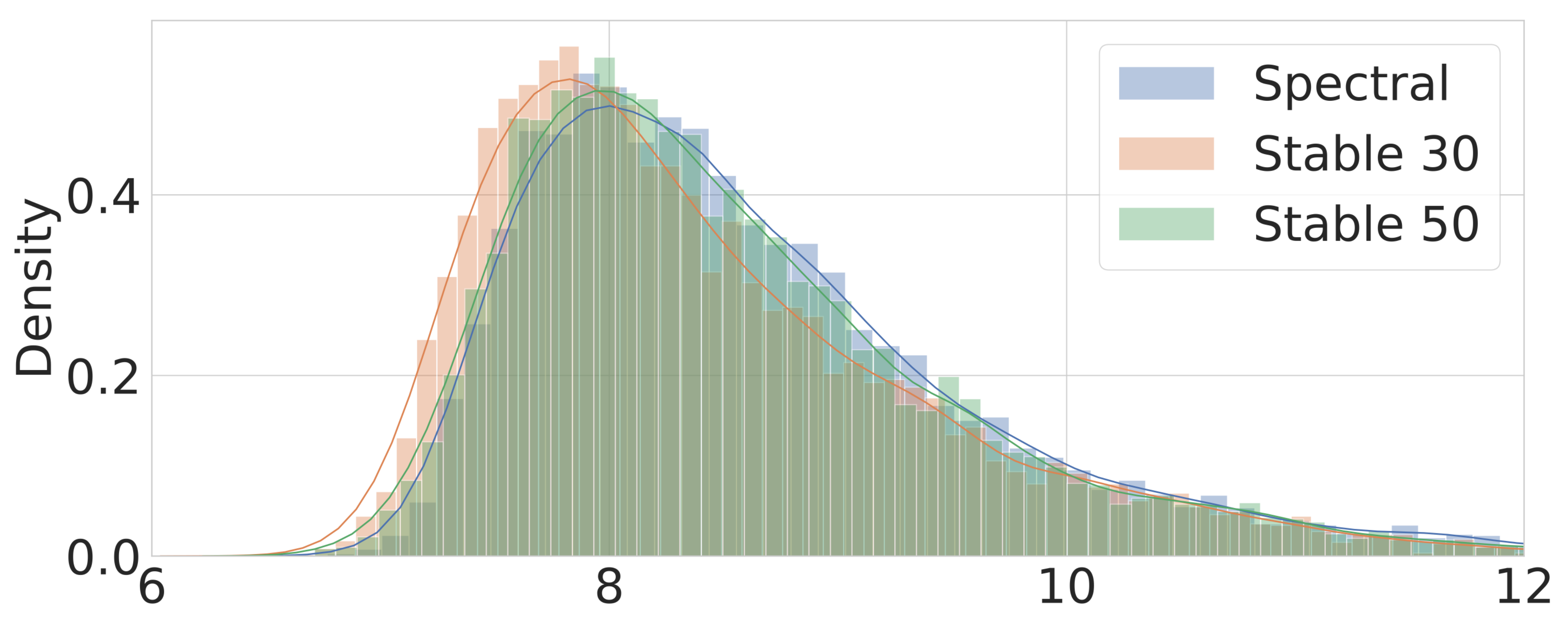_tex}
    \subcaption{D100-Spec-$L_1$}\label{fig:d100-spec-l1-comp}
  \end{subfigure}
  \begin{subfigure}[t]{0.32\linewidth}
    \def\svgwidth{0.98\textwidth}
    \input{./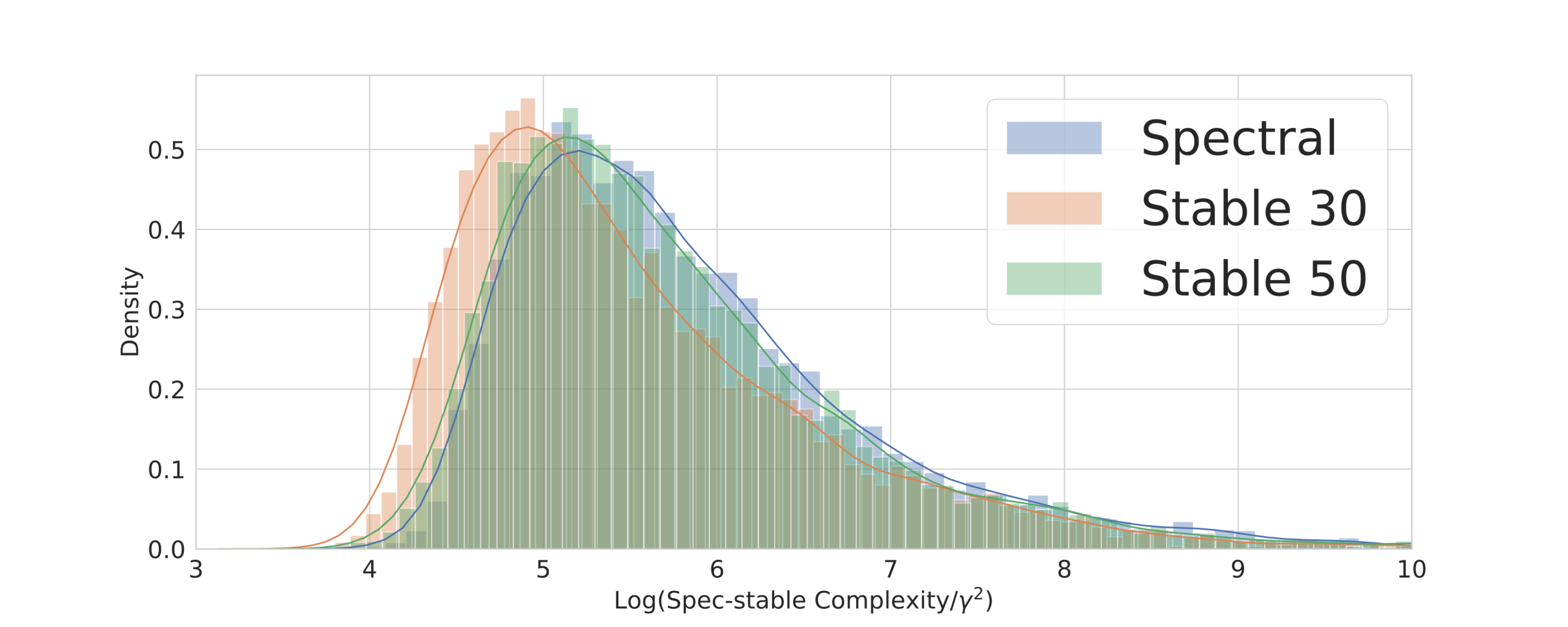_tex}
    \subcaption{D100-Spec-Fro}\label{fig:d100-spec-fro-comp}
  \end{subfigure}
\end{subfigure}
  \caption{($\log$) Sample complexity ($C_{\mathrm{alg}}$) of
    ~ResNet-110~(\cref{fig:r110-jac-comp,fig:r110-spec-l1-comp,fig:r110-spec-fro-comp}),
    WideResNet-28-10~(\cref{fig:wrn-jac-comp,fig:wrn-spec-l1-comp,fig:wrn-spec-fro-comp}),
    and Densenet-100~(\cref{fig:d100-jac-comp,fig:d100-spec-l1-comp,fig:d100-spec-fro-comp})
quantified using the three measures discussed in the paper. Left
is better. Vanilla is omitted
from~\cref{fig:r110-spec-l1-comp,fig:r110-spec-fro-comp,fig:d100-spec-l1-comp,fig:d100-spec-fro-comp}
as it is too far to the right. Also, in situations where SRN-50 and SN performed the same, we removed the histogram to avoid clutter.}
  \label{fig:compl}
\end{figure}\vspace{-1ex}
\paragraph{Empirical Evaluation of Generalization Behaviour}
When all the factors in training~(eg. architecture, dataset, optimizer, among other) as in SRN vs SN vs Vanilla, are fixed, and the only variability is in the normalization, the
generalization error can be written as
$ \abs{\mathrm{Train\ Err} - \mathrm{Test\ Err}} \le
\tildeO{\sqrt{\nicefrac{C_{\mathrm{alg}}}{m}}}$   where $\tildeO{\cdot}$ ignores the logarithmic terms,
$m$ is the number of samples in the dataset, and $C_{\mathrm{alg}}$
denotes a measure of {\em sample complexity} for a given algorithm. Lower the value of $C_{\mathrm{alg}}$, the better is the generalization. Before we give various expressions for $C_{\mathrm{alg}}$, we first define 
a common quantity in all these expressions, called the {\em margin} $\gamma =
f_\theta\br{\vec{x}}\bs{y} - \max_{j\neq
  y}f_\theta\br{\vec{x}}\bs{j}$. It measures the gap between the output of the network on the correct
label and the other labels. Now we define three recently proposed
sample complexity measures useful to quantify the generalization
behaviour with further descriptions in~\cref{sec:gener-behav}:
\begin{itemize}[leftmargin=*]
\item \textbf{Spec-Fro:}
  $\nicefrac{\prod_{i=1}^L\norm{\vec{W}_i}_2^2\sum_{i=1}^L\srank{\vec{W}_i}}{\gamma^2}$~\citep{neyshabur2018a}.
\item \textbf{Spec-L1:} $\nicefrac{\prod_{i=1}^L\norm{\vec{W}_i}_2^2\br{\sum_{i=1}^L\frac{\norm{\vec{W}_i}_{2,1}^{\nicefrac{2}{3}}}{\norm{\vec{W}_i}_2^{\nicefrac{2}{3}}}}^3}{\gamma^2}$~\citep{bartlett2017spectrally},
 $\norm{.}_{2,1}$ is the matrix 2-1 norm.
\item \textbf{Jac-Norm:}
  $\sum_{i=1}^L\nicefrac{\norm{\vec{h}_i}_2\norm{\vec{J}_i}_2}{\gamma}$~\citep{wei2019},
  where  $\vec{h}_i$ is the $i^{\it th}$ hidden layer and $\vec{J}_i
  =\frac{\partial \gamma}{\partial h_i}$ 
\end{itemize}
\vspace{-2ex}
\paragraph{Histogram of the Empirical \Gls{lip} Constant (eLhist)}
We evaluate above mentioned sample complexity measures on $10,000$ points from the dataset and plot the distribution of the $\log$ using a histogram shown in~\cref{fig:compl}. 
The more to the left the histogram, the better is the generalization capacity of the
network.

For better clarity, we provide the $90$ percentile for each of these
histograms in ~\cref{tab:perc-compl} in~\cref{sec:gener-behav}. 
 As the plots and the table show, both \gls{srn} and \gls{sn} produces a much smaller
quantity than a Vanilla network and in 7 out of the 9 cases, SRN is
better than SN. The difference between SRN and SN is much more
significant in the case of Jac-Norm. As this depend on the empirical
lipschitzness, it provides the empirical validation of our arguments in~\cref{sec:whyStable}.

{\em Above experiments indicate that SRN, while providing enough capacity for the standard classification task, is remarkably less prone to memorization and provides improved generalization.}

\begin{figure}[t]
  \centering
\begin{subfigure}[c]{0.495\linewidth}
  \centering
  \def\svgwidth{0.99\columnwidth}
  \resizebox{0.95\textwidth}{!}{\input{./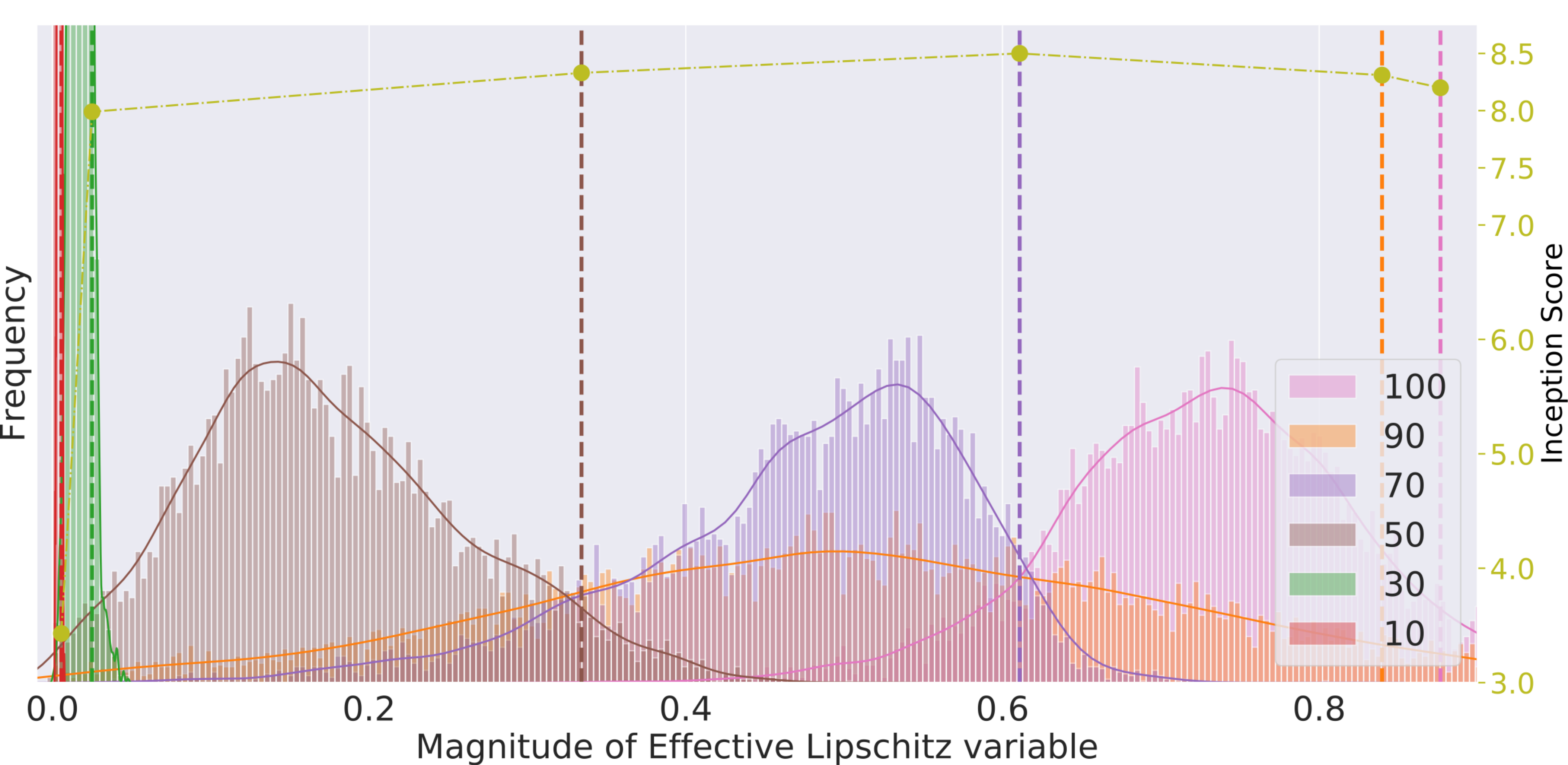_tex}} \caption{Varying stable rank constraints} \label{fig:lip_rank_stable}
\end{subfigure}
\begin{subfigure}[c]{0.495\linewidth}
  \centering
  \def\svgwidth{0.99\columnwidth}
  \resizebox{0.95\textwidth}{!}{\input{./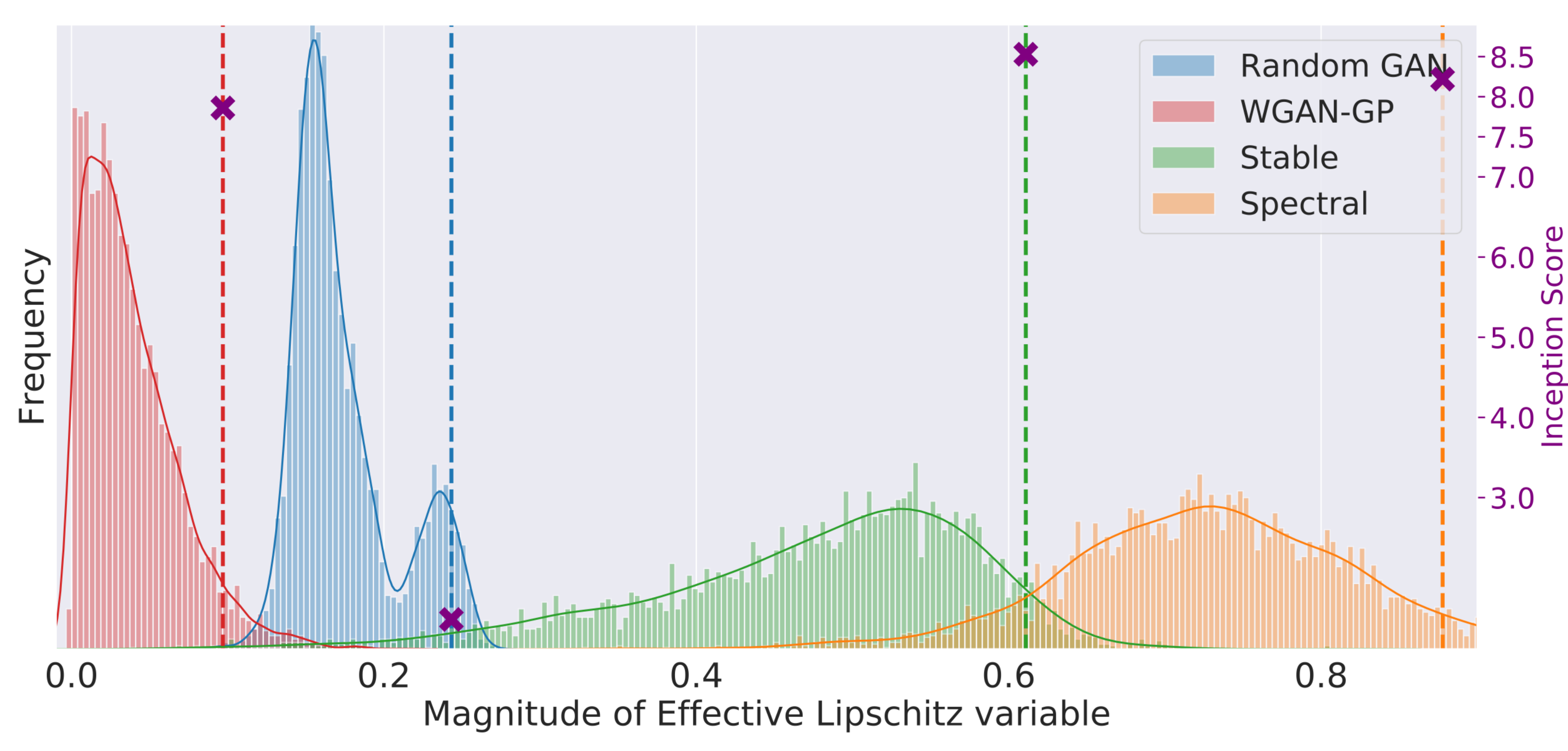_tex}}\caption{Comparison against different approaches}  \label{fig:lip_rank_stable_uncond}
\end{subfigure}
\caption{{\bf eLhist} for unconditional GAN on CIFAR10. Dashed
  vertical lines represent  95{\it th} percentile. Solid circles and
  crosses represent the {\em inception score} for each
  histogram. Figure~\ref{fig:lip_rank_stable} shows SRN-GAN for
  different stable rank constraints (\eg $90$ implies $c=0.9$). Figure~\ref{fig:lip_rank_stable_uncond} compares various approaches. Random-GAN represents random initialization
  (no training). For SRN-GAN, we use
  $c=0.7$.}\vspace{-1.5em}
\end{figure}
\footnotetext[1]{Results are taken from ~\citet{miyato2018spectral}. The rest of the results in the tables are generated by us.}
\vspace{-3ex}\subsection{Training of Generative Adversarial Networks (SRN-GAN)}\label{sec:srn-exp}

In GANs, there is a natural tension between the {\em capacity} and the
{\em generalizability} of the discriminator. The capacity ensures that
if the generated distribution and the data distribution are
different, the discriminator has the ability to distinguish them. At
the same time, the discriminator has to be generalizable, implying,
the class of hypothesis should be small enough to ensure that it is
not just memorizing the dataset. Based on these arguments, we use SRN
in the discriminator of GAN which we call SRN-GAN, and compare it
against SN-GAN, WGAN-GP, and orthonormal regularization based GAN
(Ortho-GAN). 

Along with providing results using evaluation metrics such as Inception score (IS)~\citep{salimans2016improved} , FID~\citep{heusel2017gans}, and Neural divergence score (ND)~\citep{gulrajani2018towards}, we use histograms of the empirical \Gls{lip} constant, {\em refered to as eLhist} from now onwards, for the purpose of analyses. For a given trained GAN (unconditional), we create $2,000$ pairs of samples, where each pair $(\bfx_i, \bfx_j)$ consists of $\bfx_i$ (randomly sampled from the `real' dataset) and $\bfx_j$ (randomly sampled from the generator). Each pair is then passed through the discriminator to compute $\nicefrac{\norm{f(\bfx_i) - f(\bfx_j)}_2}{\norm{\bfx_i - \bfx_j}_2}$, which we then use to create the histogram. %
In the conditional setting, we  sample a class from a discrete
uniform distribution, and then follow the same
approach as described for the unconditional setting.
\vspace{-1ex}
\paragraph{Effect of Stable Rank on eLhist and Inception Score}
\begin{wraptable}{r}{0.6\linewidth}\footnotesize
  \begin{tabular}{c@{\quad}cccc@{}}
    \toprule\footnotesize
    & Algorithm &  Inception Score & FID & Intra-FID\\
    \midrule
    \multirow{5}{*}{\rotatebox[origin=c]{90}{\footnotesize Uncond.}}
    & Orthonormal\footnotemark[1]  & $7.92\pm .04$ &$23.8$&-\\ 
    & WGAN-GP & $7.86\pm .07$ &$21.7$& - \\
    & SN-GAN\footnotemark[1] & $8.22\pm .04$ &$20.67$& -\\
    & SRN-70-GAN & $\mathbf{8.53}\pm 0.04$& $\mathbf{19.83}$&- \\
    & SRN-50-GAN & $8.33\pm 0.06$& $\mathbf{19.57}$&- \\
    \cmidrule{2-5}
    \multirow{3}{*}{\rotatebox[origin=c]{90}{\footnotesize Cond.}}
    & SN-GAN & $8.71\pm .04$ &$16.04$\footnote{This is different from what is reported in the original paper.}& $26.24$\\ 
    &SRN-$70$-GAN & $\mathbf{8.93}\pm 0.12$& $\mathbf{15.92}$ & $\mathbf{24.01}$\\
    &SRN-$50$-GAN & $ 8.76\pm 0.09$& $16.89$& $27.3$\\
   \bottomrule
  \end{tabular}
  \caption{Inception and FID score on CIFAR10.}
  \label{tbl:comp_uncond_model1}\vspace{-2ex}
\end{wraptable}
As shown in~\cref{fig:lip_rank_stable}, lowering the value of $c$
(aggressive reduction in the stable rank) moves the histogram towards
zero, implying, lower empirical \Gls{lip} constant. This validates our
arguments provided in~\cref{sec:whyStable}. Lowering $c$ also
improves inception score, however, extreme reduction in the stable
rank ($c=0.1$) dramatically collapses the histogram to zero and also
drops the inception score significantly. This is due to the fact that
at $c=0.1$, the capacity of the discriminator is reduced to the point
that it is not able to learn to differentiate between the real and the
fake samples anymore.

~\cref{tbl:comp_uncond_model1,tab:inc_fid_cifar100} show that SRN-GAN 
consistently provide better FID score and an
extremely competitive inception score on CIFAR10~(both conditional and
unconditional setting) and CIFAR100~(unconditional setting). In
Table~\ref{tab:nn_dis_cifar100}, we  compare the ND loss on CIFAR10 and CelebA datasets. Note, ND
  has been looked as a metric {\em more robust to memorization} than FID and
  IS in recent works~\citep{gulrajani2018towards,arora2017gans}. We
report our exact setting to compute ND in~\cref{sec:gansetup}. 
We essentially report the loss incurred by a \textit{fresh} classifier
trained to discriminate the generator distribution and the data
distribution. Thus higher the loss, the better the generated
images. As evident, SRN-GAN provides better ND scores on both datasets. For a
qualitative analysis of the images, we compare generations
in both conditional and unconditional setting in~\cref{sec:sample-images}.%
\vspace{-2ex}
\paragraph{Comparing different approaches}
\begin{wraptable}{r}{0.38\linewidth}\footnotesize
  \centering
  \begin{tabular}{@{}c@{}cc@{}}
    \toprule
   Model&IS & FID\\
   \midrule
    SN-GAN&$\mathbf{9.04}$&$23.2$\\
    SRN-GAN (Our)&$8.85$&$\mathbf{19.55}$\\
    \bottomrule
  \end{tabular}
  \caption{CIFAR100 experiments.}
  \label{tab:inc_fid_cifar100}
  \centering
  \begin{tabular}{@{}c@{}cc@{}}
  \toprule
   Model&CIFAR10 & CelebA\\
   \midrule
    SN-GAN&$10.69$&$0.36$\\
    SRN-GAN (Our)&$\mathbf{11.97}$&$\mathbf{0.64}$\\
    \bottomrule
  \end{tabular}
  \caption{Neural Discriminator Loss~(Higher the better).}
  \label{tab:nn_dis_cifar100}\vspace{-1.8em}
\end{wraptable}
In addition, in~\cref{fig:lip_rank_stable_uncond}, we provide
eLhist for comparing different approaches. Random-GAN, as expected, has a low
empirical \Gls{lip} constant and extremely poor inception score. Unsurprisingly, WGAN-GP has a lower $L_e$ than Random-GAN, due to its explicit constraint 
on the \Gls{lip} constant, while providing a higher inception score. On the other hand, SRN-GAN, by virtue of its softer constraints on the \Gls{lip} constant,
trades off a higher \Gls{lip} constant for a better inception score---highlighting the
flexibility provided by SRN.
Additional experiments
in~\cref{sec:lipsch-cond-gans} show more detailed behaviour of GANs in
regards to empirical lipschitz in a variety of settings.

\vspace{-1em}\section{Conclusion}
\label{sec:conc}
\vspace{-1em}We proposed a new normalization (SRN) that allows us to
constrain the stable rank of each affine layer of a \gls{nn}, which in
turn learns a mapping with low empirical \Gls{lip} constant. We also
provide optimality guarantees of SRN. On a variety of neural network
architectures, we showed that SRN improves the generalization and
memorization properties of a standard classifier. In addition, we show
that SRN improves the training of GANs and provide better inception,
FID, and ND scores.
\vspace{-1em}
\section{Acknowledgements}
\label{sec:acknowledgements}
\vspace{-1em}The authors would like to thank Leonard Berrada and Pawan Kumar for
helpful discussions. AS acknowledges support from The Alan Turing Institute under the Turing Doctoral Studentship
grant TU/C/000023. PHS and PD are supported by the ERC grant
ERC-2012-AdG 321162-HELIOS, EPSRC grant Seebibyte EP/M013774/1 and
EPSRC/MURI grant EP/N019474/1. PHS and PD also acknowledges the
Royal Academy of Engineering and FiveAI.

\bibliography{stable_rank}
\bibliographystyle{iclr2020_conference}
\clearpage
\appendix
\section{Technical Proofs }
\label{sec:optimalSrank}
Here we provide an extensive proof of~\cref{thm:srankOptimal}
(\cref{sec:srankProof}). We also provide the optimal solution to the spectral norm problem in~\cref{sec:spectralNormOptimal}. Auxiliary lemmas on which our proof depends are provided in~\cref{sec:auxLemmas}.

\subsection{Proof for Optimal Stable Rank Normalization.~(Main Theorem)}
\label{sec:srankProof}
\optimalthm*
\begin{proof} 
Here we provide the proof of~\cref{thm:srankOptimal} (in the main paper) for all the three cases with optimality and uniqueness guarantees. Let $\widehat{\wmat}_k$ be the optimal solution to the problem for any of the two cases. From~\cref{lem:opt-frobenius}, the $\mathrm{SVD}$ of $\wmat$ and $\widehat{\wmat}_k$ can be written as $\wmat=\vec{U}\Sigma\vec{V}^\top$ and $\widehat{\wmat}_k=\vec{U}\Lambda\vec{V}^\top$, respectively. Then, 
   \( L = \norm{\wmat - \widehat{\wmat}_k}_{\forb}^2 = \ip{\Sigma - \Lambda}{\Sigma - \Lambda}_{\forb} \).
From now onwards, we denote $\Sigma$ and $\Lambda$ as vectors consisting of the diagonal entries, and $\ip{.}{.}$ as the vector inner product
\footnote{$\ip{.}{.}_\forb$ represents the Frobenius inner product of two matrices, which in the case of diagonal matrices is the same as the  inner product of the diagonal vectors.}. 
\paragraph{Proof for Case (a):}
In this case, there is no constraint enforced to preserve any of the singular values of the given matrix while obtaining the new one. The only constraint is that the new matrix should have the stable rank of $r$. Let us assume $\Sigma = \br{\sigma_1, \cdots, \sigma_p}$, $\Sigma_2 = \br{\sigma_2, \cdots, \sigma_p}$, $\Lambda = \br{\lambda_1, \cdots, \lambda_p}$ and $\Lambda_2 = \br{\lambda_2, \cdots, \lambda_p}$. Using these notations, we can write $L$ as:
\begin{align}
    L &=\ip{\Sigma}{\Sigma} +  \ip{\Lambda}{\Lambda} - 2\ip{\Sigma}{\Lambda}\nonumber\\
    \label{eq:l1}
    &= \ip{\Sigma}{\Sigma} +  \lambda_1^2 + \ip{\Lambda_2}{\Lambda_2} - 2\sigma_1\lambda_1 - 2\ip{\Sigma_2}{\Lambda_2}
  \end{align}

\noindent Using the stable rank constraint $\srank{\widehat{\wmat}_k}
= r$, which is \(r = 1 + \dfrac{\sum_{j=2}^p\lambda_j^2}{\lambda_1^2}
\).

\paragraph{Case for $\mathbf{r>1}$} If $r>1$ we obtain the following equality constraint, making the
problem non-convex.

\begin{align}
\label{eq:lambda1}
\lambda_1^2 = \dfrac{\ip{\Lambda_2}{\Lambda_2}}{r - 1}
\end{align} 
However, we will show that the solution we obtain is optimal and
unique. Substituting~\cref{eq:lambda1} into~\cref{eq:l1} we get
  \begin{align}
  \label{eq:l2}
    L = \ip{\Sigma}{\Sigma} + \dfrac{\ip{\Lambda_2}{\Lambda_2}}{r - 1}  + \ip{\Lambda_2}{\Lambda_2} -  2\sigma_1\sqrt{\dfrac{\ip{\Lambda_2}{\Lambda_2}}{r - 1}} -  2\ip{\Sigma_2}{\Lambda_2}
  \end{align}
   
Setting $\dfrac{\partial L}{\partial \Lambda_2} = 0$ to get the family of critical points
  \begin{align}
    & \dfrac{2\Lambda_2}{r - 1}  + 2\Lambda_2 - \dfrac{ 4\sigma_1\Lambda_2}{2\sqrt{\br{r - 1}\ip{\Lambda_2}{\Lambda_2}}} -  2\Sigma_2\nonumber = 0\\
    & \implies \Sigma_2 = \Lambda_2\br{\dfrac{1}{r - 1} + 1 - \dfrac{
      \sigma_1}{ 1\sqrt{\br{r - 1}\ip{\Lambda_2}{\Lambda_2}}}
      }\label{eq:scalar_mult}\\
      &\implies \dfrac{\Sigma_2\bs{i}}{\lambda_2\bs{i}} = \br{\dfrac{1}{r - 1} + 1 - \dfrac{ \sigma_1}{ 1\sqrt{\br{r -
        1}\ip{\Lambda_2}{\Lambda_2}}}} = \dfrac{1}{\gamma_2} \quad \forall~1\le i\le p\label{eq:Scalar_mult}
  \end{align}
As the R.H.S. of~\ref{eq:Scalar_mult} is independant of $i$, the above equality implies that all the critical points of~\cref{eq:l2}
are a scalar multiple of $\Sigma_2$, implying,  $\Lambda_2 = \gamma_2
\Sigma_2$. Note that the domain of $\Lambda_2$ are all strictly
positive vectors and thus, we can ignore the critical point at
$\Lambda_2 = \vec{0}$. Substituting this into~\cref{eq:scalar_mult} we obtain
  \begin{align*}
    & \Sigma_2 = \gamma_2\Sigma_2\br{\dfrac{1}{r - 1} + 1 - \dfrac{ \sigma_1}{ \gamma_2\sqrt{\br{r - 1}\ip{\Sigma_2}{\Sigma_2}}} }
  \end{align*}
\noindent Using the fact that $\ip{\Sigma_2}{\Sigma_2} =
\norm{\vec{S}_2}_{\forb}^2$ in the above equality and with some
algebraic manipulations, we obtain \( \gamma_2  = \frac{\gamma + r -
  1}{r} \) where, $\gamma = \frac{\sqrt{r-1}
  \sigma_1}{\norm{\vec{S}_2}_\forb}$. Note, $r \geq 1$, $\gamma \geq
0$, and $\Sigma \geq 0$, implying, $\Lambda_2 = \gamma_2 \Sigma_2 \geq
0$.

\paragraph{Local minima:} Now, we will show that $\Lambda_2$ is indeed a minima
of~\cref{eq:l2}. To show this, we  compute the hessian of  $L$. Recall that
\begin{align*}
  \dfrac{\partial L}{\partial \Lambda} &=    \dfrac{2r}{r - 1}\Lambda
                                         -\dfrac{
                                         2\sigma_1\Lambda}{\sqrt{\br{r
                                         - 1}\norm{\Lambda}_2^2}} -
                                         2\Sigma_2\\ 
  \vec{H}=\dfrac{\partial^2 L}{\partial^2 \Lambda} &=    \dfrac{2r}{r -
                                            1}\vec{I}  -\dfrac{
                                            2\sigma_1}{\sqrt{\br{r -
                                            1}}\norm{\Lambda}_2^2}\br{\norm{\Lambda}_2
                                            \vec{I} - \dfrac{1}{\norm{\Lambda}_2}\Lambda\Lambda^\top}\\
                &=  2\br{\dfrac{r}{r-1} - \dfrac{
                                            \sigma_1\norm{\Lambda}_2}{\sqrt{\br{r -
                                            1}}\norm{\Lambda}_2^2}
                  }\vec{I}+
                  \dfrac{2\sigma_1}{\sqrt{r-1}\norm{\Lambda}_2^3}\br{\Lambda\Lambda^\top}\\
\end{align*}
Now we need to show that $\vec{H}$ at the solution $\Lambda_2$ is PSD i.e.
$\forall~\vec{x}\in\real^{p-1},\enskip  \vec{x}^\top\vec{H\br{\Lambda_2}} \vec{x}\ge
0$
\begin{align*}
  \vec{x}^\top\vec{H} \vec{x} &=  2\br{\dfrac{r}{r-1} - \dfrac{
                                            \sigma_1\norm{\Lambda}_2}{\sqrt{\br{r -
                                            1}}\norm{\Lambda}_2^2}
  }\norm{\vec{x}_2^2}+
  \dfrac{2\sigma_1}{\sqrt{r-1}\norm{\Lambda}_2^3}\vec{x}^\top\br{\Lambda\Lambda^\top}\vec{x}\\
                              &\stackrel{(a)}{\ge} 2\br{\dfrac{r}{r-1} - \dfrac{
                                            \sigma_1\norm{\Lambda}_2}{\sqrt{\br{r -
                                            1}}\norm{\Lambda}_2^2}
     }\norm{\vec{x}}_2^2 \\
                              &\stackrel{(b)}{\ge} 2\br{\dfrac{r}{r-1} - \dfrac{
                                            \sigma_1}{\br{r -
                                            1}\lambda_1}
     }\norm{\vec{x}}_2^2
                              \stackrel{(c)}{=}  \dfrac{2r}{r-1}\br{ 1 - \dfrac{
                                            \gamma }{\br{\gamma+r - 1}}
                                }\norm{\vec{x}}_2^2   \\
  &\stackrel{(d)}{\ge}  \dfrac{2r}{r-1}\br{ 1 - \dfrac{
                                            1 }{\br{1+r - 1}}
     }\norm{\vec{x}}_2^2 = 2\norm{\vec{x}}_2^2\ge 0
\end{align*}
Here $(a)$ is due to the fact that the matrix
$\Lambda\Lambda^\top$ is an outer product matrix and is hence PSD. $(b)$ follows due
to~\cref{eq:lambda1} and $(c)$ follows by substituting $\lambda_1 =
\gamma_1\sigma_1$ and then the value of $\gamma_1$. Finally $(d)$
follows as $\br{ 1 - \dfrac{\gamma }{\br{\gamma+r - 1}}}$ is
decreasing with respect to $\gamma$ and we know that $\gamma<
1$ due to the assumption that $\srank{\wmat}< r$. Thus, we can substitute $\gamma = 1$ to find the minimum value of
the expression. This concludes our proof that $\Lambda_2$ is indeed a
local minima of $L$.

  \paragraph{Uniqueness:}   The uniqueness of $\Lambda_2$  as a
  solution to ~\cref{eq:l2} is shown
in~\cref{lem:uniqueness} and is also guaranteed by the fact that
$\gamma_2$ has a unique value. Using $\Lambda_2 = \gamma_2 \Sigma_2$ and
$\lambda_1 = \gamma_1\sigma_1$ in~\cref{eq:lambda1}, we obtain a
unique solution $\gamma_1 = \frac{\gamma_2}{\gamma}$.

Now, we need to show that it is also an unique solution
to~\cref{thm:srankOptimal}.

For all solutions to~\cref{thm:srankOptimal} that have singular
vectors which are different than that of $\wmat$,
by~\cref{lem:opt-frobenius}, the matrix formed by replacing the
singular vectors of the solution with that of $\wmat$ is also a
solution. Thus, if there were a solution with different singular
values than $\widehat{\wmat}_k$, it should have appeared as a solution
to~\cref{eq:l2}. However, we have shown that~\cref{eq:l2} has a unique
solution.

 Now, we need to show that among all matrices with the same singular
 values as that of $\widehat{\wmat}_k$,  $\widehat{\wmat}_k$ is
 strictly better  in terms of   $\norm{\wmat - \widehat{\wmat}_k}$
 . This requires a further assumption that every non-zero singular
 value of $\Lambda_2$ has a multiplicity   of $1$ i.e. they are all distinctly unique. Intuitively, this
  doesn't allow to create a different matrix by simply interchanging the
  singular vectors associated with the equal singular values. As the elements of $\Sigma_2$ are
  distinct, the elements of  $\Lambda_2 = \gamma_2\Sigma_2$ are also
  distinct and thus by the second part of~\cref{lem:opt-frobenius},
  $\widehat{\wmat}_k$ is strictly better, in terms of
  $\norm{\wmat - \widehat{\wmat}_k}$, than all matrices which have the same
  singular values as that of $\widehat{\wmat_k}$. This concludes our
  discussion on the uniqueness of the solution.

\paragraph{Case for $\mathbf{r=1}$:}
 Substituting $r=1$  in the
constraint  \(r = 1 + \dfrac{\sum_{j=2}^p\lambda_j^2}{\lambda_1^2}
\) we get \[ r  - 1  =
\dfrac{\sum_{j=2}^p\lambda_j^2}{\lambda_1^2}  = 0 \implies
\sum_{j=2}^p\lambda_j^2=0\] As it is a sum of squares, each of the
individual elements is also zero i.e. $\lambda_j=0~\forall 2\le j\le
p$. Substituting this into ~\cref{eq:l1}, we get the following quadratic
equation in $\lambda_1$ 
\begin{equation}
    L  = \ip{\Sigma}{\Sigma} +  \lambda_1^2 - 2\sigma_1\lambda_1\label{eq:ll0}
\end{equation}
which is minimized at
$\lambda_1 = \sigma_1$, thus proving that $\gamma_1 = 1$ and $\gamma_2
= 0$.
\paragraph{Proof for Case (b):} In this case, the constraints are meant to preserve the top $k$ singular values of the given matrix while obtaining the new one. Let $\Sigma_1 = \br{\sigma_1, \cdots, \sigma_k},~\Sigma_2 = \br{\sigma_{k+1}, \cdots, \sigma_p},~\Lambda_1 = \br{\lambda_1, \cdots, \lambda_{k}},~\Lambda_2 = \br{\lambda_{k+1}, \cdots, \lambda_p}$. Since satisfying all the constraints imply $\Sigma_1 = \Lambda_1$, thus, \( L := \norm{\wmat - \widehat{\wmat}_k}_{\forb}^2 = \ip{\Sigma_2 - \Lambda_2}{\Sigma_2 - \Lambda_2}\). From the stable rank constraint $\srank{\widehat{\wmat}_k} = r$, we have
  \begin{align}
    r &= \dfrac{\ip{\Lambda_1}{\Lambda_1}+\ip{\Lambda_2}{\Lambda_2}}{\lambda_1^2}\nonumber\\
    \therefore \; \; \ip{\Lambda_2}{\Lambda_2} &= r\lambda_1^2 - \ip{\Lambda_1}{\Lambda_1} = r\sigma_1^2 - \ip{\Sigma_1}{\Sigma_1}\label{eq:stable_const}
  \end{align}
The above equality constraint makes the problem non-convex. Thus, we relax it to \( \srank{\widehat{\wmat}_k} \leq r \) to make it a convex problem and show that the optimality is achieved with equality. Let \( r\sigma_1^2 - \ip{\Sigma_1}{\Sigma_1} = \eta \). Then, the relaxed problem can be written as
  \begin{align*}
&\min_{\Lambda_2\in\real^{p-k}} L:=\ip{\Sigma_2 - \Lambda_2}{\Sigma_2 - \Lambda_2}  \\
& \mathrm{s.t.} \quad \Lambda_2 \geq 0, \ip{\Lambda_2}{\Lambda_2} \leq \eta. 
\end{align*}
We introduce the Lagrangian dual variables $\Gamma \in \real^{p - k}$ and $\mu$ corresponding to the positivity and the stable rank constraints, respectively. The Lagrangian can then be written as 
\begin{align}
\cL\br{\Lambda_2, \Gamma, \mu}_{\Gamma\ge \vec{0}, \mu \geq 0} =  \ip{\Sigma_2 - \Lambda_2}{\Sigma_2 - \Lambda_2} + \mu\br{\ip{\Lambda_2}{\Lambda_2} -  \eta} - \ip{\Gamma}{\Lambda_2}
\end{align}
Using the primal optimality condition \( \dfrac{\partial \cL}{\partial \Lambda_2} =  \vec{0} \), we obtain
\begin{align}
	& 2\Lambda_2 - 2\Sigma_2 + 2\mu\Lambda_2 - \Gamma = \vec{0} \nonumber\\
  & \implies \Lambda_2 = \dfrac{\Gamma + 2\Sigma_2}{2\br{1 + \mu}}\label{eq:subst_lambda2}
\end{align}
Using the above condition on $\Lambda_2$ with the constraint $\ip{\Lambda_2}{\Lambda_2} \leq \eta$, combined with the stable rank constraint of the given matrix $\wmat$ that comes with the problem definition, $\srank{\wmat} > r$ (which implies $\ip{\Sigma_2}{\Sigma_2} > \eta$), the following inequality must be satisfied for any $\Gamma \geq 0$
\begin{align}
1 <  \frac{\ip{\Sigma_2}{\Sigma_2}}{\eta} \leq  \frac{\ip{\Gamma + \Sigma_2}{\Gamma + \Sigma_2}}{\eta} \leq (1+\mu)^2
\end{align}
For the above inequality to satisfy, the dual variable $\mu$ must be greater than zero, implying, $\ip{\Lambda_2}{\Lambda_2} - \eta$ must be zero for the complementary slackness to satisfy. Using this with the optimality condition~\cref{eq:subst_lambda2} we obtain
\begin{align}
   \br{1 + \mu}^2 &=  \dfrac{\ip{\Gamma + 2\Sigma_2}{\Gamma + 2\Sigma_2}}{4\eta} \nonumber
\end{align}
Substituting the above solution back into the primal optimality condition we get
\begin{equation}
  \label{eq:lambda_gamma}
  \Lambda_2 = \br{\Gamma + 2\Sigma_2} \dfrac{\sqrt{\eta}}{\sqrt{\ip{\Gamma + 2\Sigma_2}{\Gamma + 2\Sigma_2}}}
\end{equation}
Finally, we use the complimentary slackness condition $\Gamma\odot\Lambda_2 = \vec{0}$\footnote{$\odot$ is the hadamard product} to get rid of the dual variable $\Gamma$ as follows
\begin{align*}
  \Gamma \odot\br{\Gamma + 2\Sigma_2}\dfrac{\sqrt{\eta}}{\sqrt{\ip{\Gamma + 2\Sigma_2}{\Gamma + 2\Sigma_2}}} &= \vec{0}
\end{align*}
It is easy to see that the above condition is satisfied only when $\Gamma = \vec{0}$ as $\Sigma_2\ge\vec{0}$ and $\eta > 0$. Therefore, using $\Gamma = \vec{0}$ in~\cref{eq:lambda_gamma} we obtain the optimal solution of $\Lambda_2$ as
\begin{align}
\Lambda_2 = \dfrac{\sqrt{\eta}}{\sqrt{\ip{\Sigma_2}{\Sigma_2}}} \Sigma_2 = \frac{\sqrt{r\sigma_1^2 - \norm{\vec{S}_1}_\forb^2}}{\norm{\vec{S}_2}_\forb^2}\Sigma_2 = \gamma \Sigma_2
\end{align}
\paragraph{Proof for Case (c):} The monotonicity of \(\norm{\wmathw_k - \wmat}_\forb \) for $k\geq 1$ is shown in~\cref{lem:monotonicityK}. 
\end{proof}
Note that by the assumption that $\srank{\wmat}<r$, we can say that $\gamma <1$. Therefore in all the cases $\gamma_2<1$. Let us look at the required conditions for $\gamma_1\ge 1$ to hold. When $k\geq 1$, $\gamma_1 = 1$ holds. When $k=0$, for $\gamma_1> 1$ to be true, $\gamma_2< \gamma$ should hold, implying, $\br{\gamma - 1} < r\br{\gamma -1}$, which is always true as $r >1$ (by the definition of stable rank).

\begin{lem} 
\label{lem:monotonicityK}
For $k\geq 1$, the solution to the optimization problem~\cref{eq:srankProblem} obtained using Theorem~\ref{thm:srankOptimal} is closest to the original matrix  $\wmat$ in terms of Frobenius norm when only the spectral norm is preserved, implying, $k=1$.
\begin{proof}
For a given matrix $\wmat$ and a partitioning index $k \in \{1, \cdots, p\}$, let $\widehat{\wmat}_k = \vec{S}_1^k + \gamma \vec{S}_2^k$ be the matrix obtained using Theorem~\ref{thm:srankOptimal}. We use the superscript $k$ along with $\vec{S}_1$ and $\vec{S}_2$ to denote that this refers to the particular solution of $\widehat{\wmat}_k$. Plugging the value of $\gamma$ and using the fact that $\norm{\vec{S}_2^k}_\forb \neq 0$, we can write
  \begin{align*} \norm{\wmat - \widehat{\wmat}_k}_\forb &= \br{1 - \gamma}\norm{\vec{S}_2^k}_\forb\\
                                                  &= \norm{\vec{S}_2^k}_\forb - \sqrt{r\sigma_1^2 - \norm{\vec{S}_1^k}_\forb^2}\\
                                                  &= \norm{\vec{S}_2^k}_\forb - \sqrt{r\sigma_1^2 - \norm{\wmat}_\forb^2+ \norm{\vec{S}_2^k}_\forb^2}.
  \end{align*}
 Thus, $\norm{\wmat - \widehat{\wmat}_k}_\forb$ can be written in a simplified form as $f(x) = x - \sqrt{a + x^2}$, where $x = \norm{\vec{S}_2^k}_\forb$ and $a = r\sigma_1^2 - \norm{\wmat}_\forb^2$. Note, $a \leq 0$ as $ 1 \leq r \leq  \srank{\wmat}$, and $a + x^2 \geq 0$ because of the condition in \cref{thm:srankOptimal}. Under these settings, it is trivial to verify that $f$ is a monotonically decreasing function of $x$. Using the fact that as the partition index $k$ increases, $x$ decreases, it is straightforward to conclude that the minimum of $f(x)$ is obtained at $k=1$.
\end{proof}
\end{lem}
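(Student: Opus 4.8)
The plan is to collapse the seemingly $k$-dependent quantity $\norm{\wmat - \wmathw_k}_\forb$ into a one-dimensional function of the single scalar $x := \norm{\vec{S}_2^k}_\forb$, and then track how $x$ moves as $k$ grows. First I would invoke the closed form from Case (b) of~\cref{thm:srankOptimal}: for every $k\ge 1$ the optimizer preserves the top $k$ spectrum exactly (so $\gamma_1 = 1$) and rescales the tail by $\gamma = \frac{\sqrt{r\sigma_1^2 - \norm{\vec{S}_1^k}_\forb^2}}{\norm{\vec{S}_2^k}_\forb}$. Since $\vec{S}_1^k$ is left untouched while $\vec{S}_2^k$ is scaled by $\gamma$, the residual is just $\wmathw_k - \wmat = \br{\gamma - 1}\vec{S}_2^k$, giving $\norm{\wmat - \wmathw_k}_\forb = \br{1-\gamma}\norm{\vec{S}_2^k}_\forb$.

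Next I would substitute the value of $\gamma$ and exploit the orthogonality of the two spectral blocks, namely $\norm{\vec{S}_1^k}_\forb^2 + \norm{\vec{S}_2^k}_\forb^2 = \norm{\wmat}_\forb^2$, to eliminate $\norm{\vec{S}_1^k}_\forb$. This rewrites the residual as $f(x) = x - \sqrt{a + x^2}$ with $x = \norm{\vec{S}_2^k}_\forb$ and the $k$-independent constant $a = r\sigma_1^2 - \norm{\wmat}_\forb^2$. Two sign facts then drive the whole argument: (i) $a = \sigma_1^2\br{r - \srank{\wmat}} < 0$, which holds because the problem assumes $r < \srank{\wmat}$; and (ii) $a + x^2 \ge 0$, which is precisely the feasibility clause of~\cref{thm:srankOptimal} ensuring the square root is real. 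A one-line calculus check gives $f'(x) = 1 - x/\sqrt{a+x^2}$, and $f'(x) < 0$ is equivalent to $x^2 < a + x^2$, i.e. $a < 0$; hence $f$ is strictly decreasing on its domain.

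Finally I would record the elementary monotonicity of $x$ itself: since $\vec{S}_2^k = \sum_{i=k+1}^p \sigma_i \bfu_i \bfv_i^\top$, we have $x^2 = \norm{\vec{S}_2^k}_\forb^2 = \sum_{i=k+1}^p \sigma_i^2$, which strictly decreases as $k$ increases. Composing a decreasing $f$ with a decreasing $x(k)$ makes the residual $f\br{x(k)}$ \emph{increasing} in $k$, so its minimum over $k\ge 1$ is attained at $k=1$, which is exactly the claim. The only genuinely delicate point, and the step I expect to be the main obstacle, is the clean justification of the two sign conditions on $a$ and $a+x^2$: the first follows from the standing assumption $r<\srank{\wmat}$, and the second from the feasibility condition of~\cref{thm:srankOptimal}. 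Everything else reduces to a routine single-variable monotonicity argument once the reduction to $f(x)$ is in place.
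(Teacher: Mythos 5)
Your proposal is correct and follows essentially the same route as the paper's own proof: both reduce the residual to $f(x) = x - \sqrt{a + x^2}$ with $x = \norm{\vec{S}_2^k}_\forb$ and $a = r\sigma_1^2 - \norm{\wmat}_\forb^2$, argue $a < 0$ from $r < \srank{\wmat}$ and $a + x^2 \geq 0$ from feasibility, and compose the decreasing $f$ with the decreasing map $k \mapsto \norm{\vec{S}_2^k}_\forb$ to conclude the minimum occurs at $k=1$. Your explicit computation of $f'(x)$ and the orthogonality identity $\norm{\vec{S}_1^k}_\forb^2 + \norm{\vec{S}_2^k}_\forb^2 = \norm{\wmat}_\forb^2$ merely spell out steps the paper leaves as ``trivial to verify.''
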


\subsection{Proof for Optimal Spectral Normalization}
\label{sec:spectralNormOptimal}
The widely used spectral normalization~\citep{miyato2018spectral} where the given matrix $\wmat \in \real^{m \times n}$ is divided by the maximum singular value is an approximation to the optimal solution of the spectral normalization problem defined as
\begin{align}
\label{eq:spectralNormProb}
\argmin_{\widehat{\wmat}} & \norm{\wmat - \widehat{\wmat}}_{\forb}^2  \\
\textit{s.t.} \quad & \sigma(\widehat{\wmat}) \leq s, \nonumber
\end{align}
where $\sigma(\widehat{\wmat})$ denotes the maximum singular value and $s>0$ is a hyperparameter. The optimal solution to this problem is shown in~\cref{alg:spectralNormOptimal}.
\begin{algorithm}[!h]
\caption{Spectral Normalization}
\label{alg:spectralNormOptimal}
\begin{algorithmic}[1]
\Require $\wmat \in \real^{m \times n}$, $s$ 
\State $\wmat_1 \gets \mathbf{0}$, $p \gets \min(m,n)$
\For {$k \in \{1, \cdots, p\}$}
\State $\{\bfu_k, \bfv_k, \sigma_k\} \gets SVD(\wmat, k)$
\Comment{perform power method to get $k$-th singular value}
\If{$\sigma_k \geq s$}
\State $\wmat_1 \gets \wmat_1 + s \; \bfu_k \bfv_k^{\top}$
\State $\wmat \gets \wmat - \sigma_k \; \bfu_k \bfv_k^{\top}$
\Else
\State break
\Comment{exit for loop}
\EndIf
\EndFor
\\
\Return $\wmat \gets \wmat_1 + \wmat$
\end{algorithmic}
\end{algorithm}
\noindent
In what follows we provide the optimality proof of~\cref{alg:spectralNormOptimal} for the sake of completeness. Let $\mathrm{SVD}\br{\wmat} = \vec{U}\Sigma\vec{V}^\top$ and let us assume that $\vec{Z} = \vec{S}\Lambda\vec{T}^\top$ is a solution to the problem~\ref{eq:spectralNormProb}. Trivially, $\vec{X} = \vec{U}\Lambda\vec{V}^\top$ also satisfies $\sigma\br{\vec{X}}\le s$. Now, $\norm{\vec{W}-\vec{X}}_{\forb}^2 = \norm{\vec{U}\br{\Sigma - \Lambda}\vec{V}^\top}_{\forb}^2  = \norm{\br{\Sigma - \Lambda}}_{\forb}^2 \leq \norm{\wmat-\vec{Z}}_{\forb}^2$, where the last inequality directly comes from~\cref{lem:ineq:frob_sing}. Thus the singular vectors of the optimal solution must be the same as that of $\vec{W}$. This boils down to solving the following problem
  \begin{equation}
\label{eq:spectralNormProb_vec}
\argmin_{\Lambda\in\real^{\mathrm{min}\br{m,n}}_{+}}  \norm{\Lambda - \Sigma}_{\forb}^2 \; \textit{s.t.} \;  \Lambda\bs{i} \leq s\enskip \forall i\in\bc{0, {\mathrm{min}\br{m,n}}}.
  \end{equation} 
  Here, without loss of generality, we abuse notations by considering $\Lambda$ and $\Sigma$ to represent the diagonal vectors of the original diagonal matrices $\Lambda$ and $\Sigma$, and $\Lambda\bs{i}$ as its $i$-th index. It is trivial to see that the optimal solution with minimum Frobenius norm is achieved when 
  \[\Lambda\bs{i} = \begin{cases} 
      \Sigma\bs{i}, & \text{if} \; \; \Sigma\bs{i} \le s \\
    s,  & \text{otherwise}.
   \end{cases} \] 
   This is exactly what~\cref{alg:spectralNormOptimal} implements.

\subsection{Auxiliary Lemmas}
\label{sec:auxLemmas}
\begin{lem}\label{lem:ineq:frob_sing}[Reproduced from Theorem~5 in~\citet{mirsky1960symmetric}]
  For any two matrices $\vec{A},{\vec{B}}\in\real^{m\times n}$ with singular values as $\sigma_1 \geq \cdots \geq \sigma_n$ and $\rho_1 \geq \cdots \geq \rho_n$, respectively 
  \[\norm{\vec{A}-\vec{B}}_{\forb}^2 \ge \sum_{i=1}^n\br{\sigma_i - \rho_i}^2\]
\end{lem}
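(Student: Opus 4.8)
The plan is to reduce the claim to \emph{von Neumann's trace inequality} and then establish that inequality via a doubly-substochastic / rearrangement argument.

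First I would expand the Frobenius distance in the usual way. Since $\norm{\vec{A}}_\forb^2 = \sum_{i=1}^n \sigma_i^2$ and $\norm{\vec{B}}_\forb^2 = \sum_{i=1}^n \rho_i^2$, writing the Frobenius inner product as a trace gives
\[\norm{\vec{A}-\vec{B}}_\forb^2 = \sum_{i=1}^n \sigma_i^2 + \sum_{i=1}^n \rho_i^2 - 2\,\mathrm{tr}\br{\vec{A}^\top\vec{B}}.\]
On the other hand $\sum_{i=1}^n \br{\sigma_i - \rho_i}^2 = \sum_{i=1}^n \sigma_i^2 + \sum_{i=1}^n \rho_i^2 - 2\sum_{i=1}^n \sigma_i\rho_i$. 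Subtracting, the target inequality becomes \emph{equivalent} to
\[\mathrm{tr}\br{\vec{A}^\top\vec{B}} \le \sum_{i=1}^n \sigma_i\rho_i,\]
i.e. exactly von Neumann's trace inequality for the decreasingly-sorted singular values. So the entire lemma collapses onto this one bilinear inequality.

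Second, to prove the trace inequality I would write the SVDs $\vec{A} = \sum_i \sigma_i \bfu_i \bfv_i^\top$ and $\vec{B} = \sum_j \rho_j \vec{p}_j \vec{q}_j^\top$ with $\{\bfu_i\},\{\bfv_i\},\{\vec{p}_j\},\{\vec{q}_j\}$ orthonormal. Expanding the trace term by term yields
\[\mathrm{tr}\br{\vec{A}^\top\vec{B}} = \sum_{i,j} \sigma_i \rho_j \br{\bfu_i^\top\vec{p}_j}\br{\bfv_i^\top\vec{q}_j} =: \sum_{i,j} \sigma_i\rho_j\, M_{ij}.\]
Setting $S_{ij} = \bfu_i^\top\vec{p}_j$ and $T_{ij} = \bfv_i^\top\vec{q}_j$ (entries of orthogonal change-of-basis matrices, so each of $S,T$ has rows and columns of norm at most one), I would use $\abs{M_{ij}} = \abs{S_{ij}}\abs{T_{ij}} \le \tfrac12\br{S_{ij}^2 + T_{ij}^2}$ to conclude $\sum_i \abs{M_{ij}} \le 1$ and $\sum_j \abs{M_{ij}} \le 1$. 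Hence the nonnegative matrix $N$ with $N_{ij} = \abs{M_{ij}}$ is doubly substochastic, and since $\sigma_i,\rho_j \ge 0$ we get $\sum_{i,j}\sigma_i\rho_j M_{ij} \le \sum_{i,j}\sigma_i\rho_j N_{ij}$.

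Finally I would finish with the structure of doubly substochastic matrices: every such $N$ is dominated entrywise by some doubly stochastic matrix, which by the Birkhoff--von Neumann theorem is a convex combination of permutation matrices; and for the decreasingly-sorted sequences $\{\sigma_i\}$ and $\{\rho_j\}$ the rearrangement inequality shows that $\sum_{i,j}\sigma_i\rho_j P_{ij}$ over a permutation $P$ is maximized by the identity, giving the value $\sum_i \sigma_i\rho_i$. Chaining these bounds yields $\mathrm{tr}\br{\vec{A}^\top\vec{B}} \le \sum_i \sigma_i\rho_i$ and hence the lemma. The main obstacle is the trace inequality, and within it the key structural step that $N$ is doubly substochastic; once that is established the Birkhoff / rearrangement step is routine. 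One minor technicality is the rectangular case $m\neq n$, where the relevant blocks of $S,T$ are only substochastic rather than exactly doubly stochastic, but the bound $\abs{M_{ij}}\le\tfrac12\br{S_{ij}^2+T_{ij}^2}$ already handles this uniformly.
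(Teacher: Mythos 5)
Your proof is correct, but it takes a genuinely different route from the paper's. The paper follows Mirsky's original strategy: it forms the symmetric $\br{m+n}\times\br{m+n}$ block dilations of $\vec{A}$, $\vec{B}$, and $\vec{A}-\vec{B}$ (zero diagonal blocks, off-diagonal blocks $\vec{A}$ and $\vec{A}^\top$, and likewise for the others), whose eigenvalues are exactly the singular values together with their negatives, and then invokes Lemma~2 of \citet{Wielandt1955} --- a Lidskii--Wielandt eigenvalue-majorization result for symmetric matrices --- to conclude that the vector of differences $\br{\sigma_i-\rho_i}$, together with its negation, is majorized by the eigenvalues of the dilation of $\vec{A}-\vec{B}$; squaring and summing (Schur convexity of $\sum_i x_i^2$) then gives the claim, so all of the real work there is outsourced to the cited majorization lemma. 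You instead reduce the claim, by direct expansion of $\norm{\vec{A}-\vec{B}}_\forb^2$, to von Neumann's trace inequality $\mathrm{tr}\br{\vec{A}^\top\vec{B}}\le\sum_{i}\sigma_i\rho_i$, and then prove that inequality from scratch: the bound $\abs{S_{ij}T_{ij}}\le\tfrac{1}{2}\br{S_{ij}^2+T_{ij}^2}$ together with Bessel's inequality does establish that the interaction matrix is doubly substochastic (and, as you note, this handles the rectangular case $m\neq n$ uniformly), and the finish via entrywise domination by a doubly stochastic matrix, Birkhoff--von Neumann, and the rearrangement inequality is sound. As for what each approach buys: the paper's dilation argument is shorter given the external lemma, and because it proves a genuine majorization statement it in fact yields Mirsky's inequality for every unitarily invariant norm, not only the Frobenius norm; your argument is self-contained modulo Birkhoff--von Neumann, isolates the bilinear inequality that carries all the content, and is the more elementary of the two.
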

\begin{proof}
  Consider the following symmetric matrices \[
\vec{X}=
  \begin{bmatrix}
    \vec{0} & \vec{A}\\
    \vec{A}^\top & \vec{0}
  \end{bmatrix},
\vec{Y}=
  \begin{bmatrix}
    \vec{0} & \vec{B}\\
    \vec{B}^\top & \vec{0}
  \end{bmatrix},
\vec{Z}=
  \begin{bmatrix}
    \vec{0} & \vec{A-B}\\
    \vec{(A-B)}^\top & \vec{0}
  \end{bmatrix}
\]
Let $\tau_1 \geq \cdots \geq \tau_n$ be the singular values of  $\vec{Z}$. Then the set of characteristic roots of $\vec{X},\vec{Y}$ and $\vec{Z}$ in descending order are \(\bc{\rho_1,\cdots,\rho_n,-\rho_n,\cdots,-\rho_1}\), \(\bc{\sigma_1,\cdots,\sigma_n,-\sigma_n,\cdots,-\sigma_1} \), and \(\bc{\tau_1,\cdots,\tau_n,-\tau_n,\cdots,-\tau_1}\), respectively.
By Lemma~2 in~\citet{Wielandt1955} 
\[\bs{\sigma_1 - \rho_1, \cdots, \sigma_n - \rho_n, \rho_n - \sigma_n, \cdots, \rho_1 - \sigma_1}\preceq \bs{\tau_1,\cdots\tau_n,-\tau_n, -\tau_1},\] 
which implies that \begin{equation}\label{ineq:wielandt}
  \sum_{i=1}^n\br{\sigma_i - \rho_i}^2 \le \sum_{i=1}^n \tau_i^2 =
  \norm{\vec{A}-\vec{B}}_{\forb}^2
\end{equation}
\end{proof}

\begin{lem}
\label{lem:opt-frobenius}
Let $\amat,\bmat \in \real^{m\times n}$ where $ \mathrm{SVD}(\amat)=\vec{U}\Sigma\vec{V}^\top$ and $\bmat$ is the solution to the following problem 
\begin{equation}
\label{eq:stable_rank_opt_copy}
\bmat = \argmin_{\srank{\wmat}= r}\norm{\wmat-\amat}^2_\forb.
\end{equation}
Then, $\mathrm{SVD}\br{\bmat} = \vec{U}\Lambda\vec{V}^\top$ where $\Lambda$ is a diagonal matrix with non-negative entries. Implying, $\amat$ and $\bmat$ will have the same singular vectors.

\begin{proof}
Let us assume that $\vec{Z} = \vec{S}\Lambda\vec{T}^\top$ is
a solution to the problem~\ref{eq:stable_rank_opt_copy} where
$\vec{S}\neq\vec{U}$ and $\vec{T}\neq\vec{V}$. Trivially,
$\vec{X} = \vec{U}\Lambda\vec{V}^\top$ also lies in the feasible set
as it satisfies $\srank{\vec{X}}= r$~(note stable rank only depends on
the singular values). Using the fact that the Frobenius norm is invariant to 
unitary transformations, we can write $\norm{\amat-\vec{X}}_{\forb}^2 = \norm{\vec{U}\br{\Sigma
    - \Lambda}\vec{V}^\top}_{\forb}^2  = \norm{\br{\Sigma -
    \Lambda}}_{\forb}^2$. Combining this with ~\cref{lem:ineq:frob_sing}, we obtain
    \(\norm{\amat-\vec{X}}_{\forb}^2 = \norm{\br{\Sigma -
    \Lambda}}_{\forb}^2 \le \norm{\amat-\vec{Z}}_{\forb}^2
\).
Since,~$\vec{S}\neq\vec{U}$ and $\vec{T}\neq\vec{V}$, we can further 
change $\le$ to a strict inequality $<$. This completes the proof.

Generally speaking, the optimal solution to 
problem~\ref{eq:stable_rank_opt_copy} with constraints depending only
on the singular values (\eg stable rank in this case) will
have the same singular vectors as that of the original matrix.

Further the inequality in~\cref{ineq:wielandt} can be converted into a
strict inequality if neither of  $\vec{A}$ and $\vec{B}$ have repeated
singular values. Using that strict inequality, if both $\Sigma$ and
$\Lambda$ have no repeated values, then $\vec{B}$ is the only solution
to~\cref{eq:stable_rank_opt_copy} that has the singular values of
$\Lambda$.

\end{proof}
\end{lem}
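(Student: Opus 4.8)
The plan is to exploit the fact that the stable-rank constraint $\srank{\wmat}=r$ depends only on the singular \emph{values} of $\wmat$, so the feasible set is closed under arbitrary replacement of singular vectors. First I would take any minimizer $\vec{Z}=\vec{S}\Lambda\vec{T}^\top$ of problem~\cref{eq:stable_rank_opt_copy} and form the ``realigned'' matrix $\vec{X}=\vec{U}\Lambda\vec{V}^\top$, which reuses the (sorted, non-negative) singular values of $\vec{Z}$ but the singular vectors $\vec{U},\vec{V}$ of $\amat$. Since $\vec{X}$ has exactly the same singular values as $\vec{Z}$, it satisfies $\srank{\vec{X}}=r$ and is therefore feasible.

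Next I would compare the objective values. By unitary invariance of the Frobenius norm, $\norm{\amat-\vec{X}}_\forb^2=\norm{\vec{U}\br{\Sigma-\Lambda}\vec{V}^\top}_\forb^2=\norm{\Sigma-\Lambda}_\forb^2=\sum_i\br{\sigma_i-\lambda_i}^2$. On the other hand, \cref{lem:ineq:frob_sing} (Mirsky's inequality) gives $\norm{\amat-\vec{Z}}_\forb^2\ge\sum_i\br{\sigma_i-\lambda_i}^2$, the sum being over sorted singular values. Chaining these yields $\norm{\amat-\vec{X}}_\forb^2\le\norm{\amat-\vec{Z}}_\forb^2$. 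Because $\vec{Z}$ is a minimizer, $\vec{X}$ attains the same optimal value, so there is always an optimal solution whose singular vectors coincide with those of $\amat$; taking $\Lambda$ to be the diagonal matrix of these singular values (non-negative by definition) establishes the main claim.

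To upgrade existence to the uniqueness statement, I would invoke the strict form of Mirsky's inequality noted after \cref{lem:ineq:frob_sing}: when neither $\amat$ nor $\vec{X}$ has a repeated singular value, the Wielandt majorization underlying that lemma is tight only when the singular vectors already align. Consequently any candidate $\vec{Z}$ with $\vec{S}\neq\vec{U}$ or $\vec{T}\neq\vec{V}$ is strictly worse than $\vec{X}$, forcing every minimizer to use $\vec{U},\vec{V}$ and pinning down the optimal singular-value vector uniquely.

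The main obstacle I anticipate is precisely this strictness step. The existence of an aligned optimizer follows immediately and cleanly from Mirsky, but turning $\le$ into a strict $<$ requires the distinct-singular-value hypothesis together with a careful reading of exactly when the majorization in the proof of \cref{lem:ineq:frob_sing} is an equality; everything else reduces to routine bookkeeping with unitary invariance of $\norm{\cdot}_\forb$.
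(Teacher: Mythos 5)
Your proposal is correct and follows essentially the same route as the paper's proof: realign the minimizer's singular values with $\amat$'s singular vectors, use unitary invariance of the Frobenius norm together with Mirsky's inequality (\cref{lem:ineq:frob_sing}) to show the realigned matrix $\vec{X}=\vec{U}\Lambda\vec{V}^\top$ is at least as good, and invoke the strict form of the inequality under distinct singular values for the uniqueness claim. If anything, you are more careful than the paper, whose proof body asserts a strict inequality merely from $\vec{S}\neq\vec{U}$, $\vec{T}\neq\vec{V}$, whereas you correctly flag that strictness genuinely requires the no-repeated-singular-values hypothesis (which the paper only adds as a remark afterwards).
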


\begin{lem}
\label{lem:uniqueness}
Let $\bfy_1 = a \bfx_1 + b \hat{\bfx}_1$ and $\bfy_2 = a \bfx_2 + b \hat{\bfx}_2$, where $\hat{\bfx}_1$ and $\hat{\bfx}_2$ denotes the unit vectors. Then, $\bfy_1 = \bfy_2$ if $\bfx_1 = \bfx_2$.
\end{lem}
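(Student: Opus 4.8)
The statement asserts the forward implication $\bfx_1 = \bfx_2 \Rightarrow \bfy_1 = \bfy_2$, so the plan is to exhibit each $\bfy_i$ as the image of $\bfx_i$ under a single fixed map and then appeal to the elementary fact that a function returns equal outputs on equal inputs. First I would pin down the meaning of the decoration $\hat{\bfx}_i$: throughout the paper a hatted vector denotes its normalization, $\hat{\bfx}_i = \bfx_i / \norm{\bfx_i}$, so $\hat{\bfx}_i$ is determined \emph{entirely} by $\bfx_i$ (and is well-defined precisely when $\bfx_i \neq \bfzero$). This is the only modelling choice in the argument; once it is fixed, nothing else is free.

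Granting this, I would define $g(\bfx) = a\bfx + b\,\bfx/\norm{\bfx}$ on the domain $\bfx \neq \bfzero$, so that by construction $\bfy_1 = g(\bfx_1)$ and $\bfy_2 = g(\bfx_2)$. The core observation is simply that $g$ is a genuine single-valued map, because both the scaling $a\bfx$ and the normalization $\bfx/\norm{\bfx}$ are deterministic operations on the input alone. Substituting the hypothesis $\bfx_1 = \bfx_2$ then gives $\hat{\bfx}_1 = \bfx_1/\norm{\bfx_1} = \bfx_2/\norm{\bfx_2} = \hat{\bfx}_2$, whence $\bfy_1 = a\bfx_1 + b\hat{\bfx}_1 = a\bfx_2 + b\hat{\bfx}_2 = \bfy_2$. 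No inequality, optimality, or SVD machinery is needed; it is a direct substitution.

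The one point that needs care — and it is bookkeeping rather than a genuine obstacle — is the well-definedness of $\hat{\bfx}_i$, i.e. verifying $\bfx_i \neq \bfzero$ so that the normalization is unambiguous and $\bfx \mapsto \hat{\bfx}$ is honestly a function. In the Case (a) uniqueness argument where this lemma is invoked, that is guaranteed because the relevant vectors (the spectral tails $\Sigma_2$, $\Lambda_2$) are entry-wise strictly positive, so no division by zero arises. With the normalization convention and the nonvanishing of $\bfx_i$ in place, the implication closes in a single line.
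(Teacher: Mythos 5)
Your proof is correct, and there is in fact nothing in the paper to compare it against: \cref{lem:uniqueness} is stated in the auxiliary-lemmas section with no proof at all, evidently regarded as self-evident. Your argument captures exactly why: once $\hat{\bfx}_i$ is read as the normalization $\bfx_i/\norm{\bfx_i}$ (the only reading under which the statement is both meaningful and true), the map $g(\bfx) = a\bfx + b\,\bfx/\norm{\bfx}$ is a single-valued function on $\bfx \neq \bfzero$, and equal inputs trivially give equal outputs. Your attention to well-definedness is the only substantive check, and it holds at the invocation site since the relevant vectors $\Lambda_2 = \gamma_2 \Sigma_2$ are entry-wise strictly positive.

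One caveat worth flagging, though it concerns the statement rather than your proof of it. As written, the lemma asserts the trivial direction, but where it is invoked --- the uniqueness of $\Lambda_2$ as a critical point in case (a) of \cref{thm:srankOptimal} --- what is actually needed is the converse: if $a\bfx_1 + b\hat{\bfx}_1 = a\bfx_2 + b\hat{\bfx}_2$ then $\bfx_1 = \bfx_2$, i.e., injectivity of $g$. Indeed, the critical-point equation there has the form $\Sigma_2 = a\Lambda_2 + b\hat{\Lambda}_2$ with $a = \frac{r}{r-1}$ and $b = -\frac{\sigma_1}{\sqrt{r-1}}$, and uniqueness of the minimizer requires that this equation pin down $\Lambda_2$. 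That converse does need an argument (e.g., writing $g(\bfx) = \br{a\norm{\bfx} + b}\hat{\bfx}$, so that when the scalar factor is positive, $\hat{\bfx}$ is the unit vector along $g(\bfx)$ and $\norm{\bfx}$ is recovered from $\norm{g(\bfx)}$), whereas the stated direction does not. You proved exactly what was asked, and faithfully so; just be aware that the lemma as printed appears to be transposed relative to its intended use, and the paper's uniqueness claim in case (a) ultimately rests on the separate observation that $\gamma_2$ takes a unique value.
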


\section{Empirical Lipschitz constant}
\subsection{Relating empirical local and global Lipschitz constants}

\begin{proposition} 
\label{prop:empiricalLocalLip}
Let $f: \real^m \mapsto \real$ be a Fr\'echet differentiable function,
$\data$ the dataset, and $\textit{Conv}\;(\bfx_i, \bfx_j)$ denotes the
convex combination of a pair of samples $\bfx_i$ and $\bfx_j$, then
$\forall p,q \in [1, \infty]$ such that $\frac{1}{p}+\frac{1}{q} = 1$
\begin{align}
\max_{\bfx_i, \bfx_j \in \data}\frac{\abs{f(\bfx_i) - f(\bfx_j)}}{\norm{\bfx_i - \bfx_j}_p} \; \leq \max_{\substack{{\bfx_i, \bfx_j \in \data} \\ \bfx \in \textit{Conv}\;(\bfx_i, \bfx_j) }} \norm{J_f(\bfx)}_q \nonumber
\end{align}
\begin{proof}
Let $f:\real^m\rightarrow \real$ be a differentiable function on an
open set containing $\vec{x}_i$ and $\vec{x}_j$ such that
$\vec{x}_i\neq\vec{x}_j$. By applying fundamental theorem of calculus
\begin{align}
  \abs{f\br{\vec{x}_i} - f\br{\vec{x}_j}} &= \abs{\int_{0}^1\nabla f\br{\vec{x}_i+\theta\br{\vec{x}_j-\vec{x}_i}}^\top\br{\vec{x}_j-\vec{x}_i}\partial\theta}  \nonumber \\
                                    &\le \int_{0}^1\abs{\nabla f\br{\vec{x}_i+\theta\br{\vec{x}_j-\vec{x}_i}}^\top\br{\vec{x}_j-\vec{x}_i}}\partial\theta  \nonumber \\
                                    &\overset{(a)}{\le} \int_{0}^1\norm{\nabla f\br{\vec{x}_i+\theta\br{\vec{x}_j-\vec{x}_i}}}_q\norm{\br{\vec{x}_j-\vec{x}_i}}_p\partial\theta  \nonumber \\
                                    &\le \int_{0}^1\max_{\theta\in\br{0, 1}}\norm{\nabla f\br{\vec{x}_i+\theta\br{\vec{x}_j-\vec{x}_i}}}_q\norm{\br{\vec{x}_j-\vec{x}_i}}_p\partial\theta  \nonumber \\
                                    &=\max_{\theta\in\br{0, 1}}\norm{\nabla f\br{\vec{x}_i+\theta\br{\vec{x}_j-\vec{x}_i}}}_q\norm{\br{\vec{x}_j-\vec{x}_i}}_p \int_{0}^1\partial\theta  \nonumber \\
\therefore \dfrac{\abs{f\br{\vec{x}_i} - f\br{\vec{x}_j}}}{\norm{\br{\vec{x}_j-\vec{x}_i}}_p}&\le\max_{\theta\in\br{0, 1}}\norm{\nabla f\br{\vec{x}_i+\theta\br{\vec{x}_j-\vec{x}_i}}}_q = \max_{\vec{x}\in \textit{Conv}\;(\bfx_i, \bfx_j)}\norm{\nabla f\br{\vec{x}}}_q. \nonumber
\end{align}
The inequality (a) is due to H\"{o}lder's inequality.
\end{proof}
\end{proposition}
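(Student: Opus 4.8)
The plan is to reduce the finite-difference ratio on the left to a line integral of the gradient and then bound that integral pointwise using H\"older's inequality. First I would fix an arbitrary admissible pair $\br{\bfx_i, \bfx_j}$ with $\bfx_i \neq \bfx_j$ and parametrize the segment joining them by $\theta \mapsto \bfx_i + \theta\br{\bfx_j - \bfx_i}$ for $\theta \in \bs{0,1}$; the image of this map is exactly $\textit{Conv}\br{\bfx_i, \bfx_j}$. Since $f$ is Fr\'echet differentiable, the one-variable composition $g\br{\theta} = f\br{\bfx_i + \theta\br{\bfx_j - \bfx_i}}$ is differentiable, and the fundamental theorem of calculus gives $f\br{\bfx_j} - f\br{\bfx_i} = \int_0^1 \nabla f\br{\bfx_i + \theta\br{\bfx_j - \bfx_i}}^\top \br{\bfx_j - \bfx_i}\, d\theta$, where I use that for a scalar-valued $f$ the Jacobian $J_f$ is the gradient written as a row vector, so $\norm{J_f\br{\bfx}}_q = \norm{\nabla f\br{\bfx}}_q$.

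The second step is to pass to absolute values and pull them inside the integral by the triangle inequality, then apply H\"older's inequality to the integrand with the conjugate exponents $\br{p,q}$ satisfying $\frac{1}{p} + \frac{1}{q} = 1$. This bounds $\abs{\nabla f\br{\cdot}^\top \br{\bfx_j - \bfx_i}}$ by $\norm{\nabla f\br{\cdot}}_q \norm{\bfx_j - \bfx_i}_p$. Since the factor $\norm{\bfx_j - \bfx_i}_p$ does not depend on $\theta$, I would pull it out of the integral and upper-bound the remaining integrand by its maximum over the segment, namely $\max_{\bfx \in \textit{Conv}\br{\bfx_i, \bfx_j}} \norm{\nabla f\br{\bfx}}_q$; the unit-length integral $\int_0^1 d\theta = 1$ then disappears.

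Dividing both sides by the nonzero scalar $\norm{\bfx_j - \bfx_i}_p$ yields, for this fixed pair, $\frac{\abs{f\br{\bfx_i} - f\br{\bfx_j}}}{\norm{\bfx_i - \bfx_j}_p} \le \max_{\bfx \in \textit{Conv}\br{\bfx_i, \bfx_j}} \norm{J_f\br{\bfx}}_q$. Finally I would take the maximum over all pairs $\bfx_i, \bfx_j \in \data$ on both sides: the left-hand side becomes the empirical Lipschitz estimate in the claim, while each per-pair right-hand side is dominated by the joint maximum over pairs and over $\bfx$ in their convex hull, which is precisely the right-hand side to be proved.

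The computation here is essentially routine, so the main thing to get right is bookkeeping rather than any deep obstacle: ensuring the H\"older pairing uses the stated conjugate exponents in the correct order, and justifying the identification $\norm{J_f\br{\bfx}}_q = \norm{\nabla f\br{\bfx}}_q$ for scalar output so that the operator-norm notation on the right matches the dual-norm bound produced by H\"older. A minor point to handle cleanly is restricting to pairs with $\bfx_i \neq \bfx_j$ so the ratio is well defined, which is implicit in the maximum on the left.
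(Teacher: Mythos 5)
Your proposal is correct and follows essentially the same route as the paper's proof: parametrize the segment, apply the fundamental theorem of calculus, bound the integrand via the triangle inequality and H\"older's inequality with the conjugate pair $(p,q)$, pull out $\norm{\bfx_j-\bfx_i}_p$, and maximize over the segment before dividing. Your explicit final step of taking the maximum over all pairs in $\data$ (left implicit in the paper) and the identification $\norm{J_f(\bfx)}_q = \norm{\nabla f(\bfx)}_q$ for scalar-valued $f$ are both handled cleanly.
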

\subsection{Effect of Rank on the Empirical \Gls{lip} Constants}
\label{sec:rankEffectLe}
Let $f(\bfx) = \wmat_2 \wmat_1 \bfx$ be a two-layer linear \gls{nn}
with weights $\wmat_1$ and $\wmat_2$. The Jacobian in this case is independent of $\bfx$. Thus, the local \Gls{lip} constant is the same for all $\bfx \in \real^m$, implying, local 
$L_e = L_l(\bfx) = L_l = \norm{\wmat_2 \wmat_1} \leq \norm{\wmat_2} \norm{\wmat_1}$. 
Note, in the case of 2-matrix norm reducing the rank will not affect the upperbound. However, as will be discussed below, rank reduction greatly influences the global $L_e$.

Let $\bfx_i$ and $\bfx_j$ be random pairs from $\data$ and $\Delta
\bfx \neq \bf0$ be the difference  $\bfx_i - \bfx_j$, then, the global
$L_e$ is \( \max_{\{\bfx_i, \bfx_j\} \in \data}
\frac{\norm{\wmat_2 \wmat_1 \Delta \bfx}}{\norm{\Delta \bfx}} \).
Let $k_1$ and $k_2$ be the ranks, and
$\sigma_1 \geq \cdots \geq \sigma_{k_1}$ and $\lambda_1 \geq \cdots
\geq \lambda_{k_2}$ the singular values of the matrices $\wmat_1$ and
$\wmat_2$, respectively. Let $P_i = \bfu_i \bar{\bfu}_i^\top$ be the
orthogonal projection matrix corresponding to $\bfu_i$ and
$\bar{\bfu}_i$, the left and the right singular vectors of
$\wmat_1$. Similarly, we define $Q_i$ for $\wmat_2$ corresponding to
$\bfv_i$ and $\bar{\bfv}_i$. Then, $\wmat_2 \wmat_1  =
\sum_{i=1}^{k_2} \sum_{j = 1}^{k_1} \lambda_i \sigma_j Q_i P_j $. The
upperbound, $\lambda_1 \sigma_1$, can be achieved if and only if
 $\Delta \bfx = \bar{\bfu}_1 \norm{\Delta \bfx}$ and $\bfu_1 = \bar{\bfv}_1$ (a perfect alignment), which is highly unlikely. In practice, not just the
maximum singular values, as is the case with the \Gls{lip} upper-bound,
rather the combination of the projection matrices and the singular
values play a crucial role in providing an estimate of global $L_e$. 
Thus, reducing the singular values, which is equivalent to minimizing the rank (or stable rank), will directly affect $L_e$. For example, assigning $\sigma_j = 0$, which in effect will reduce the rank of $\wmat_1$ by one, will nullify its influence on all projections associated with $P_j$. Implying, all the $k_2$ projections $\sigma_j (\sum_{i=1}^{k_2} \lambda_i Q_i) P_j$ that would propagate the input via $P_j$ will be blocked. This, in effect, will influence $\norm{\wmat_2 \wmat_2 \Delta  \bfx}$; hence the global $L_e$. In a more general setting, let $k_i$ be the rank of the $i$-th linear layer, then, each singular value of a $j$-th layer can influence the maximum of $\prod_{i=1}^{j-1} k_i \prod_{i=j+1}^{l} k_i$ many paths through which an input can be propagated. Thus, mappings with low rank (stable) will greatly reduce the global $L_e$. Similar arguments can be drawn for local $L_e$ in the case of \gls{nn} with non-linearity.
\section{The local \Gls{lip} upper-bound for Neural Networks}
\label{sec:LipNNUB}
As mentioned in~\cref{sec:background}, $L_l(\bfx) = \norm{J_f(\bfx)}_{p,q}$, where, in the case of \gls{nn}, the Jacobian is:

\begin{align}
    \label{eq:jacboianNN}
    J_f(\bfx) = \frac{\partial f\br{\bfx}}{\partial \bfx} := \frac{\partial \bfz_1}{\partial \bfx}\frac{\partial \phi_1 (\bfz_1)}{\partial \bfz_1} \cdots  \frac{\partial \bfz_l}{\partial \bfa_{l-1}}\frac{\partial \phi_l (\bfz_l)}{\partial \bfz_l}.
\end{align}
Using \( \frac{\partial \bfz_l}{\partial \bfa_{l-1}} = \wmat_l \) (affine transformation), and applying submultiplicativity of the matrix norms:
\begin{align}
    \label{eq:lipboundNN}
    \norm{J_f(\bfx)}_{p,q} \leq \norm{\wmat_1}_{p,q} \norm{\frac{\partial \phi_1 (\bfz_1)}{\partial \bfz_1}} \cdots \norm{\wmat_l}_{p,q} \norm{\frac{\partial \phi_l (\bfz_l)}{\partial \bfz_l}}.
\end{align}
Note, most commonly used activation functions $\phi(.)$ such as ReLU, sigmoid, tanh and maxout are known to have \Gls{lip} constant of $1$ (if scaled appropriately)%
\footnote{implying, \( \max_{\bfz} \norm{ \frac{\partial \phi(\bfz)}{\partial \bfz} }_p = 1 \).}%
, thus, the upper bound can further be written only using the operator norms of the intermediate matrices as 
\begin{align}\label{eq:lip_upp_bnd_app}
    L_l(\bfx) \leq \norm{J_f(\bfx)}_{p,q} \leq \norm{\wmat_l}_{p,q} \cdots \norm{\wmat_1}_{p,q}.
\end{align}%
Furthermore $L_l\br{\vec{x}}$ can be substituted by $L_l$, the
local \Gls{lip} constant, as the upper bound~(Eq.~\eqref{eq:lip_upp_bnd_app}) is
independent of $\vec{x}$. Note that this is one of the main reasons
why we consider the empirical \Gls{lip} to better reflect the true
behaviour of the function as the \gls{nn} is never exposed to the
entire domain $\real^m$ but only a small subset dependant on the data
distribution.

The other reason why this upper bound is a bad estimate is that the
inequality in Eq~\eqref{eq:lipboundNN} is tight only when the partial
derivatives are aligned, implying, $ \norm{\frac{\partial \bfz_\ell}{\partial
  \bfz_{\ell-1}}  \frac{\partial \bfz_{\ell+1}}{\partial
  \bfz_{\ell}}}_2 =  \norm{\frac{\partial \bfz_\ell}{\partial
  \bfz_{\ell-1}}}_2  \norm{\frac{\partial \bfz_{\ell+1}}{\partial
  \bfz_{\ell}}}_2~\quad \forall l -  2\le \ell\le l$. This problem has been referred to as the problem of
mis-alignment and is similar to  quantities like layer cushion in~\citet{arora18b}.

\section{Experimental details}
\label{sec:experimental-details}
\textbf{WideResNet-28-10} We use a standard WideResNet with $28$ layers
and a growth factor of $10$. In total, the network has  36,539,124
trainable parameters.  The network is the standard configuration with
batchnorm and ReLU activations and is trained with a weight decay of
$1e-4$. The learning rate was multiplied by $0.2$ after $60,120$, and
$160$ epochs respectively.

\textbf{ResNet-110} The ResNet-110 is a standard $110$ layered ResNet
with batch Norm and ReLU and has $1,973,236$ parameters. The network
is trained with SGD, an initial learning rate of $0.1$, which is
multiplied $0.1$ after $150$ and $250$ epochs respectively, a weight
decay of $5e-4$ and a momentum of $0.9$.

\textbf{Densenet-100} The DenseNet-100 is a standard $100$-layered
densenet with Batchnorm and ReLU and has a total of $800,032$
trainable parameters. The network is trained with SGD, an initial
learning rate of $0.1$, which is multiplied by $0.1$ after $150$ and
$250$ epochs respectively, a weight decay of $1e-4$, and a momentum of
$0.9$.

\textbf{VGG19} The VGG19 model is the standard 19-layered VGG model
with Batchnorm and ReLU. It has a total of $20,548,392$ trainable
parameters and is trained with SGD with a momentum of $0.9$ and a
weight decay of $5e-4$. The initial learning rate is $0.1$ and is
multiplied by $0.1$ after $150$ and $250$ epochs respectively. For the
shattering experiments, we used the same architecture and the same
training recipe except the initial learning rate, which was deceased
to $0.01$ as the model failed to learn the random labels with a large
learning rate.

\textbf{AlexNet} The Alexnet model is the standard ALexNet model with
$4,965,092$ trainable parameters. It was trained with SGD, with a
momentum of $0.9$, with an initial learning rate is $0.01$, which is
multiplied by $0.1$ after $150$ and $250$ epochs respectively. The
optimizer was further augmented with a weight decay rate of $5e-4$. Please refer
to the next section for results on different learning rates and with and without weight decay.

\subsection{Additional Experiments on Generalization}
\label{sec:gener-behav}

\paragraph{Complexity measures}

In this section, we provide more details about the various complexity
measures we used in~\cref{fig:compl}. 
\begin{itemize}
\item \textbf{Spec-Fro:}
  $\dfrac{\prod_{i=1}^L\norm{\vec{W}_i}_2^2\sum_{i=1}^L\srank{\vec{W}_i}}{\gamma^2}$~\citep{neyshabur2018a}. This
  bound is the main motivation of this paper; the
  two quantities used to normalize the margin~($\gamma$) is the
  product of spectral norm i.e. $\prod_{i=1}^L\norm{\vec{W}_i}_2^2$~(or worst case lipschitzness) and sum of
  stable rank {\em i.e.}, $\sum_{i=1}^L\srank{\vec{W}_i}$~(or an approximate parameter count like rank of a matrix).
\item \textbf{Spec-L1:} $\dfrac{\prod_{i=1}^L\norm{\vec{W}_i}_2^2
    \br{\sum_{i=1}^L\frac{\norm{\vec{W}_i}_{2,1}^{\nicefrac{2}{3}}}{\norm{\vec{W}_i}_2^{\nicefrac{2}{3}}}}^3}{\gamma^2}$,
where $\norm{.}_{2,1}$ is the matrix 2-1 norm. As showed
by~\citet{bartlett2017spectrally}, Spec-L1 is the spectrally
normalized margin, and unlike just the margin, is a good indicator of the
generalization properties of a network. 
\item \textbf{Jac-Norm:}
  $\sum_{i=1}^L\dfrac{\norm{\vec{h}_i}_2\norm{\vec{J}_i}_2}{\gamma}$~\citep{wei2019},
  where  $\vec{h}_i$ is the $i^{\it th}$ hidden layer and $\vec{J}_i
  =\frac{\partial \gamma}{\partial h_i}$ {\em i.e.}, the Jacobian of
  the margin with respect to the $i^{\it th}$ hidden layer (thus, a
  vector). Note, Jac-Norm depends on the norm of the Jacobian (local
  empirical \Gls{lip}) and norm of the hidden layers -  additional data-dependent terms compared to
  Spec-Fro and Spec-L1, thus captures a more realistic (and
  optimistic) generalization behaviour. 
\end{itemize}

For better clarity regarding~\cref{fig:compl}, we provide the $90$ percentile for each of these histograms in ~\cref{tab:perc-compl}. 
 As the plots and the table show, both \gls{srn} and \gls{sn} produces a much smaller
quantity than a Vanilla network and in 7 out of the 9 cases, SRN is
better than SN. The difference between SRN and SN is much more
significant in the case of Jac-Norm. As this depend on the empirical
lipschitzness, it provides the empirical validation of our arguments
in~\cref{sec:whyStable}. 
\begin{table}[!htb]\footnotesize
  \centering\renewcommand{\arraystretch}{1.1}
  \begin{tabular}[t]{C{3cm}C{3cm}c@{\quad}c@{\quad}c@{\quad}}\toprule
    Model & Algorithm & Jac-Norm & Spec-$L_1$ & Spec-Fro\\\hline
     \multirow{3}{*}{ResNet-110} & Vanilla & 17.7& $\infty$&$\infty$\\
         & Spectral (SN) & 17.8&10.8 &7.4\\
         & SRN-30 & {\bf 17.2}& {\bf 10.7} & {\bf 7.2}\\\cmidrule{2-5}
     \multirow{4}{*}{WideResNet-28-10} & Vanilla & 16.2& 14.60&11.18\\
         & Spectral (SN) & 16.13& 7.23&4.5\\
         & SRN-50 & 15.8& 7.3&4.5\\
         & SRN-30 & {\bf 15.7}& {\bf 7.20}&{\bf 4.4}\\\cmidrule{2-5}
     \multirow{4}{*}{Densenet-100} & Vanilla &19.2 &$\infty$ &$\infty$\\
         & Spectral (SN) &17.8 & 12.2&9.4\\
         & SRN-50 &17.6 &12 &9.2\\
         & SRN-30 & {\bf 17.7}&{\bf 11.8} &{\bf 9.0}\\\hline
  \end{tabular}
  \caption{Values of 90 percentile of $\log$ complexity measures
    from~\cref{fig:compl}. Here $\infty$ refers to the situations
    where the product of spectral norm blows up. This is the case in deep networks like ResNet-110 and Densenet-100 where the absence of spectral normalization (Vanilla)
    allows the product of spectral norm to grow arbitrarily large with increasing number of layers. Lower is better.}
  \label{tab:perc-compl}
\end{table}

\paragraph{Alexnet experiments:} Figure~\ref{fig:alex-test} shows the test error and generalization
error of Alexnet trained with a large learning rate of $0.1$. Note
that, the model fails to learn completely without weight
decay. Generalisation Error decreases monotonically with decreasing $c$
in the stable rank constraint. Test error is the lowest for
$c=0.5$. The constraint becomes too aggressive for even  $c$ lower
than that. The slightly more interesting observation is that having a
weight decay actually hurts generalization error while it has a
slightly positive effect on test error.

\begin{figure}[!htb]
  \centering
   \begin{subfigure}[!t]{0.49\linewidth}
    \def\svgwidth{0.98\linewidth}
    \input{./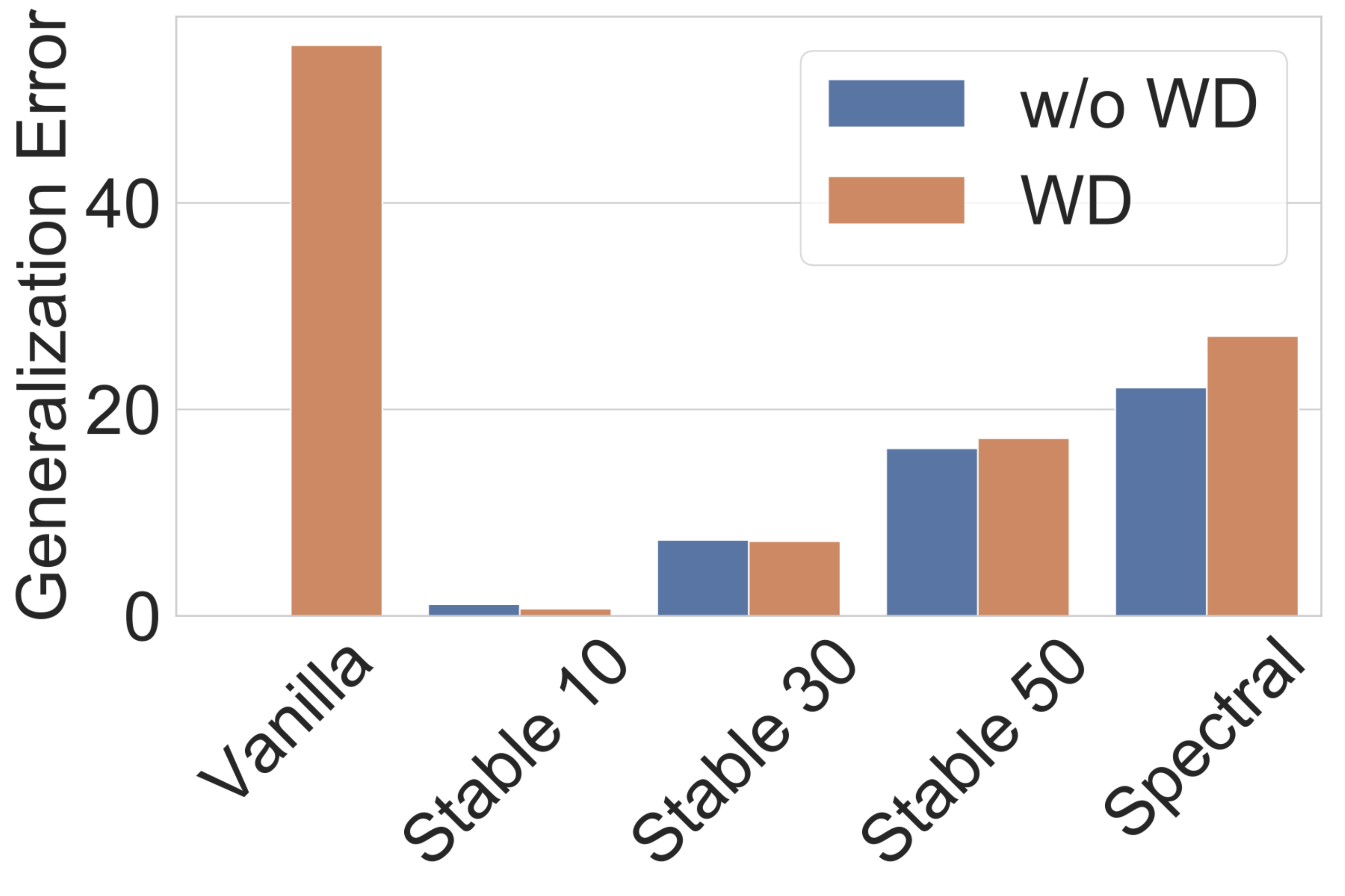_tex}
    \subcaption{Generalization Error}
  \end{subfigure}
   \begin{subfigure}[!t]{0.49\linewidth}
    \def\svgwidth{0.98\linewidth}
    \input{./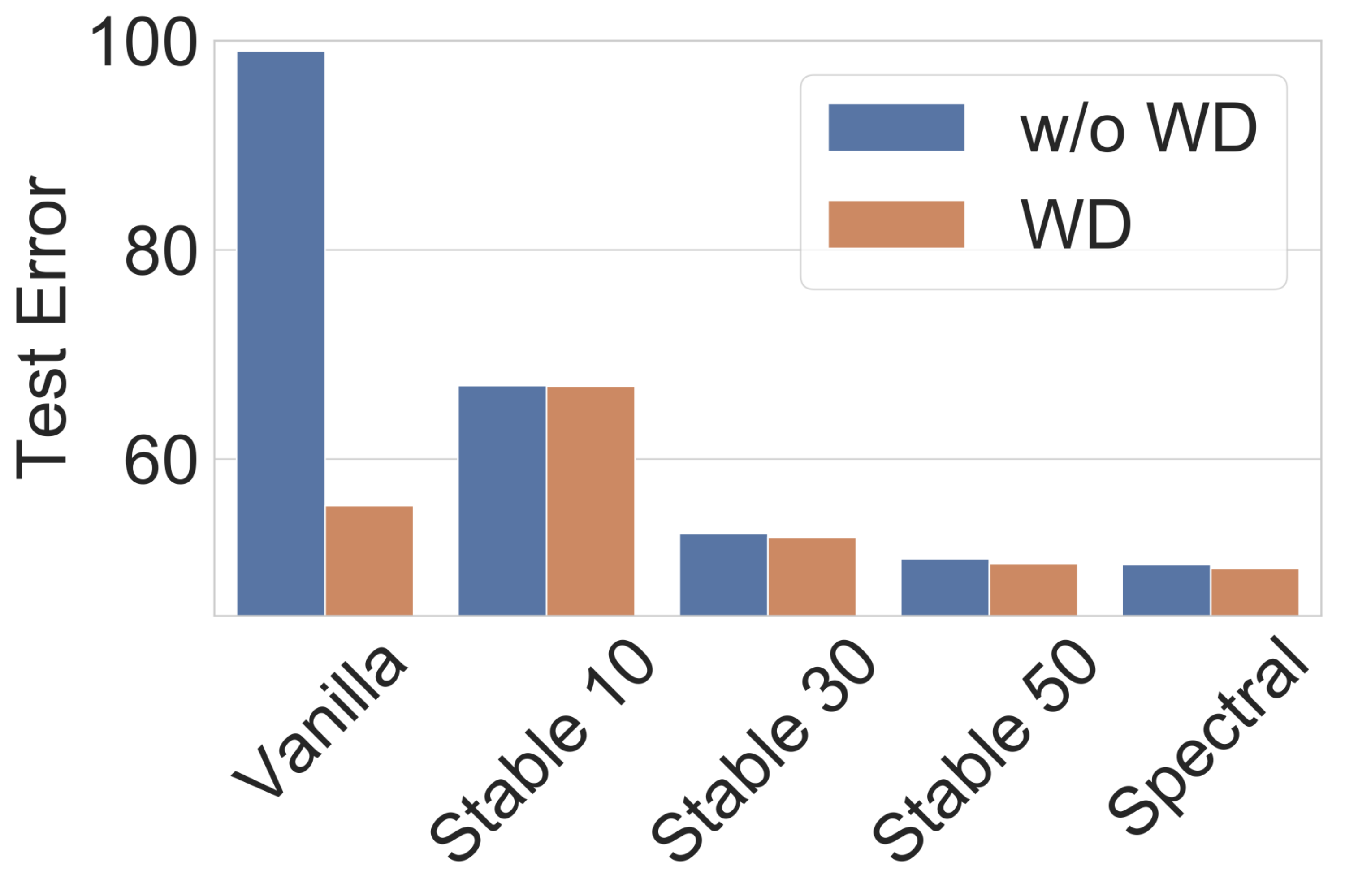_tex}
    \subcaption{Test Error}
  \end{subfigure}
  \caption{Test Error and Generalization Error of AlexNet trained with
    SGD with $lr=0.1$ on ~(clean) CIFAR-100.~(Lower is better}
  \label{fig:alex-test}
\end{figure}

\paragraph{Low Learning Rate}
Here, we train a WideResnet-28-10 with SRN, SN, and vanilla methods
with an $lr=0.01$ and weight decay of $5\times 10^{-4}$ on randomly
labelled CIFAR100. for $50$ epochs. The results are shown
in~\cref{tab:low_lr_wrn} and it further supports that SRN is more
robust to random noise than SN or vanilla methods.

\begin{table}[!htb]
  \centering\small
  \begin{tabular}{||c|c|c||}\hline
    Stable-30&Spectral&Vanilla\\\hline
    $29.04$& $17.24 $ & $1.22$ \\\hline
  \end{tabular}
  \caption{Training Error for WideResNet-28-10 on CIFAR100 with randomized
    labels, low lr$=0.01$, and with weight decay.~(Higher
    is better.)}
  \label{tab:low_lr_wrn}
\end{table}

\paragraph{With and without weight decay}
In~\cref{fig:alex-rand-high-train}, we show the training error of
Alexnet trained with SGD with and without weight decay~($=5e-4$) with
a learning rate of $0.01$. Again, we see that a more aggressive stable
rank constraint decreases fitting the random data . Similar results
are seen for ResNet-110 in~\cref{fig:r100-rand-high-train}.

\begin{figure}[!htb]
  \centering
  \begin{subfigure}[t]{0.47\linewidth}
    \def\svgwidth{0.98\linewidth}
    \input{./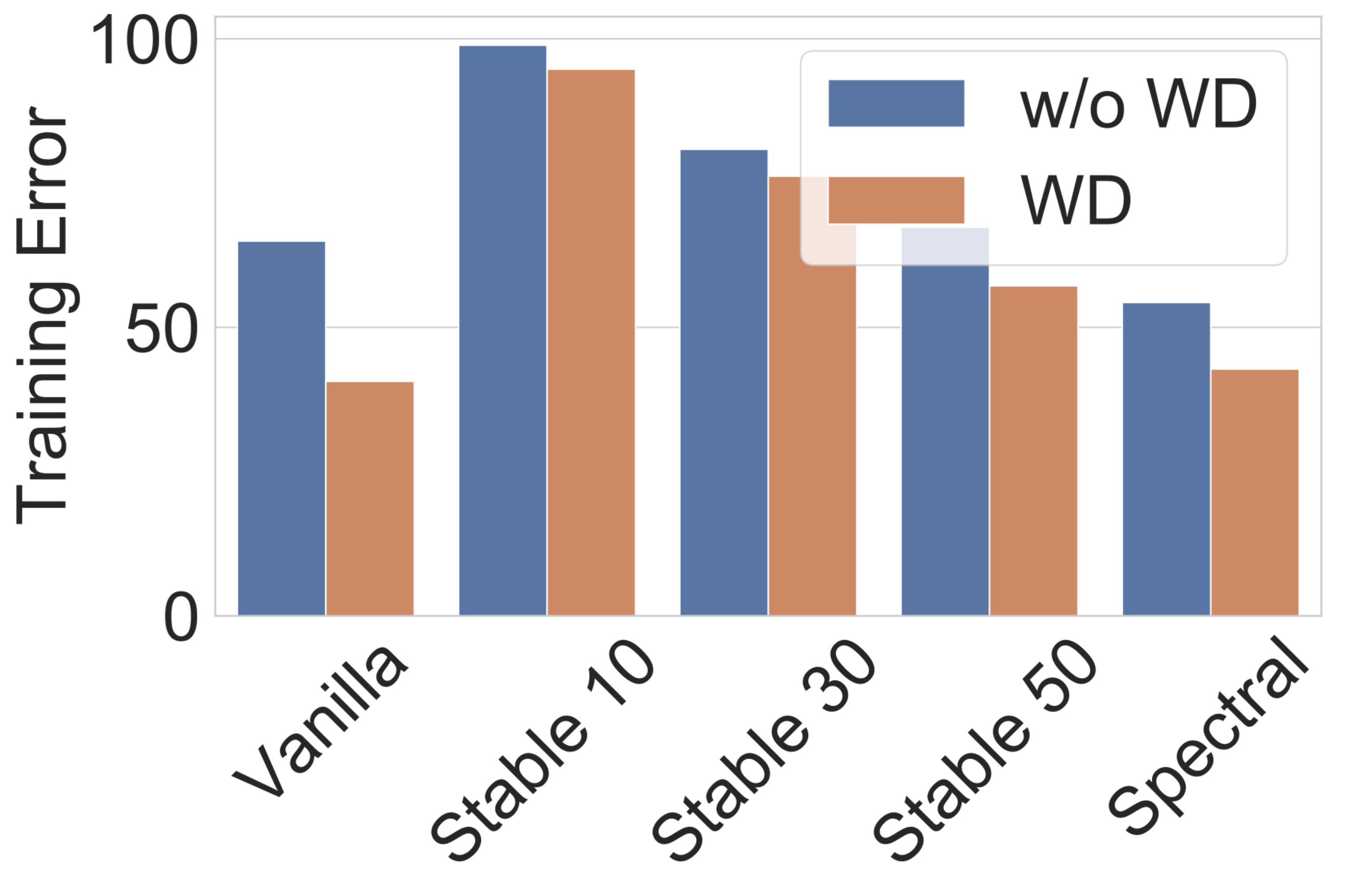_tex}
    \caption{AlexNet}\label{fig:alex-rand-high-train}
  \end{subfigure}\hfill
  \begin{subfigure}[t]{0.47\linewidth}
    \def\svgwidth{0.98\linewidth}
    \input{./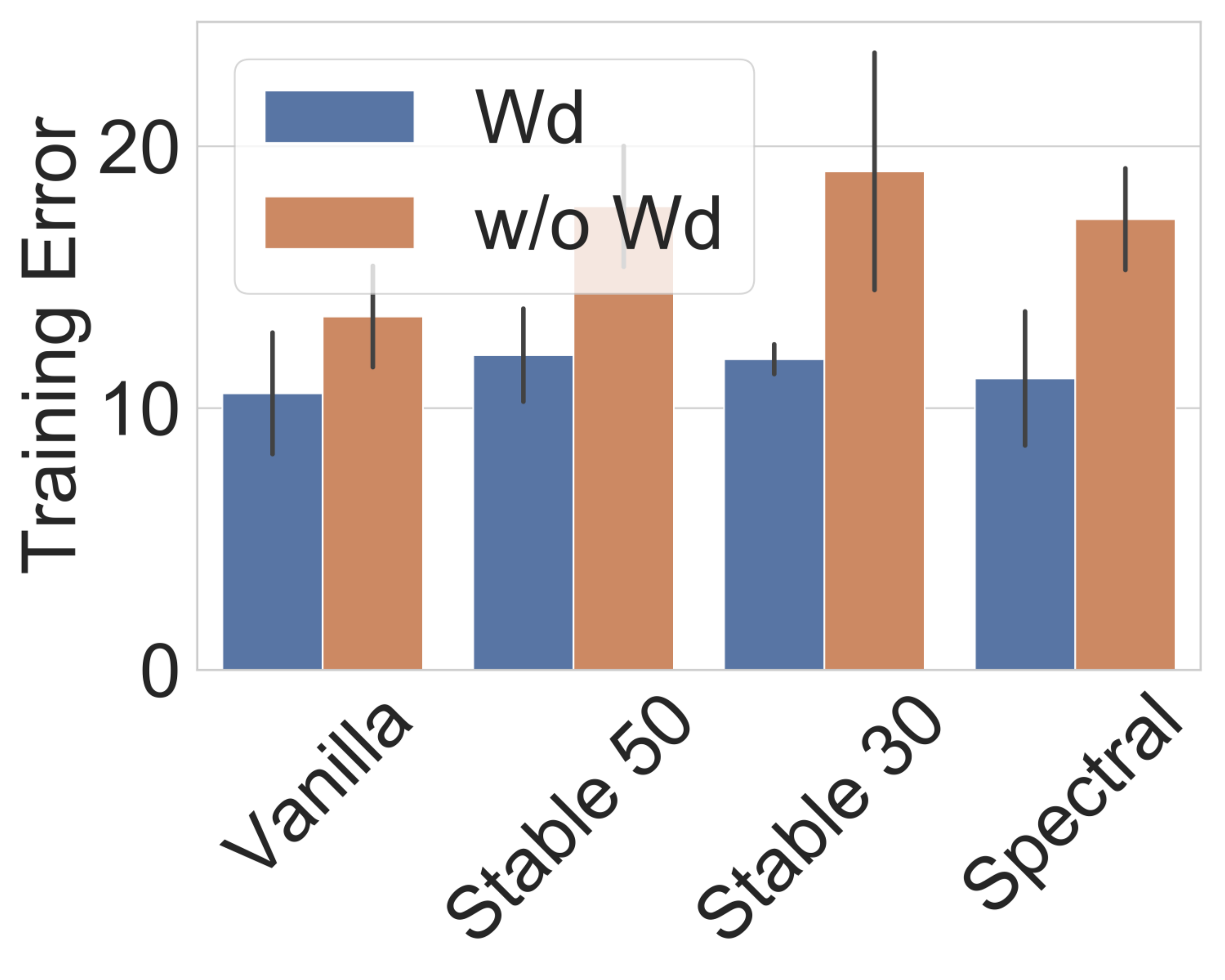_tex}
    \caption{ResNet110}\label{fig:r100-rand-high-train}
  \end{subfigure}
  \caption{Training error on randomly labelled CIFAR-100 with a
    learning rate of $0.01$ and with/ with out weight decay.~(Higher
    is better.}
  
\end{figure}

\paragraph{Low Learning Rate, with and without weight decay on clean
  CIFAR100}

In~\cref{tab:clean-c100-lowlr}, we show the test accuracies for the
clean data with the same confifuration as
in~\cref{tab:low_lr_rnet}. This corresponds to the hihgly
non-generelizable learning setting.

\begin{table}[!htb]
  \centering
  \begin{tabular}{ccccc}\toprule
    & Vanilla & Spectral & Stable-50 & Stable-30 \\\midrule
    W/o WD & 69.2 $\pm$ 0.5 & 69$\pm$0.1 & 69.1 $\pm$0.85 & 69.3 $\pm$0.4 \\
    With WD & 70.4$\pm$ 0.3 & 71.35 $\pm$0.25 & 70.6 $\pm$0.1 & 70.6 $\pm$0.1 \\\bottomrule
  \end{tabular}
  \label{tab:clean-c100-lowlr}
  \caption{Clean Test Accuracy on CIFAR10. The learning configuration
    corresponds to the non-generelizable settings with high learning
    rate. The corresponding shattering experiments for this setting
    are shown in~\cref{tab:low_lr_rnet}.}
\end{table}

\paragraph{Training Accuracy as Stopping Criterion}

In this section we show that our regularizor performs consistently for
a different stopping criterion. In particular, we use the train
accuracy as a stopping criterion. For Resnet110,
WideResnet-28,Densenet-100, and VGG-19 we use a train accuracy of
$99\%$ as a stopping criterion and report the test accuracy when that
train accuracy was achieved for the first time. For Alexnet, as SRN-30
never achieves a train accuracy higher than $55\%$, we use $55\%$ as
the stopping criterion and plot the test accuracies
in~\cref{fig:test-acc-stop}. Our results show that SRN-30 and SRN-50
outperform SN and vanilla consistently. In
Figure~\ref{fig:test-acc-stop-c10}, we show similar plots for CIFAR10.

\begin{figure}[t]
  \centering\small
  \begin{subfigure}[!t]{0.19\linewidth}
    \def\svgwidth{0.98\linewidth}
    \input{./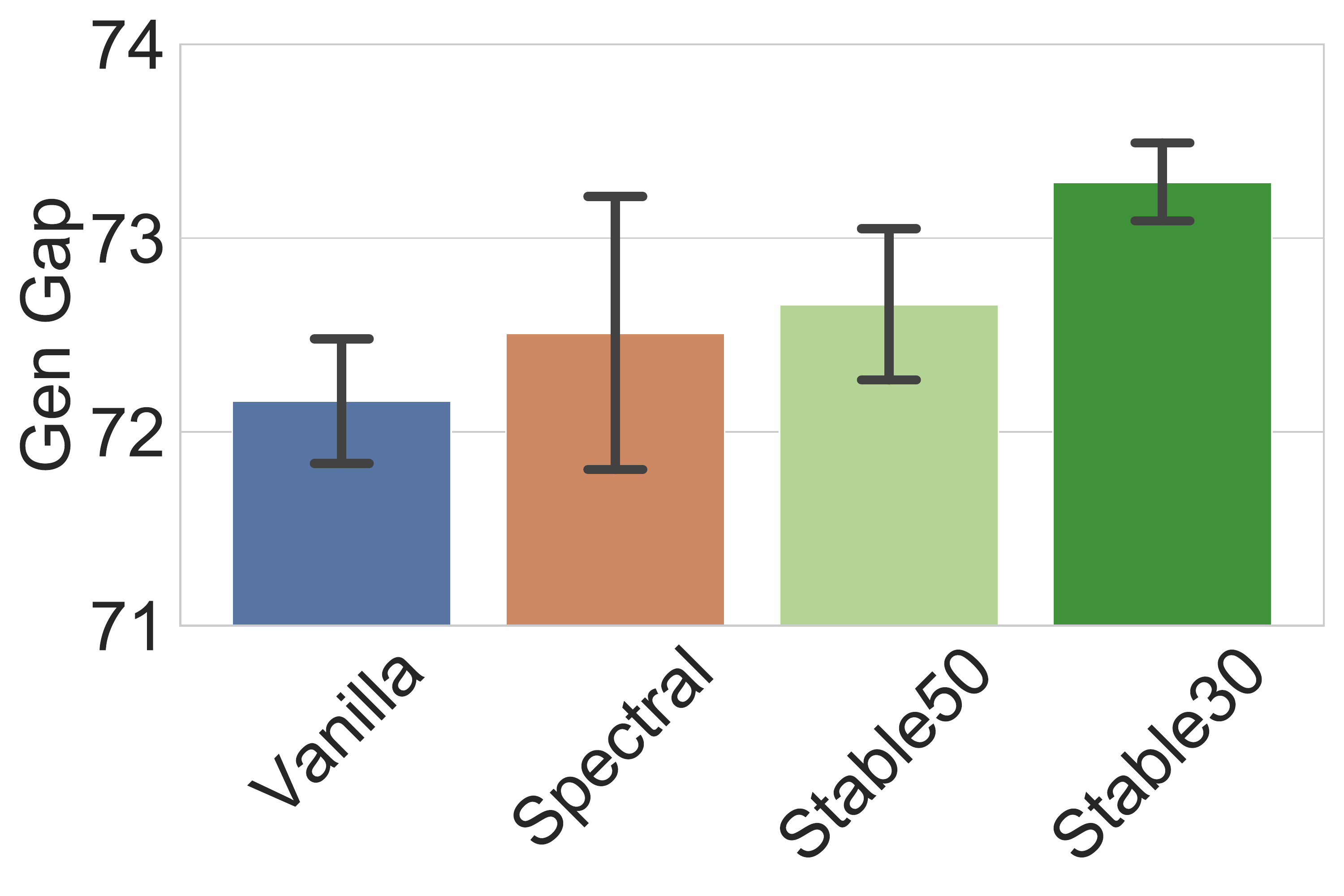_tex}
    \subcaption{ Resnet110}
  \end{subfigure}
  \begin{subfigure}[!t]{0.19\linewidth}
    \def\svgwidth{0.98\linewidth}
    \input{./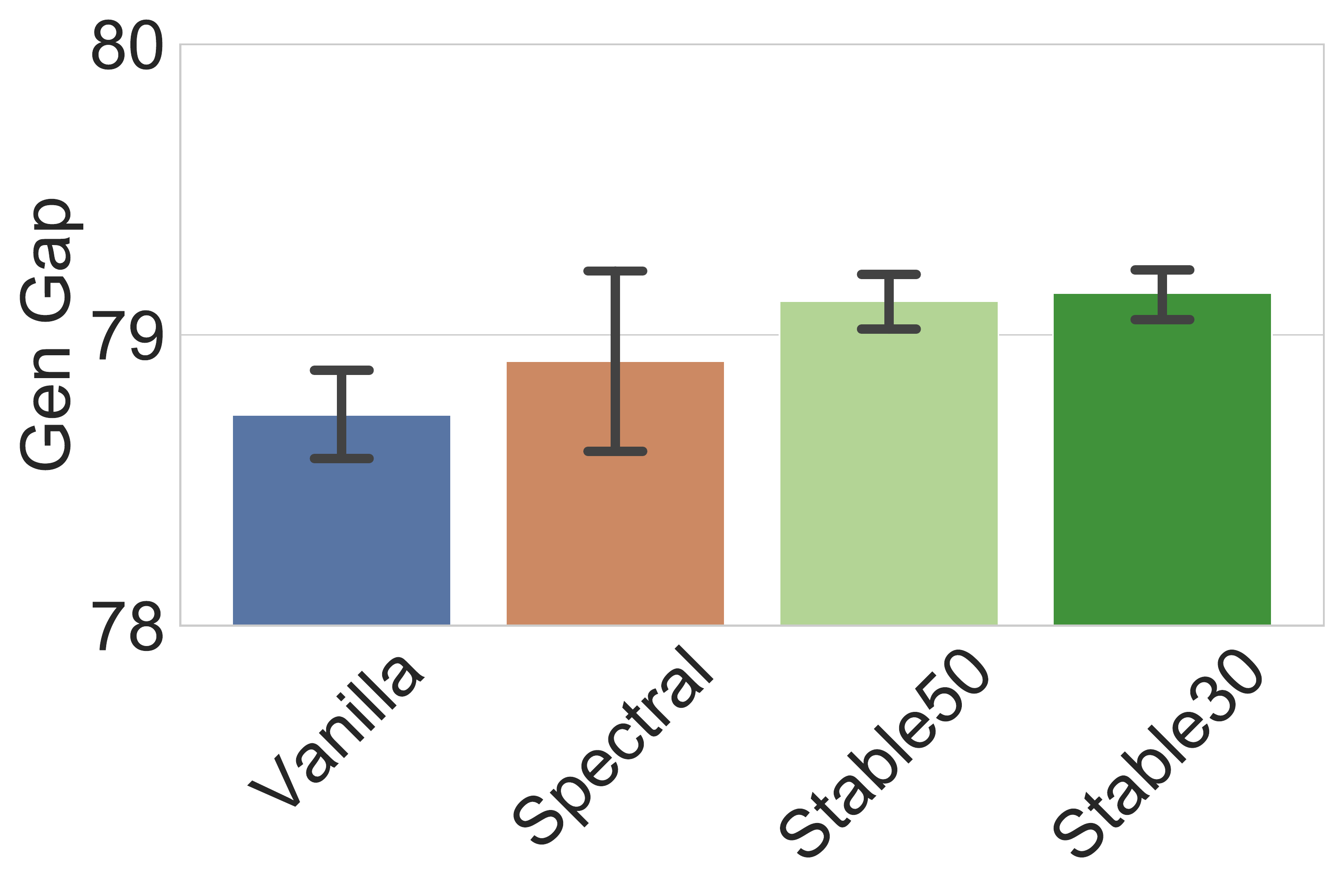_tex}
    \subcaption{ WideResnet-28}
  \end{subfigure}
   \begin{subfigure}[!t]{0.19\linewidth}
    \def\svgwidth{0.98\linewidth}
    \input{./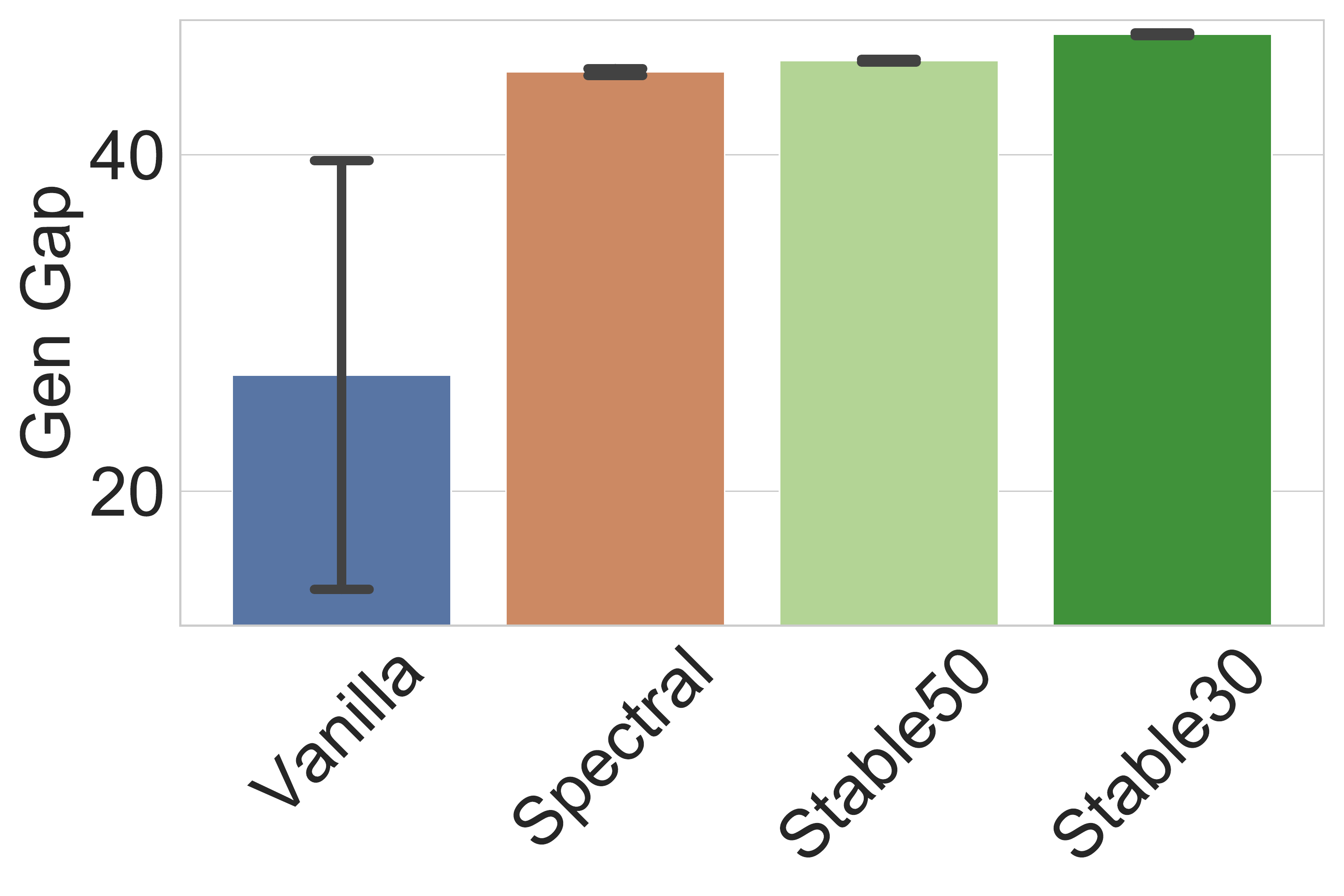_tex}
    \subcaption{ Alexnet}
  \end{subfigure}
  \begin{subfigure}[!t]{0.19\linewidth}
    \def\svgwidth{0.98\linewidth}
    \input{./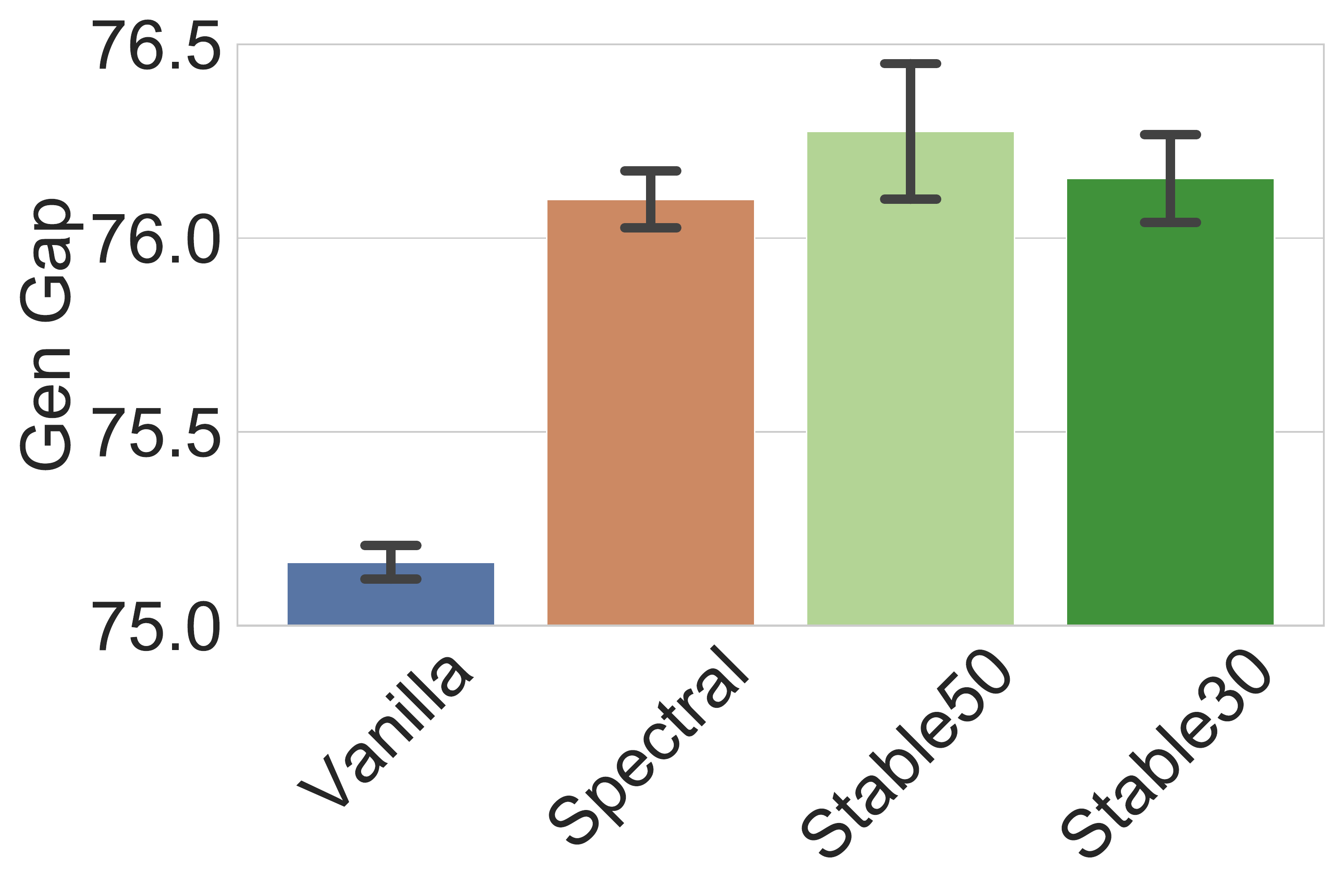_tex}
    \subcaption{ Densenet-100}
  \end{subfigure}
   \begin{subfigure}[!t]{0.19\linewidth}
    \def\svgwidth{0.98\linewidth}
    \input{./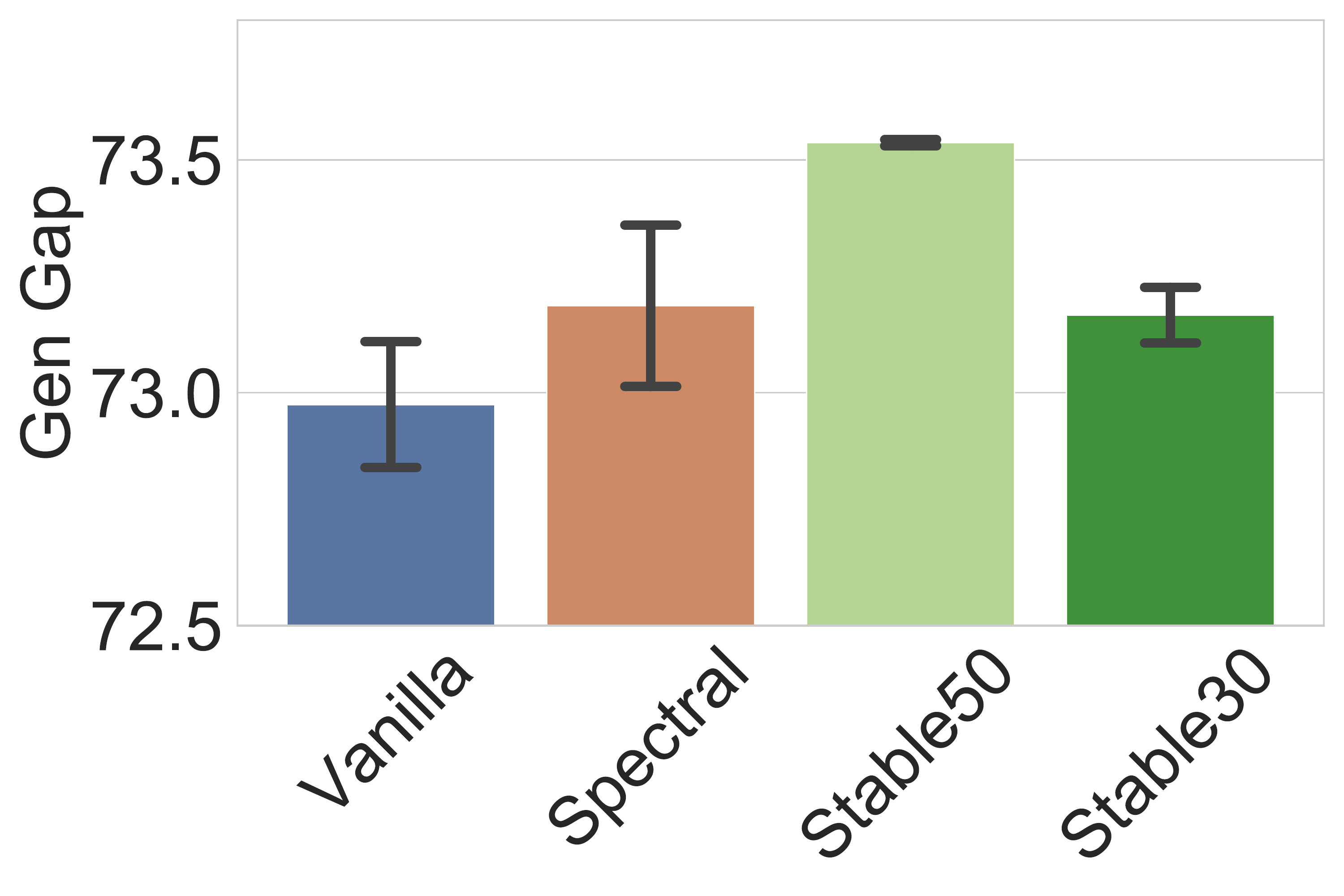_tex}
    \subcaption{ VGG-19}
  \end{subfigure}
  \caption{Test accuracies on CIFAR100 for clean data using a stopping
    criterion based on train accuracy. Higher is better.}
  \label{fig:test-acc-stop}
\end{figure}

\paragraph{CIFAR10 experiments}
In this section, we plot results on CIFAR10 trained using ResNet-110,
Desnenet100, WideResNet-28, and Alexnet. In~\cref{fig:test-acc-stop-c10}, we plot the test accuracy on clean
CIFAR-10 with the training accuracy as the stopping criterion. For all
models other than Alexnet, we use $99\%$ training accuracy as the
criterion and for Alexnet we use
$85\%$. In~\cref{fig:test-acc-epoch-c10}, we plot the test accuracy on
clean CIFAR10 using the number of epochs as the stopping criterion.The results here are consistent with those in the main
paper in that SRN outperforms the vanilla and SN.

\begin{figure}[t]
  \centering\small
  \begin{subfigure}[!t]{0.24\linewidth}
    \def\svgwidth{0.98\linewidth}
    \input{./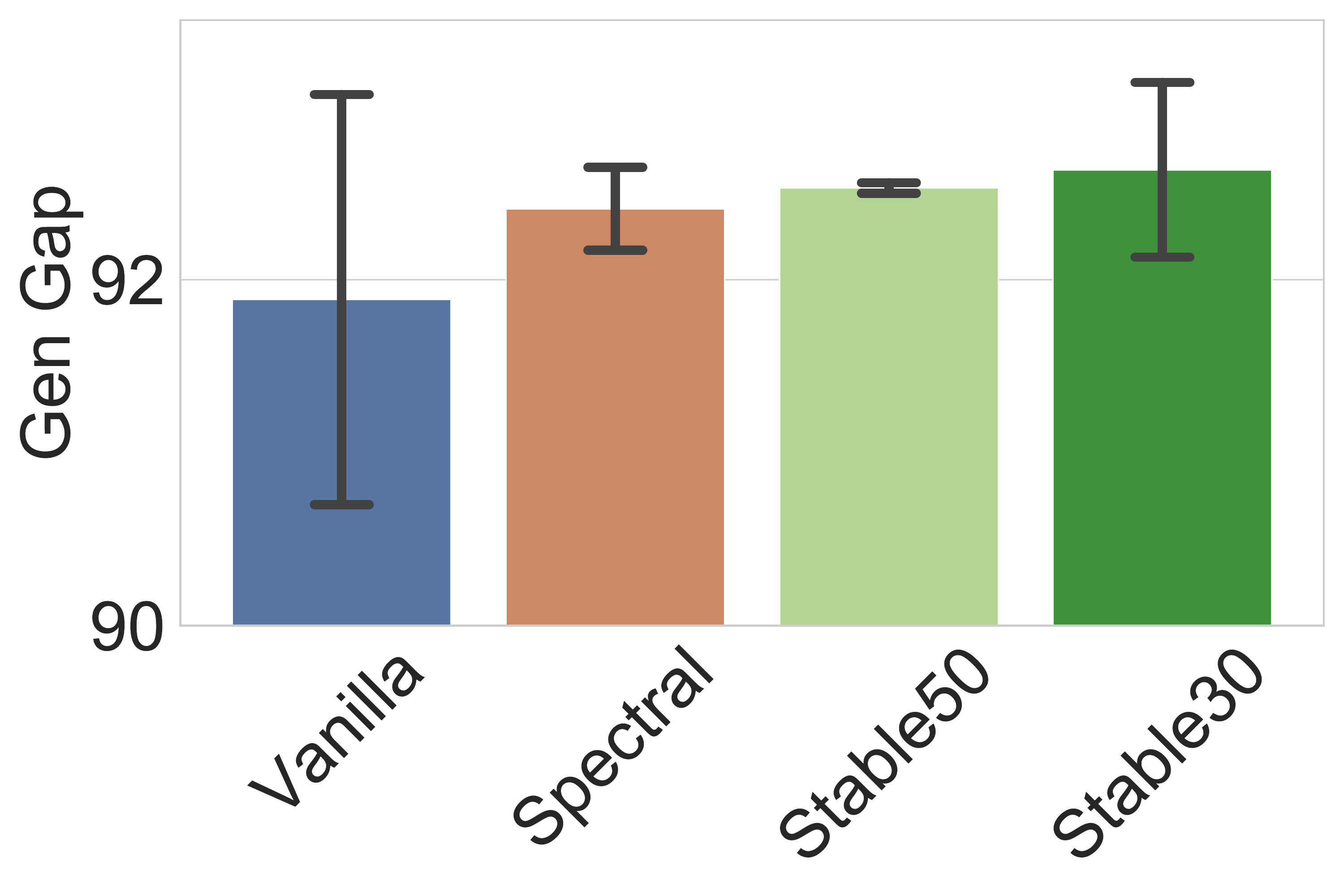_tex}
    \subcaption{ Resnet110}
  \end{subfigure}
  \begin{subfigure}[!t]{0.24\linewidth}
    \def\svgwidth{0.98\linewidth}
    \input{./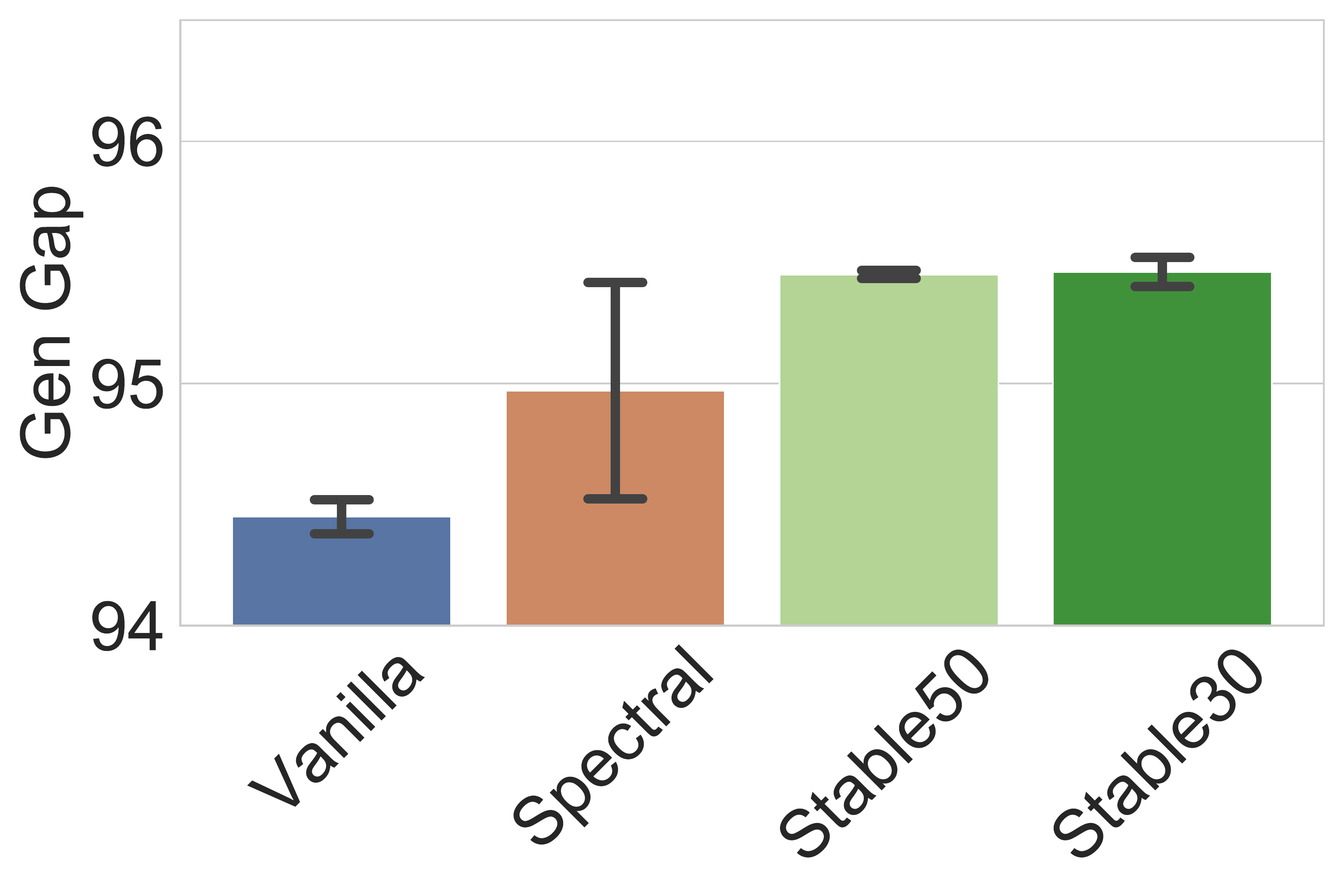_tex}
    \subcaption{ WideResnet-28}
  \end{subfigure}
   \begin{subfigure}[!t]{0.24\linewidth}
    \def\svgwidth{0.98\linewidth}
    \input{./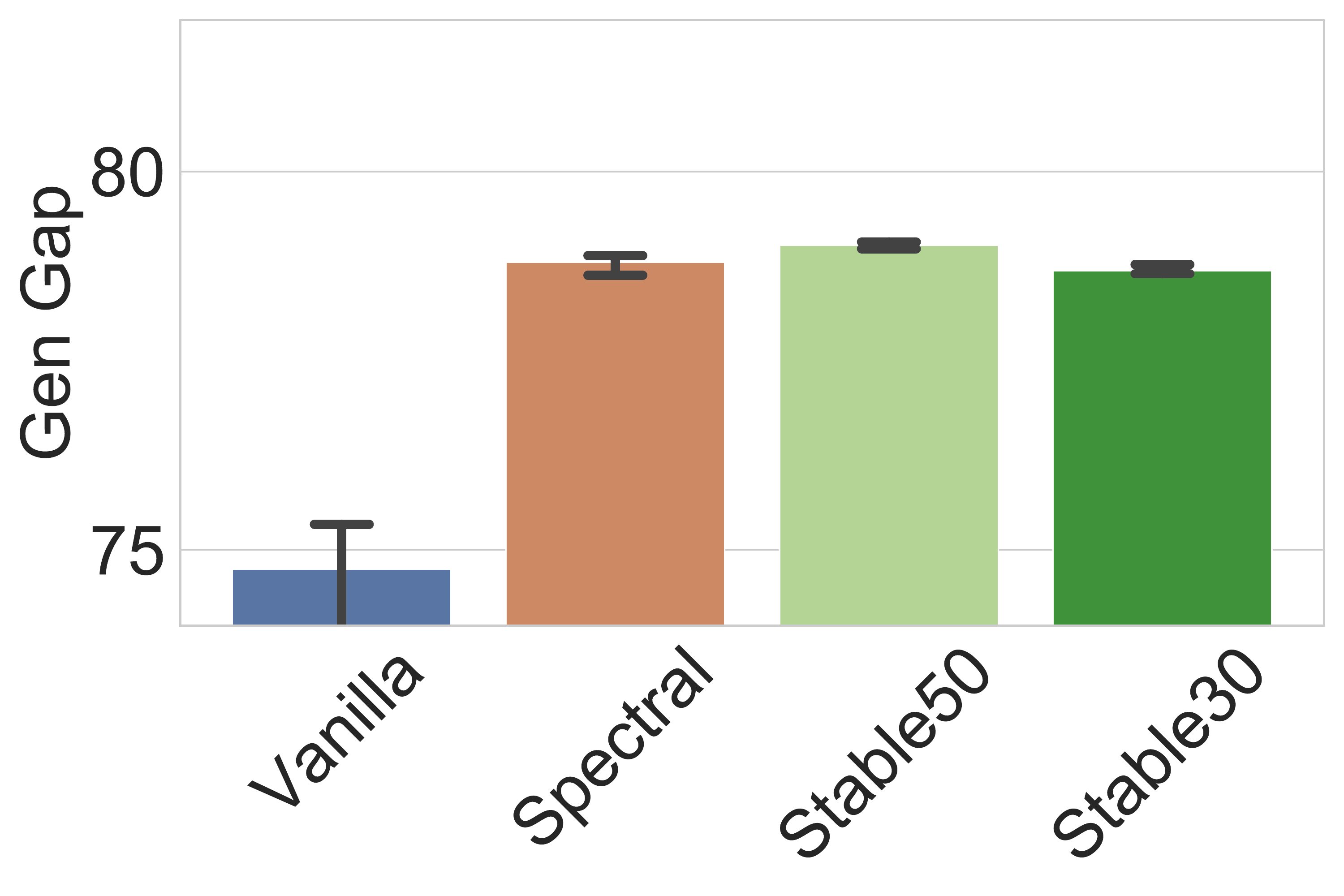_tex}
    \subcaption{ Alexnet}
  \end{subfigure}
  \begin{subfigure}[!t]{0.24\linewidth}
    \def\svgwidth{0.98\linewidth}
    \input{./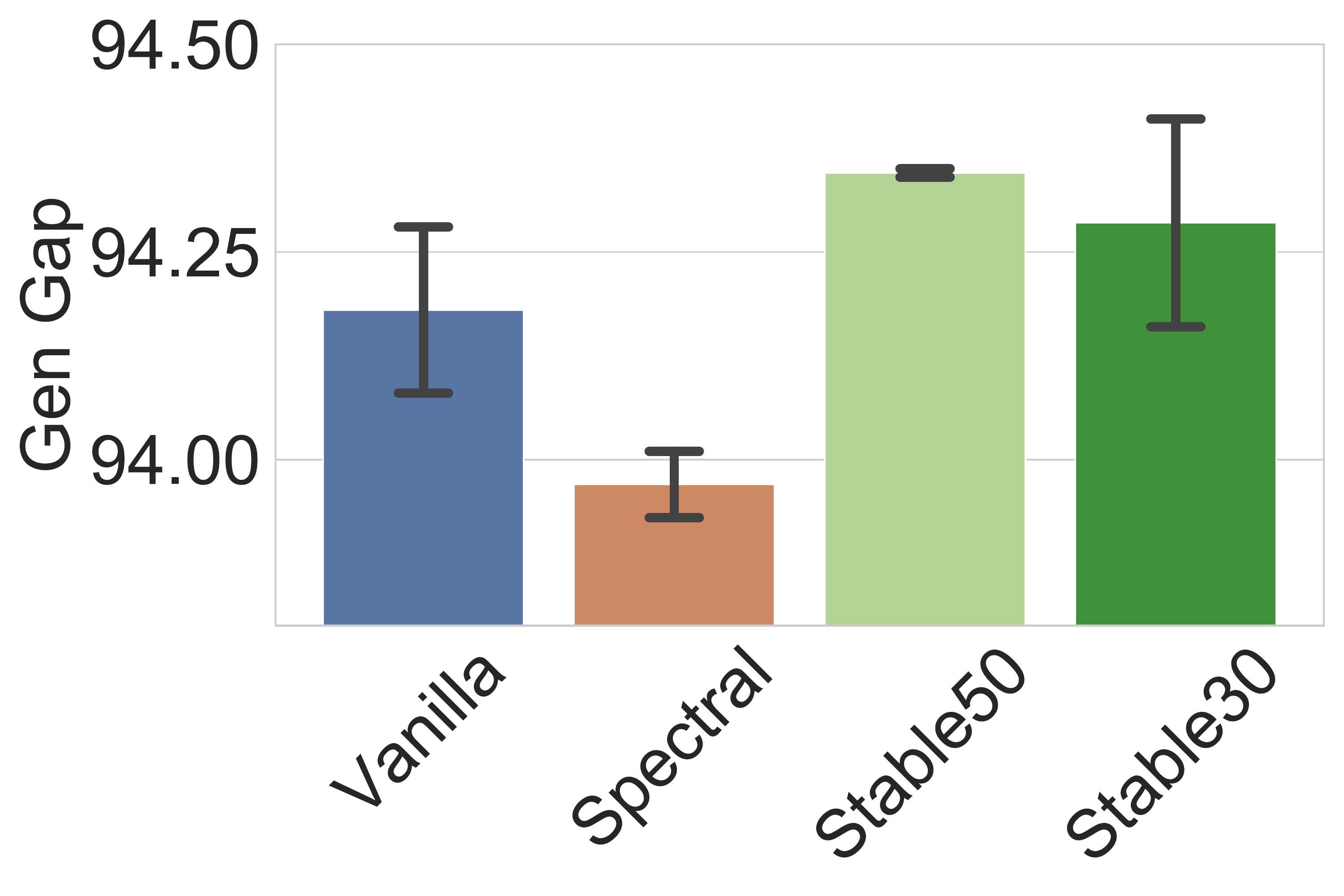_tex}
    \subcaption{ Densenet-100}
  \end{subfigure}
  \caption{Test accuracies on CIFAR10 for clean data using a stopping
    criterion based on train accuracy. Higher is better.}
  \label{fig:test-acc-stop-c10}
\end{figure}

\begin{figure}[t]
  \centering\small
  \begin{subfigure}[!t]{0.24\linewidth}
    \def\svgwidth{0.98\linewidth}
    \input{./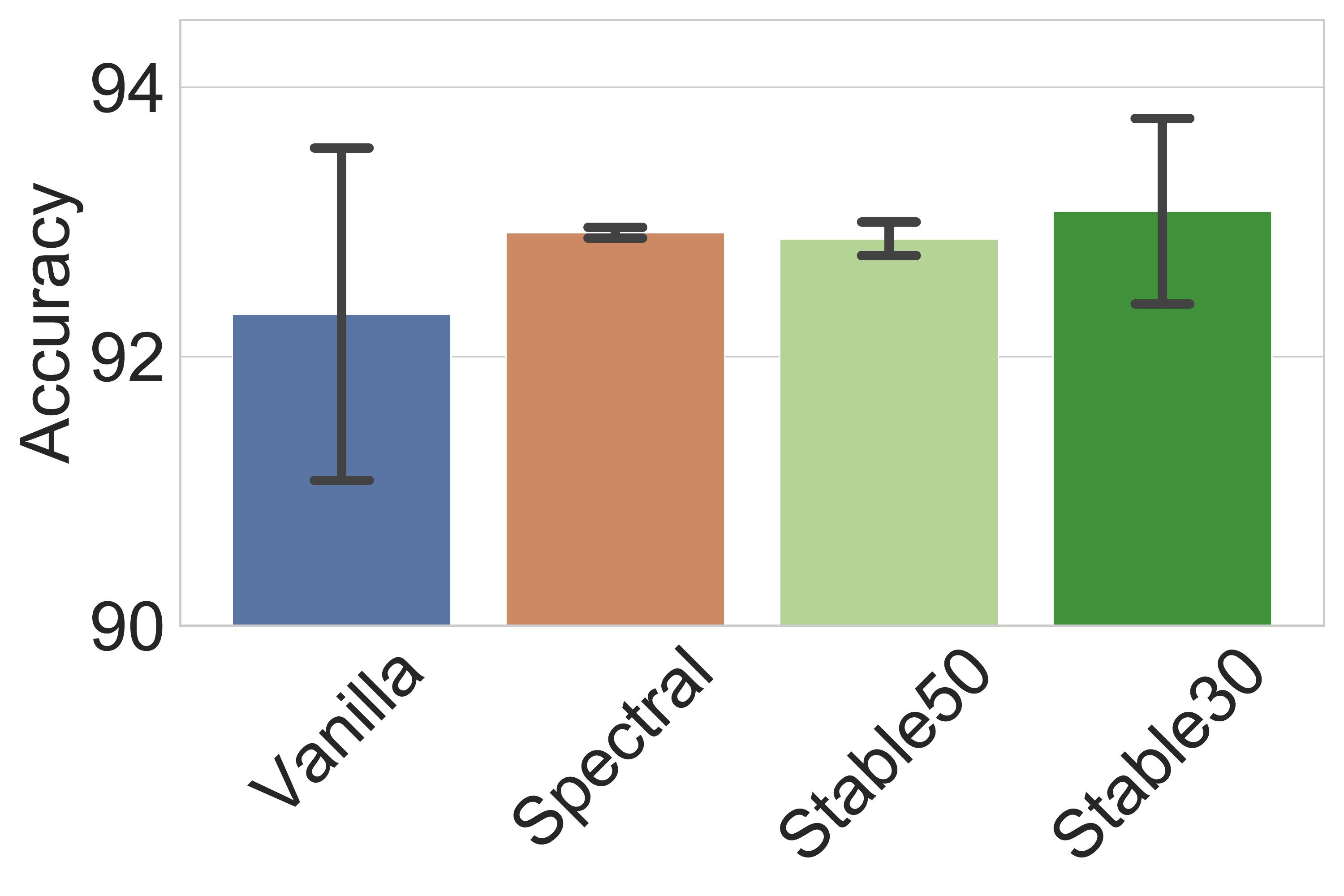_tex}
    \subcaption{ Resnet110}
  \end{subfigure}
  \begin{subfigure}[!t]{0.24\linewidth}
    \def\svgwidth{0.98\linewidth}
    \input{./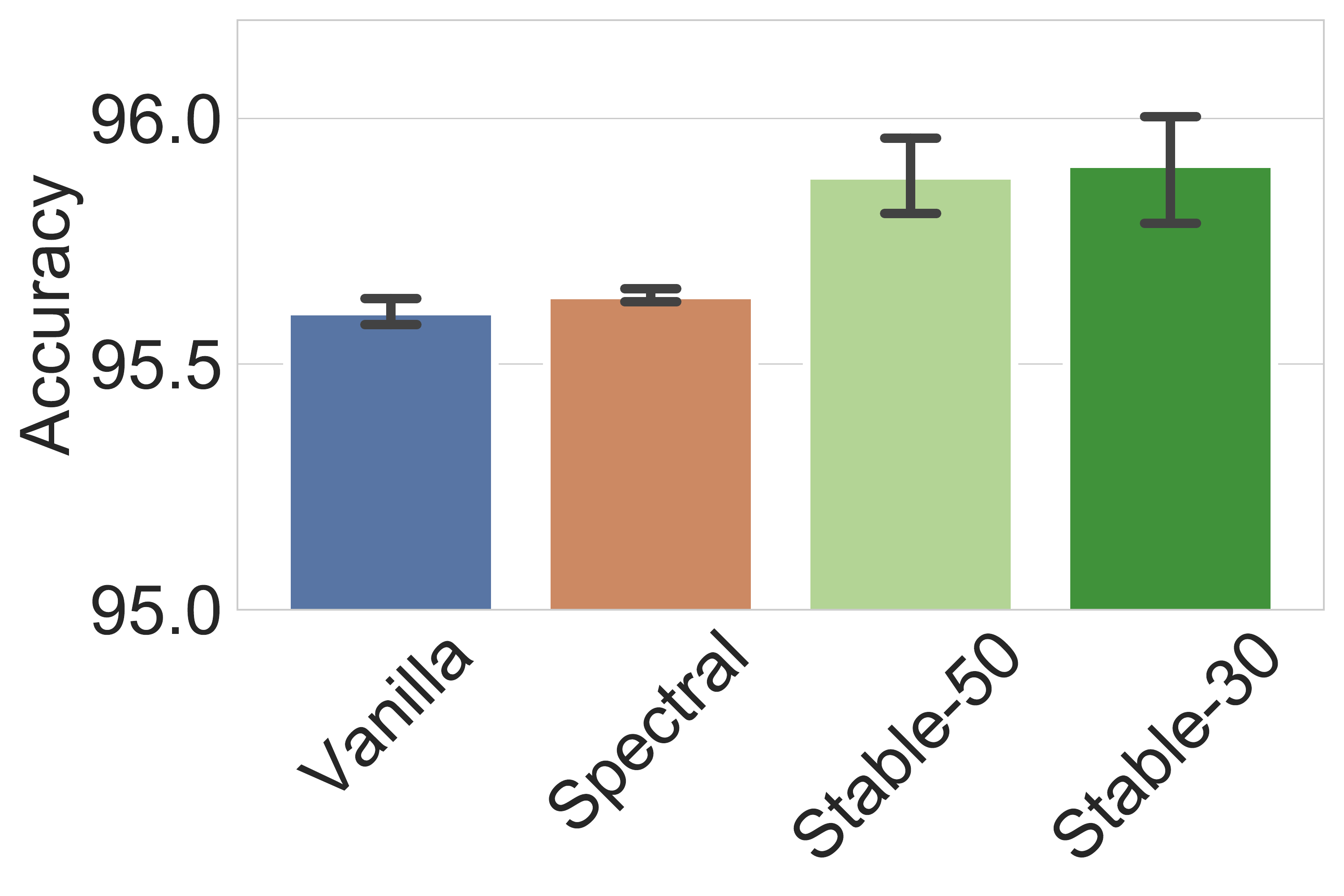_tex}
    \subcaption{ WideResnet-28}
  \end{subfigure}
   \begin{subfigure}[!t]{0.24\linewidth}
    \def\svgwidth{0.98\linewidth}
    \input{./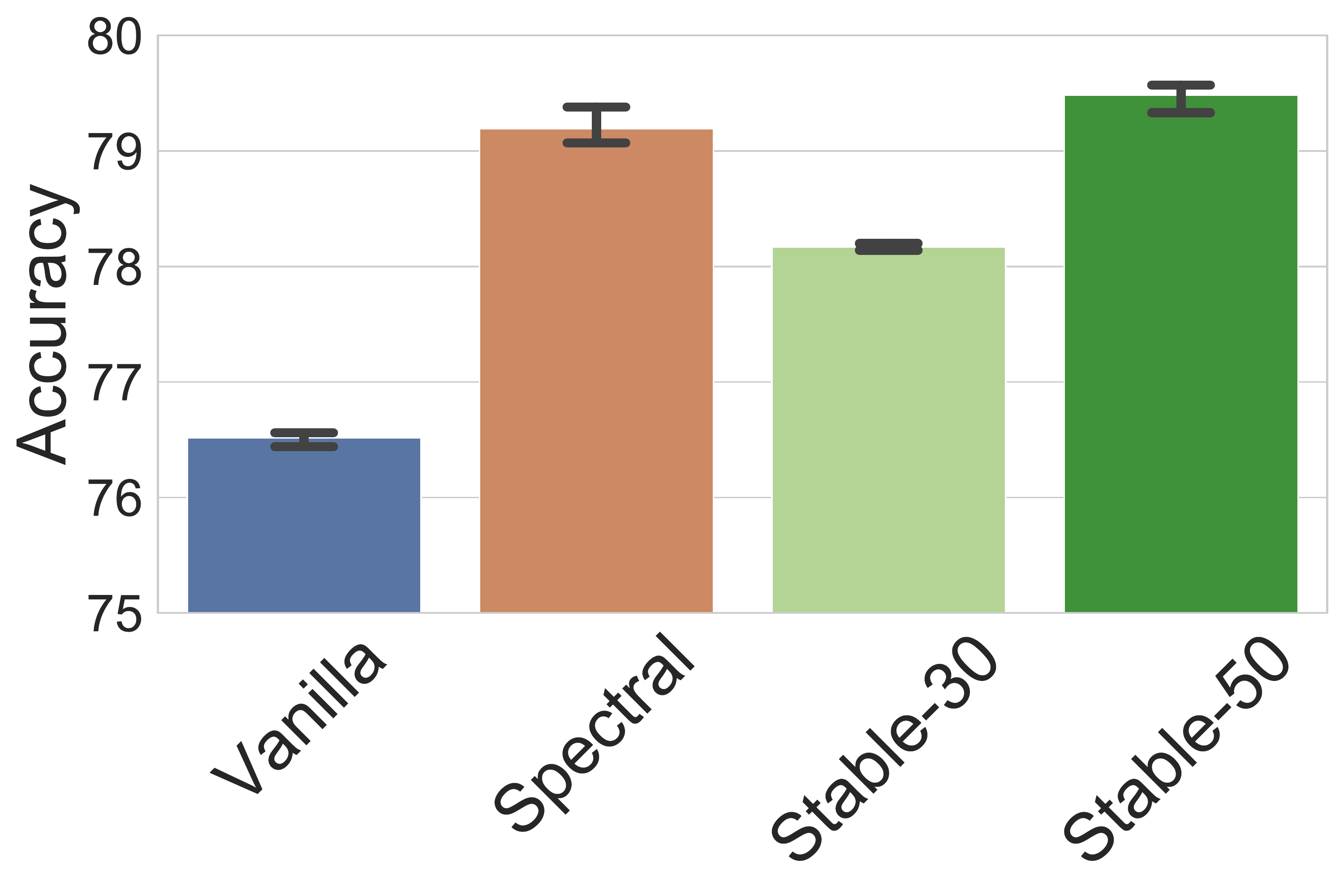_tex}
    \subcaption{ Alexnet}
  \end{subfigure}
  \begin{subfigure}[!t]{0.24\linewidth}
    \def\svgwidth{0.98\linewidth}
    \input{./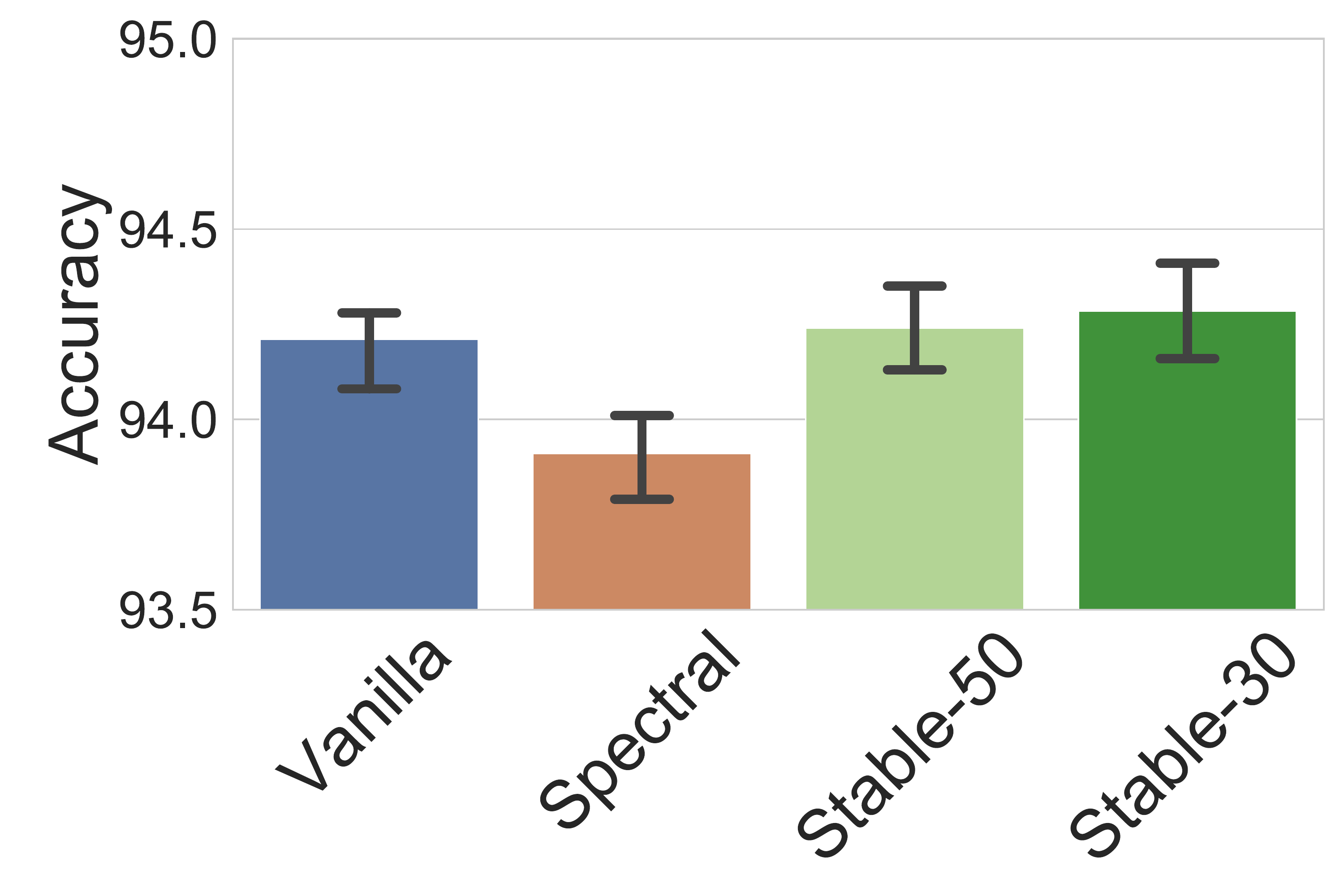_tex}
    \subcaption{ Densenet-100}
  \end{subfigure}
  \caption{Test accuracies on CIFAR10 for clean data using the number
    of epochs as a stopping
    criterion. Higher is better.}
  \label{fig:test-acc-epoch-c10}
\end{figure}

In~\cref{fig:cifar-10-gen}, we plot the training accuracy on CIFAR10
when the labels are randomized for Resnet100, and Alexnet. SRN-50 and
SRN-30 are much better than Vanilla and SN in this case.

\begin{figure}[H]
  \centering
  \begin{subfigure}[t]{0.33\linewidth}
    \def\svgwidth{0.98\linewidth}
    \input{./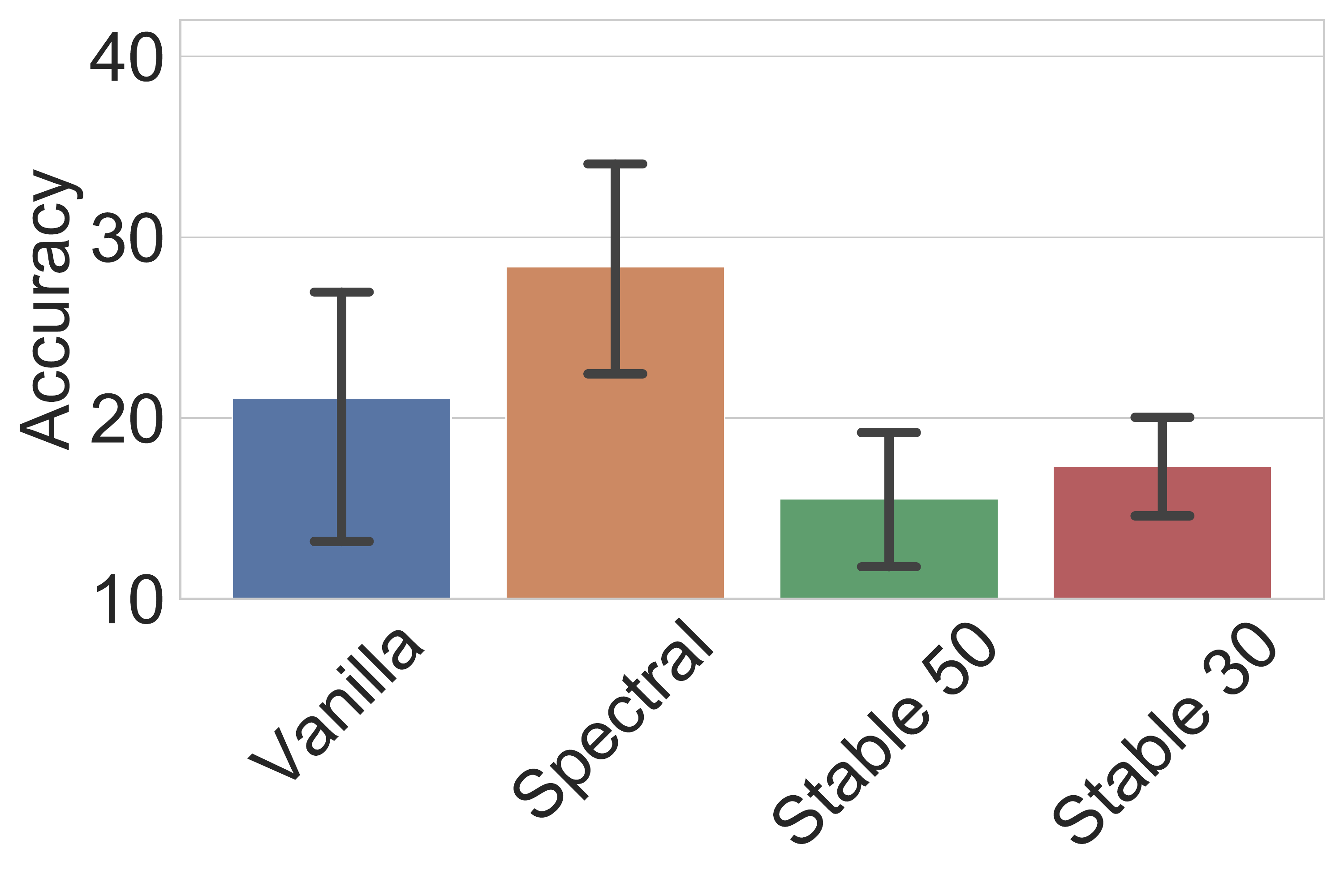_tex}
    \caption{Resnet110}\label{fig:r100-rand-train-c10}
  \end{subfigure}
  \begin{subfigure}[t]{0.33\linewidth}
    \def\svgwidth{0.98\linewidth}
    \input{./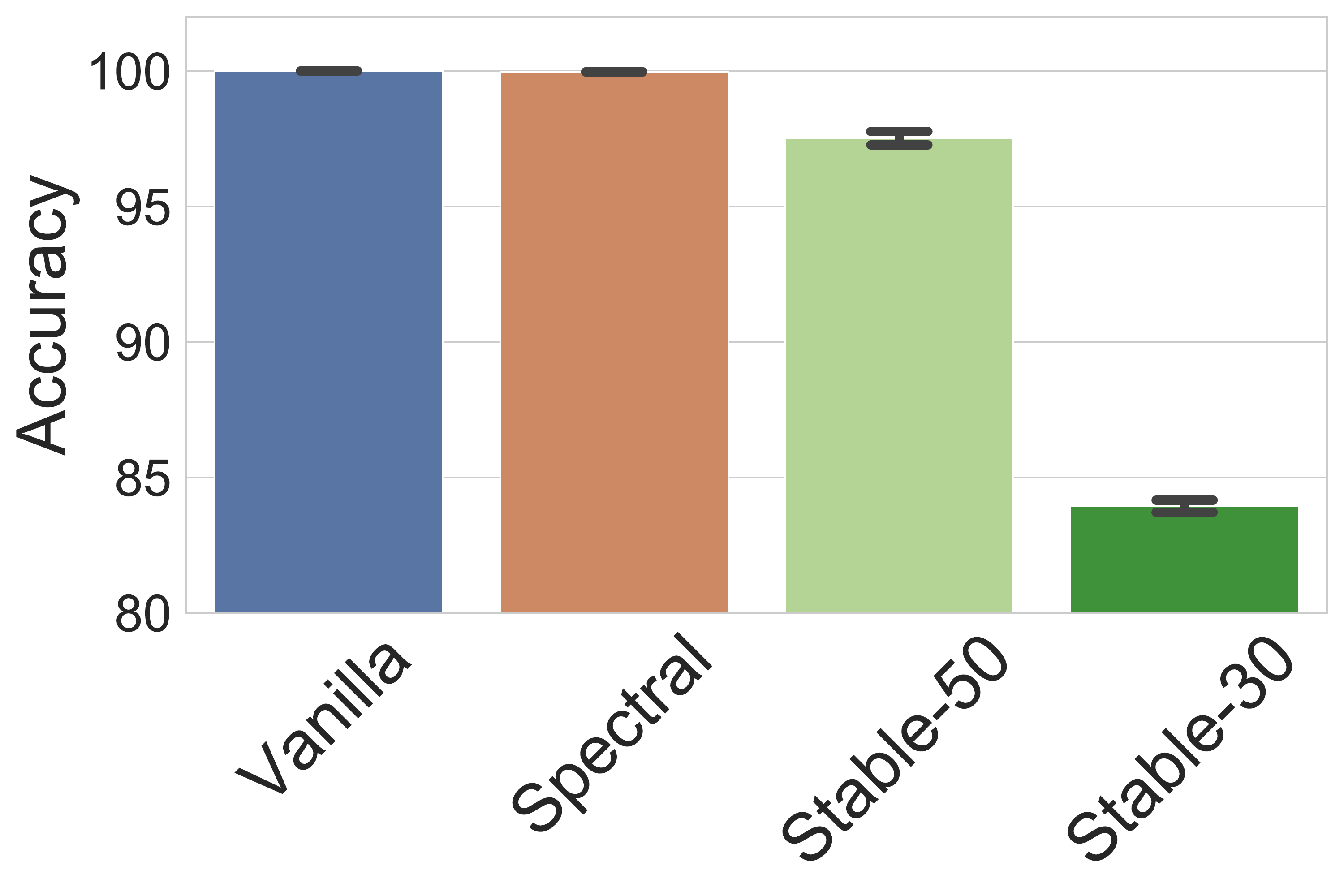_tex}
    \caption{Alexnet}\label{fig:alex-rand-train-c10}
  \end{subfigure}
  \caption{Training accuracy on randomly labelled CIFAR-10~(Lower
    is better).}
  \label{fig:cifar-10-gen}
\end{figure}

\section{Additional Experiments on \gls{gan}s}
\subsection{GAN experimental setup}
\label{sec:gansetup}
\paragraph{Datasets and Network Architectures}
Each of the CIFAR datasets contain a total of $50,000$ RGB images in the training set, where each image is of size  $32\times 32$,  and a further $10,000$ RGB images of the same dimension in the test set. The CelebA dataset contains more than 200K images scaled to a size of $64\times64$. The model architecture for  both the generator and the discriminator was chosen to be a 32 layered ResNet~\citep{HZRS:2016} due to its previous superior performance in other works~\citep{miyato2018spectral}. %
We use Adam optimizer~\citep{kingma2014adam} which depends on three main hyper-parameters $\alpha$- the initial learning rate, $\beta_1$- the first order moment decay rate and $\beta_2$- the second order moment decay rate. We cross-validate these parameters in the set $\alpha\in\{0.0002, 0.0005\},~\beta_1\in\{0, 0.5\},~\beta_2\in\{0.9, 0.999\}$ and chose $\alpha=0.0002$, $\beta_1=0.0$ and $\beta_2=0.999$ which performed consistently well in all of the experiments.

\paragraph{GAN objective functions}
In the case of conditional GANs~\citep{Mirza2014}, we used the conditional batch normalization~\citep{dumoulin2017learned} to condition the generator and the projection discriminator~\citep{Miyato2018} to condition the discriminator. The dimension of the latent variable for the generator was set to $128$ and was sampled from a zero mean and unit variance Gaussian distribution.  For training the model, we used the hinge loss version of the adversarial loss~\citep{Lim2017,Tran2017} in all experiments except the experiments with WGAN-GP. The hinge loss version was chosen  as it has  been shown to give consistently better performance in previous works~\citep{Zhang2018, miyato2018spectral}. For training the WGAN-GP model, we used the original loss function as described in~\citet{Gulrajani2017}. 

\paragraph{Evaluation Metrics}
We use \textit{Inception}~\citep{salimans2016improved}  and \textit{Frechet Inception Distance}~(FID)~\citep{heusel2017gans} scores for the evaluation of the generated samples. For measuring the inception score, we generate $50,000$ samples, as was recommended in~\citet{salimans2016improved}. For measuring FID, we use the same setting as~\citet{miyato2018spectral} where we sample $10,000$ data points from the training set and compare its statistics with that of $5,000$ generated samples. In addition, we use a recent evaluation metric called Neural divergence score~\citet{gulrajani2018towards} which is more robust to memorization. The exact set-up for the same is discussed below. In the case of conditional image generation, we also measure Intra-FID~\citep{miyato2018spectral}, which is the mean of the FID of the generator, when it is conditioned over different classes. Let $\mathrm{FID}(\mathcal{G}, c)$ be the FID of the generator $\mathcal{G}$ when it is conditioned on the class $c \in \mathcal{C}$ (where $\mathcal{C}$ is the set of classes), then, $\mathrm{Intra~FID}(\mathcal{G}) = \frac{1}{|\mathcal{C}|}\mathrm{FID}(\mathcal{G}, c)$

\paragraph{Neural Divergence Setup}
We train a new classifier inline with the architecture
in~\citet{gulrajani2018towards}. It includes three convolution layers
with 16, 32 and 64 channels, a kernel size of $5\times~5$ and a stride
of $2$. Each of these layers are followed by a Swish
activation~\citep{ramachandran2018searching} and then finally a linear
layer that gives a single output. The network is initialized using normal distribution with zero mean and the standard
deviation of $0.02$, and trained using Adam optimizer with $\alpha=0.0002,~\beta_1=0.,~\beta_2=0.9$ for a total of $100,000$ iterations with minibatch of $128$ generated samples and $128$ samples from the test set\footnote{For CelebA, we used the training set.}. We use the standard WGAN-GP loss function, $\log\br{1 + \exp\br{f\br{\vec{x}_{\mathrm{fake}}}}} + \log{\br{1 + \exp\br{-\vec{x}_{\mathrm{real}}}}}$, where $f$ represents the network described above. Finally, we generate $1~\mathrm{Million}$ samples from the generator and report the average $\log\br{1 + \exp\br{f\br{\vec{x}_{\mathrm{fake}}}}}$ over these samples. Higher average value implies better generation as the network in this case is unable to distinguish the generated and the real samples. %
\subsection{More Empirical Lipschitz plots}
\label{sec:lipsch-cond-gans}

For the purpose of analysis,~\cref{fig:lip_rank_stable_uncond_only_fake,fig:lip_rank_stable_uncond_only_real_uncond}
shows eLhist for pairs where each sample either comes from the true data or from the generator, and we
observe a similar trend.
To verify that same results hold in the conditional
setup, we show comparisons for~\gls{gan}s with projection
discriminator~\citep{Miyato2018}
in~\cref{fig:lip_rank_stable_cond,fig:lip_rank_stable_cond_only_fake,fig:lip_rank_stable_cond_only_real},
and observe a similar trend. Further, to see the value of the local Lipschitzness in the
vicinity of real and generated  samples we also plot the norm of the
Jacobian in~\cref{fig:lip_rank_stable_only_fake_vicinity,fig:lip_rank_stable_only_real_vicinity} in
Appendix~\ref{sec:lipsch-cond-gans} and  observe mostly a similar
trend. In~\cref{sec:noise-stability} (\cref{fig:disc_training_stability}), we also show that the
discriminator training of SRN-GAN is more stable than SN-GAN.

\paragraph{Conditional GANs}~\cref{fig:lip_rank_stable_cond} shows the eLihst of conditional \gls{gan}s with projection discriminator~\citep{Miyato2018}.
\begin{figure}[H]
  \centering
  \def\svgwidth{0.99\columnwidth}
  \resizebox{0.7\textwidth}{!}{\input{./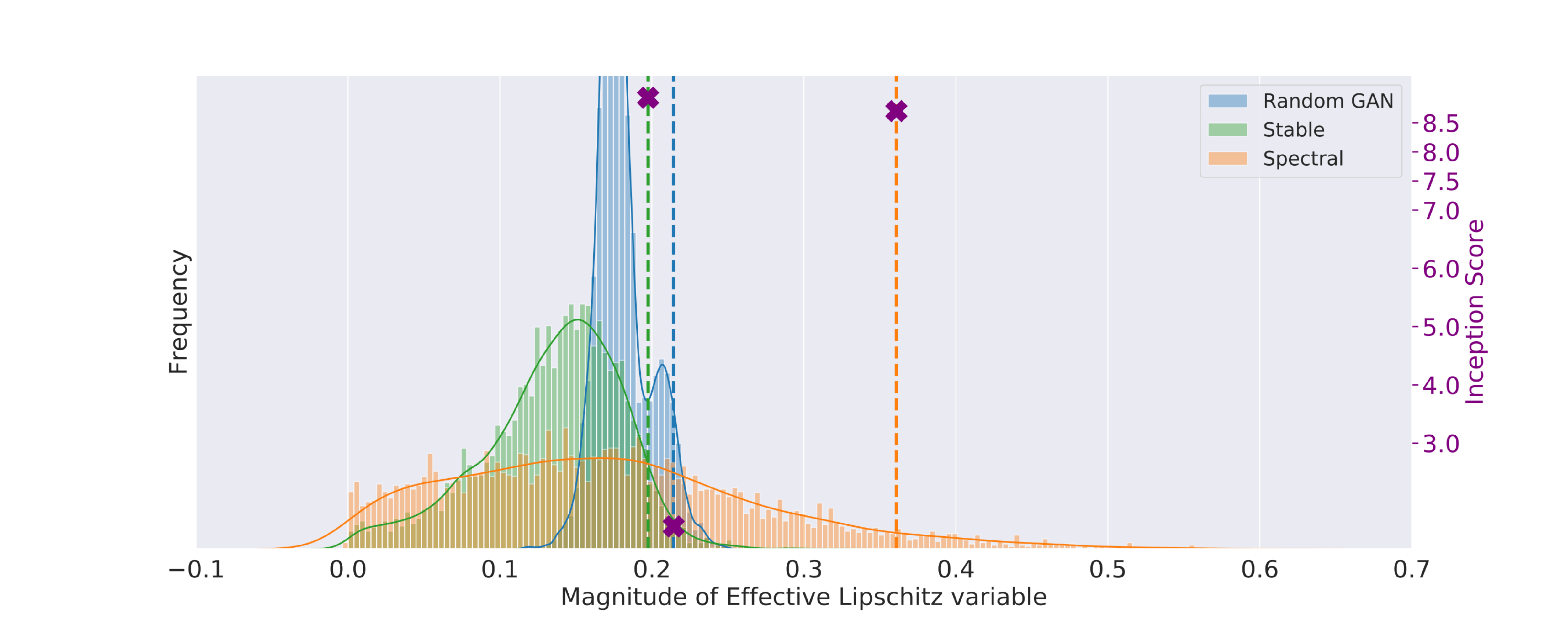_tex}} \caption{\footnotesize
    Comparison: {\bf eLhist} of the discriminator in the conditional GAN setting with projection discriminator on CIFAR100.}  \label{fig:lip_rank_stable_cond}
\end{figure}

\paragraph{Empirical Lipschitzness between real samples and between fake samples.}
Figure~\ref{fig:emp_lipschitz_fake_samples_only} shows the histogram
of eLhist of the discriminator for pairs of fake samples i.e. samples
generated by the
generator. Figure~\ref{fig:lip_rank_stable_uncond_only_real} shows
eLhist of the discriminator when samples came from the dataset.

\begin{figure}[!h]\vspace{-1em}
  \begin{subfigure}[t]{0.49\linewidth}
    \centering
  \def\svgwidth{0.99\columnwidth}
  \resizebox{0.95\textwidth}{!}{\input{./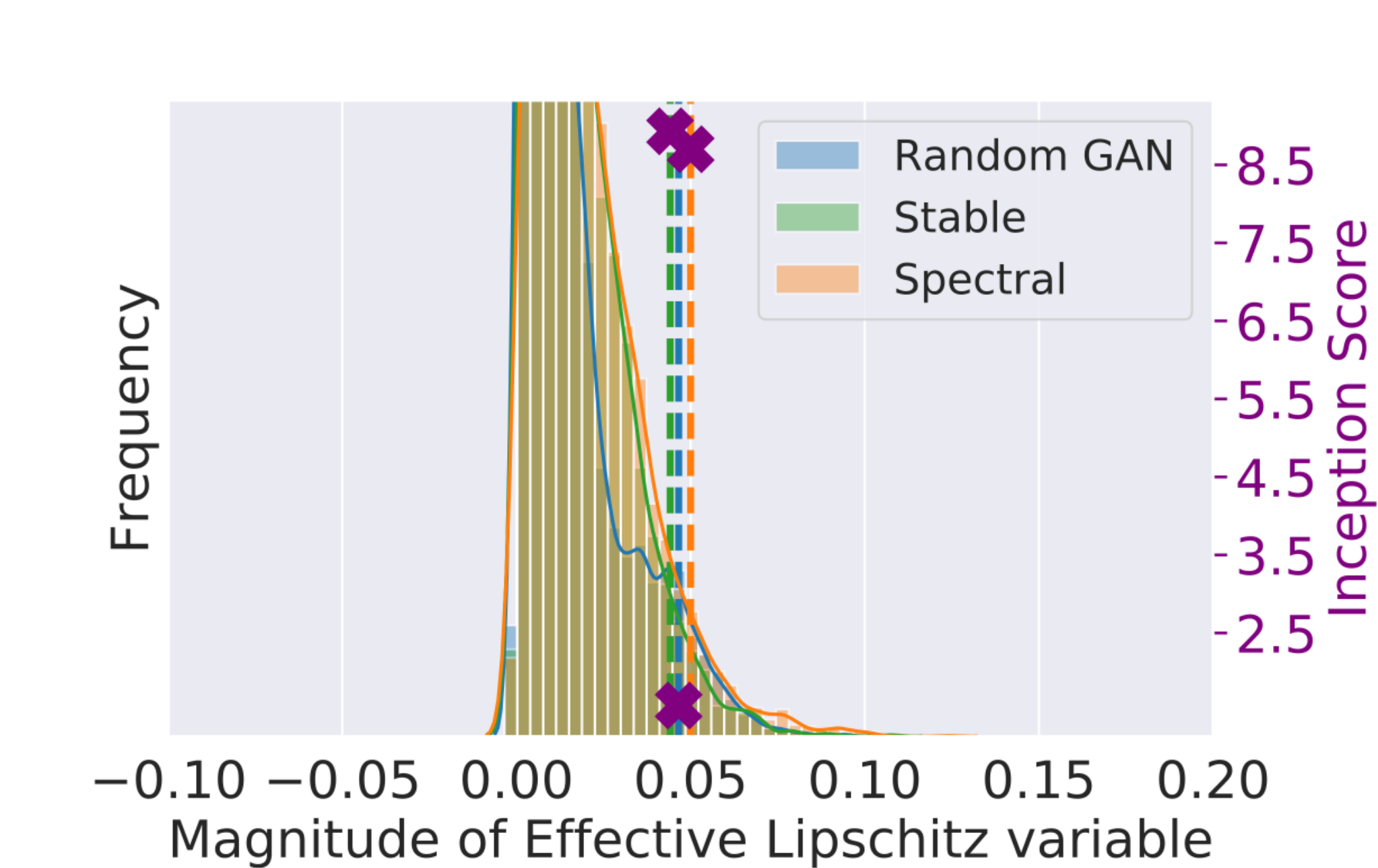_tex}} \caption{Conditional GAN  with projection discriminator.}  \label{fig:lip_rank_stable_cond_only_fake}
\end{subfigure}\hfill
\begin{subfigure}[t]{0.49\linewidth}
  \centering 
  \def\svgwidth{0.99\columnwidth}
  \resizebox{0.95\textwidth}{!}{\input{./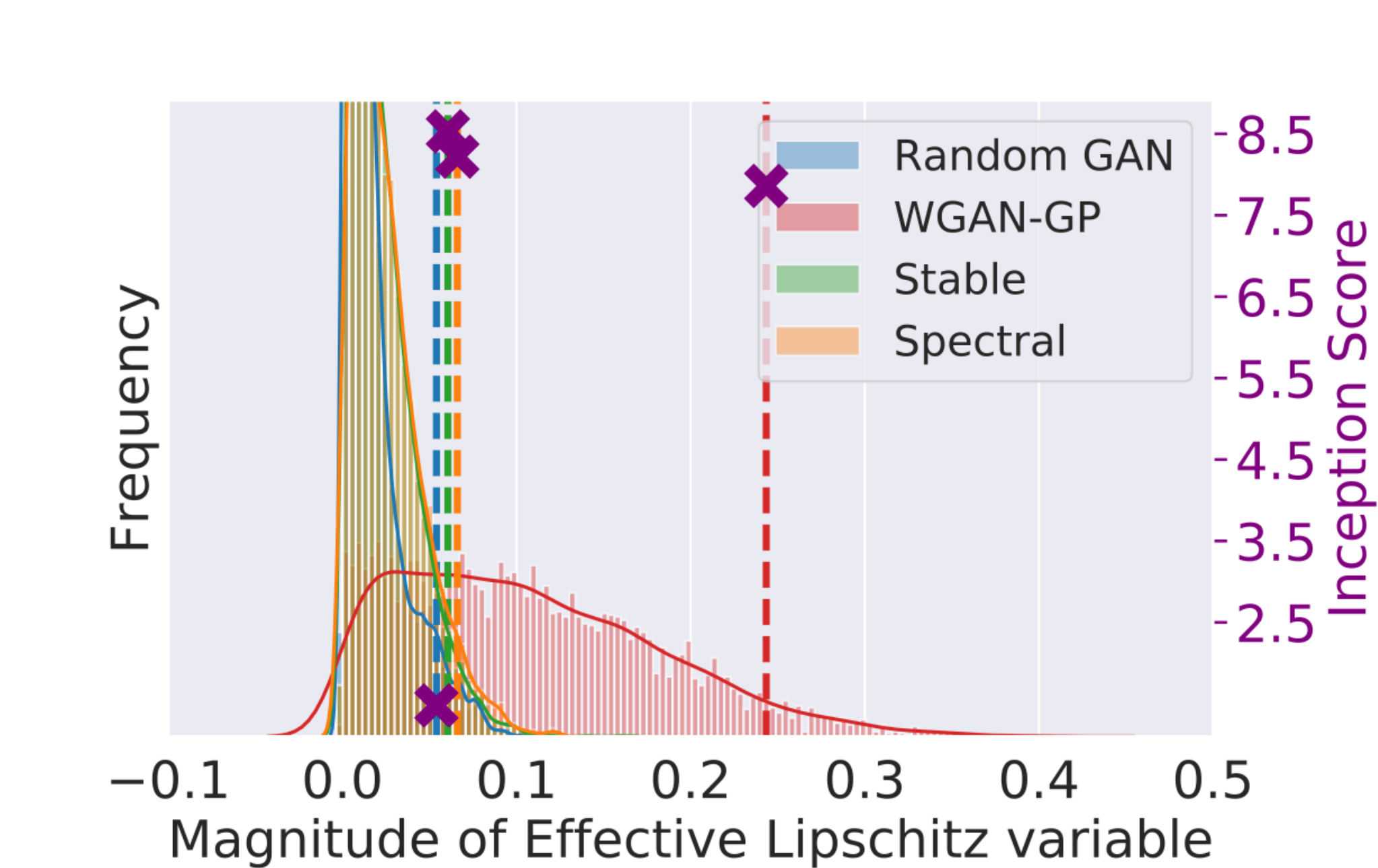_tex}}  \caption{Unconditional GAN setting.} \label{fig:lip_rank_stable_uncond_only_fake}
\end{subfigure}
\caption{\footnotesize Comparison: {\bf eLhist} of the discriminator for pairs of samples selected from the generator on CIFAR10}
\label{fig:emp_lipschitz_fake_samples_only}
\end{figure}

\begin{figure}[!h]\vspace{-1em}
\begin{subfigure}[!h]{0.49\linewidth}
  \centering
  \def\svgwidth{0.99\columnwidth}
  \resizebox{0.95\textwidth}{!}{\input{./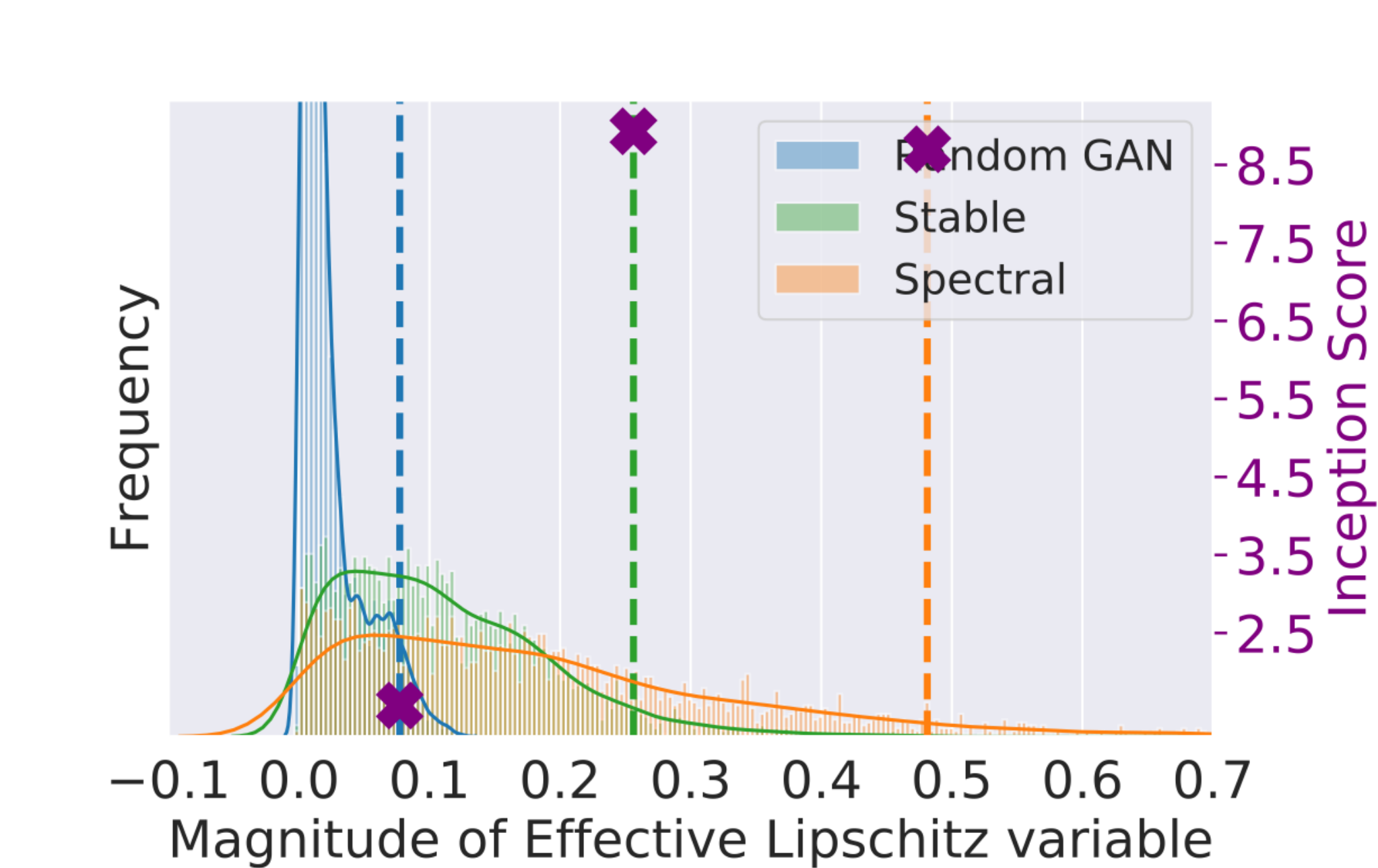_tex}} \caption{Conditional GAN with projection discriminator}  \label{fig:lip_rank_stable_cond_only_real}
\end{subfigure}
\begin{subfigure}[!h]{0.49\linewidth}
  \centering
  \def\svgwidth{0.99\columnwidth}
  \resizebox{0.95\textwidth}{!}{\input{./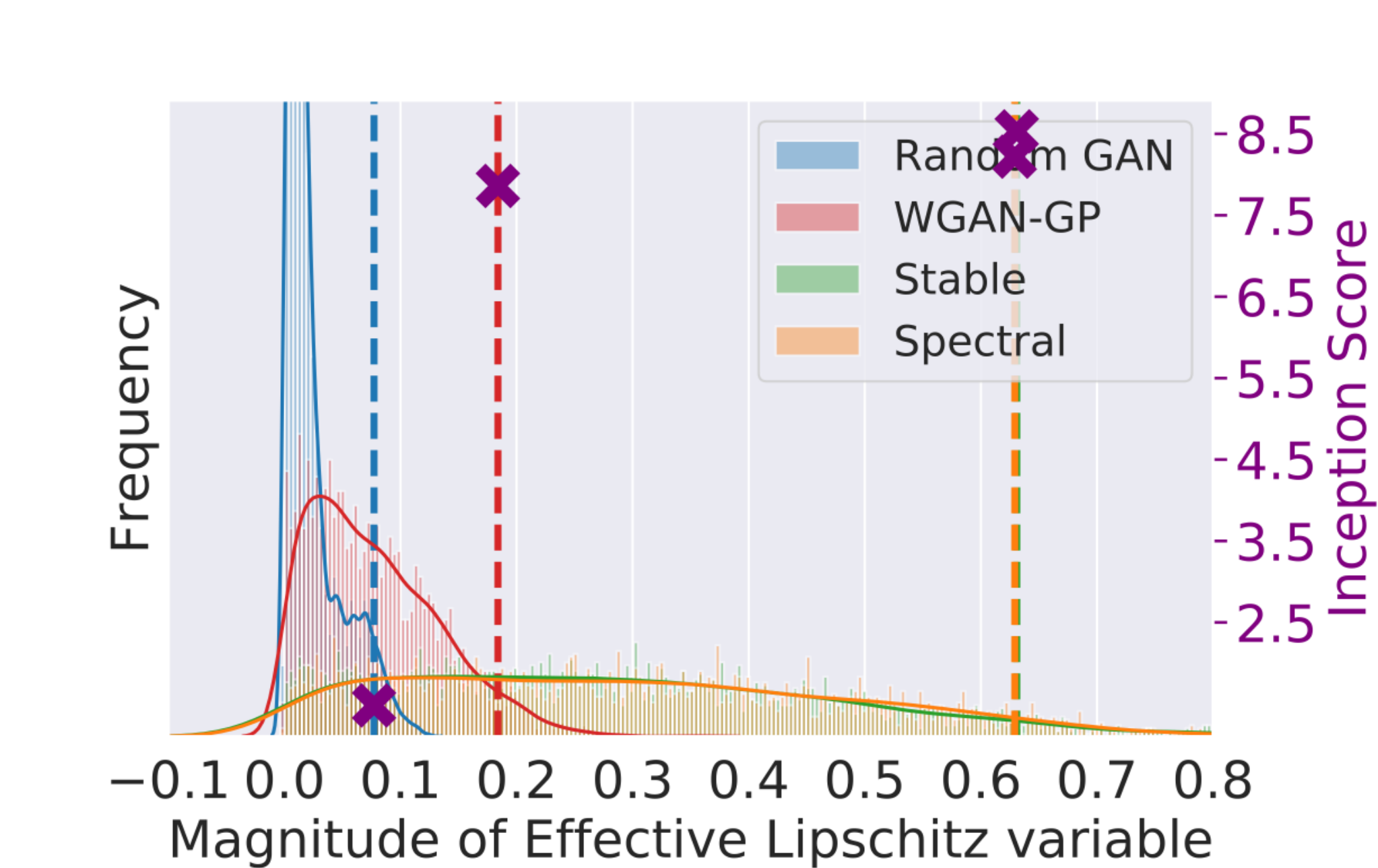_tex}} \caption{Unconditional GAN setting.}  \label{fig:lip_rank_stable_uncond_only_real_uncond}
\end{subfigure}
\caption{\footnotesize Comparison: {\bf  eLhist}  of the discriminator for pairs of samples from the real distribution on CIFAR10.} \label{fig:lip_rank_stable_uncond_only_real}
\end{figure}

\paragraph{Jacobian norm in the vicinity of the points}
Here we  compare the Jacobian of the discriminator of the trained
models in the vicinity of the samples from the generator and the
real dataset. This is a penalized measure in various
algorithms~\citet{Gulrajani2017,petzka2018on}~(often referred to as
local perturbations) and was independently proposed by
~\citet{kodali2018on}.~\cref{fig:lip_rank_stable_only_fake_vicinity} and~\cref{fig:lip_rank_stable_only_real_vicinity}
show the histogram of the norm of the Jacobian of the discriminator in
the vicinity of the generated and the real samples, respectively. To
generate these plots, $2,000$  samples were used from the
respective distributions. It is interesting to note that the norm is the same for the points in the vicinity of the
real data points and the generated data points for the \gls{srngan} as
well for WGAN-GP whereas it varies between fake and real samples for \gls{sngan}.

\begin{figure}[!htb]\vspace{-1em}

\begin{subfigure}[!h]{0.5\linewidth}
  \centering
  \def\svgwidth{0.99\columnwidth}
  \resizebox{0.95\textwidth}{!}{\input{./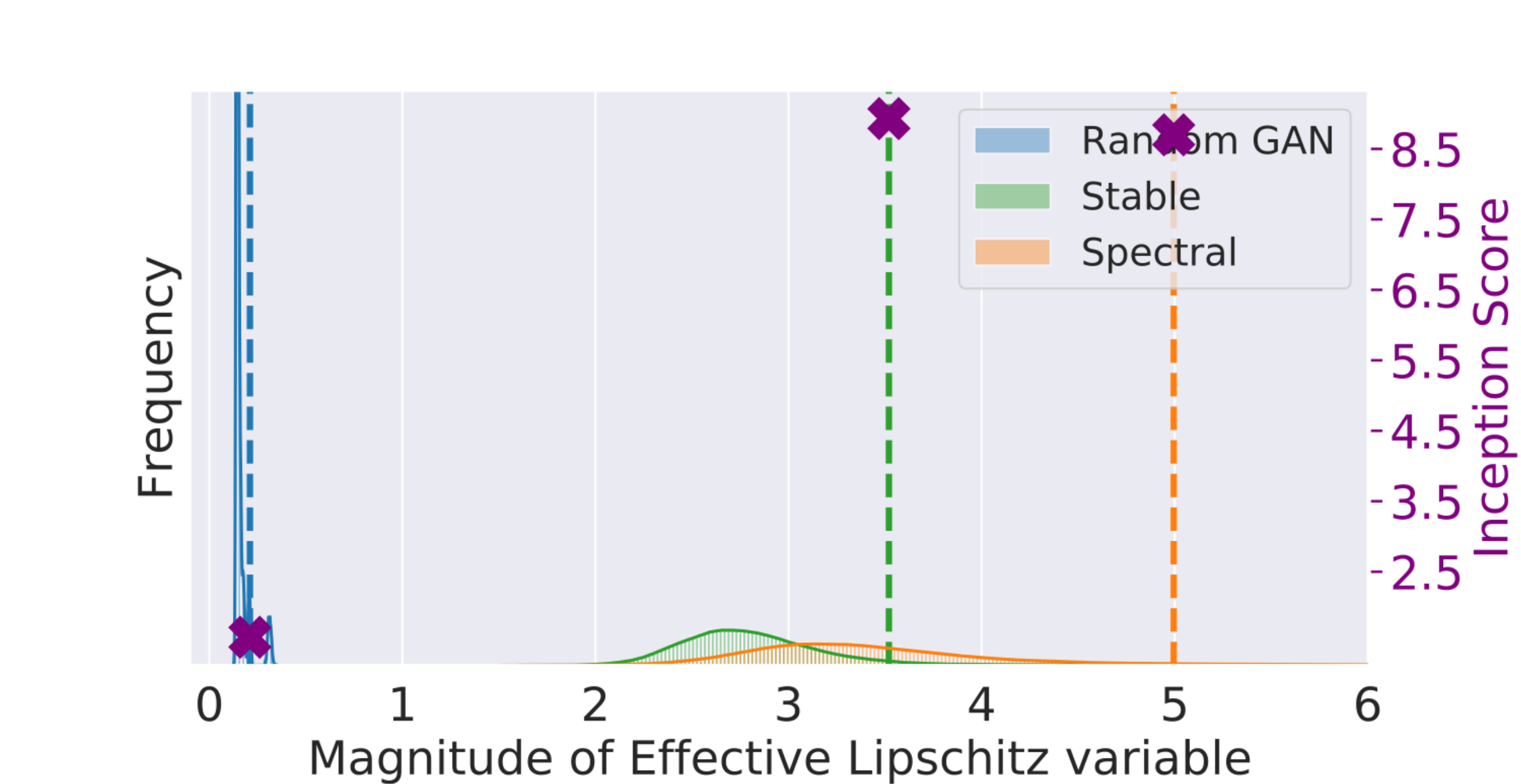_tex}}\caption{Conditional GAN with projection discriminator.}  \label{fig:lip_rank_stable_cond_only_fake_vicinity}
\end{subfigure}
\begin{subfigure}[!h]{0.5\linewidth}
  \centering
  \def\svgwidth{0.99\columnwidth}
  \resizebox{0.95\textwidth}{!}{\input{./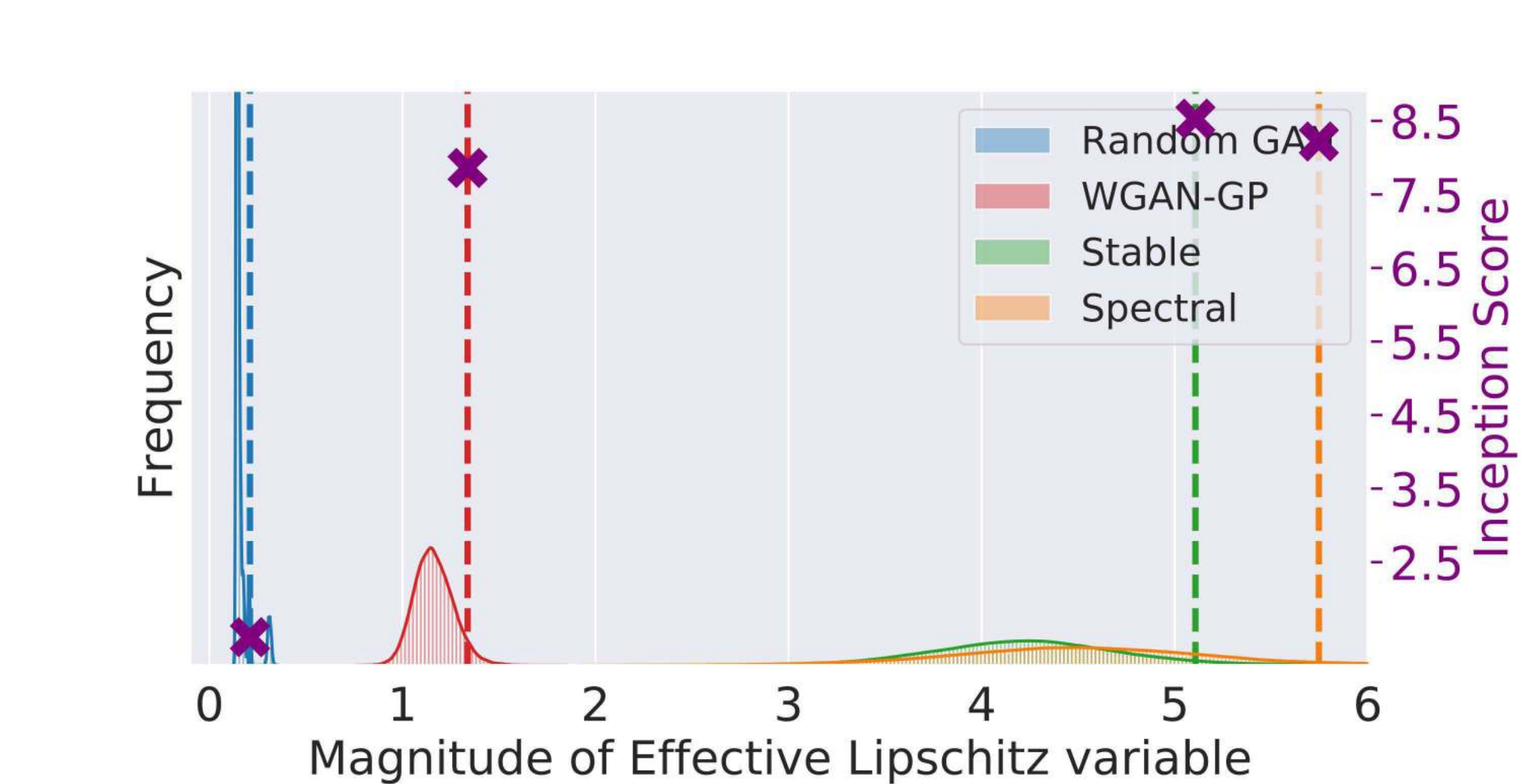_tex}} \caption{Unconditional GAN setting.}  \label{fig:lip_rank_stable_uncond_only_fake_vicinity}
\end{subfigure}
\caption{\footnotesize Jacobian norm of the discriminator in the neighbourhood of the samples from the \emph{generator} trained on CIFAR10.}\label{fig:lip_rank_stable_only_fake_vicinity}
\end{figure}

\begin{figure}[!htb]\vspace{-1em}

\begin{subfigure}[!h]{0.5\linewidth}
  \centering
  \def\svgwidth{0.99\columnwidth}
  \resizebox{0.95\textwidth}{!}{\input{./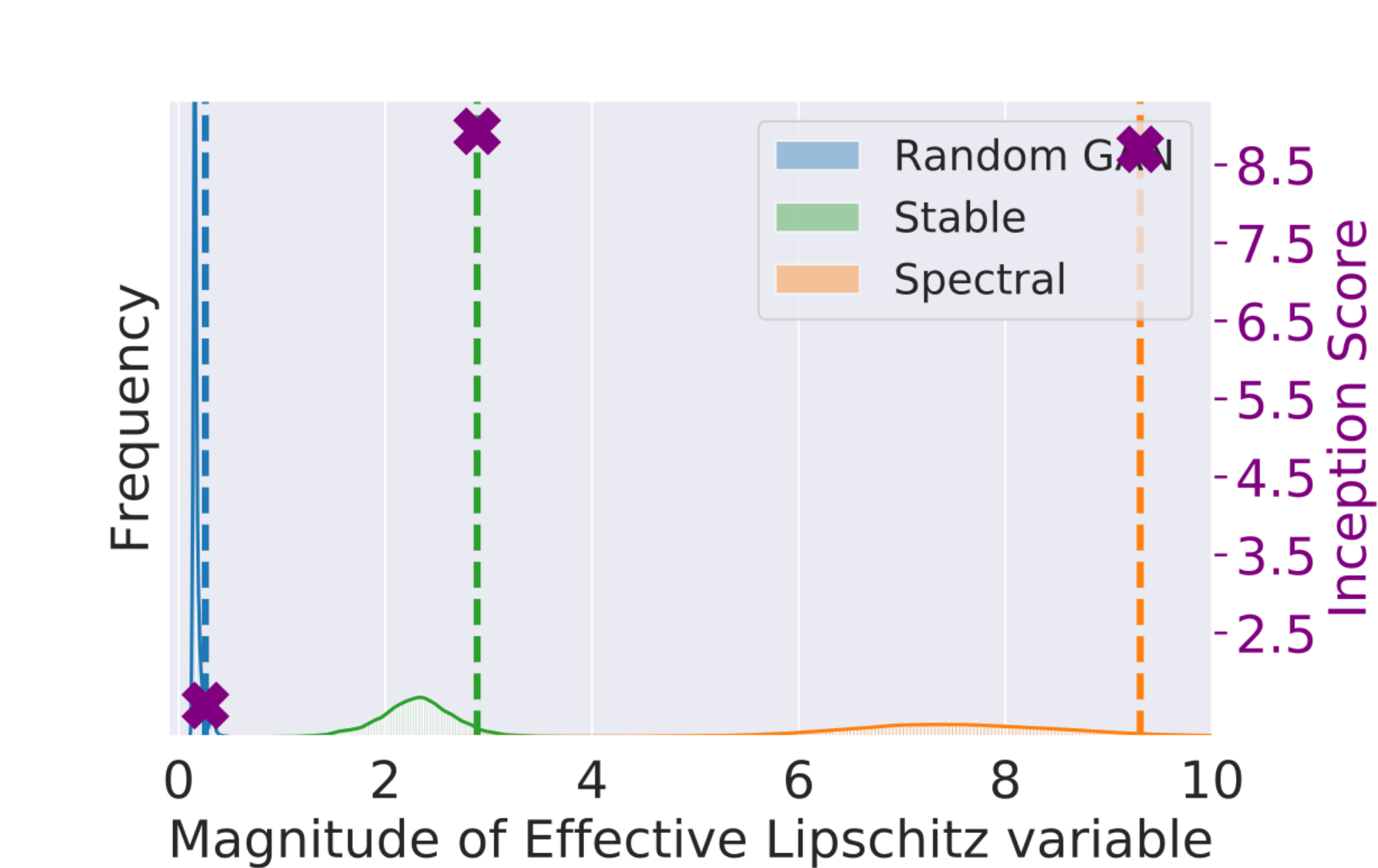_tex}}\caption{Conditional GAN with projection discriminator}  \label{fig:lip_rank_stable_cond_only_real_vicinity_double}
\end{subfigure}
\begin{subfigure}[!h]{0.5\linewidth}
  \centering
  \def\svgwidth{0.99\columnwidth}
  \resizebox{0.95\textwidth}{!}{\input{./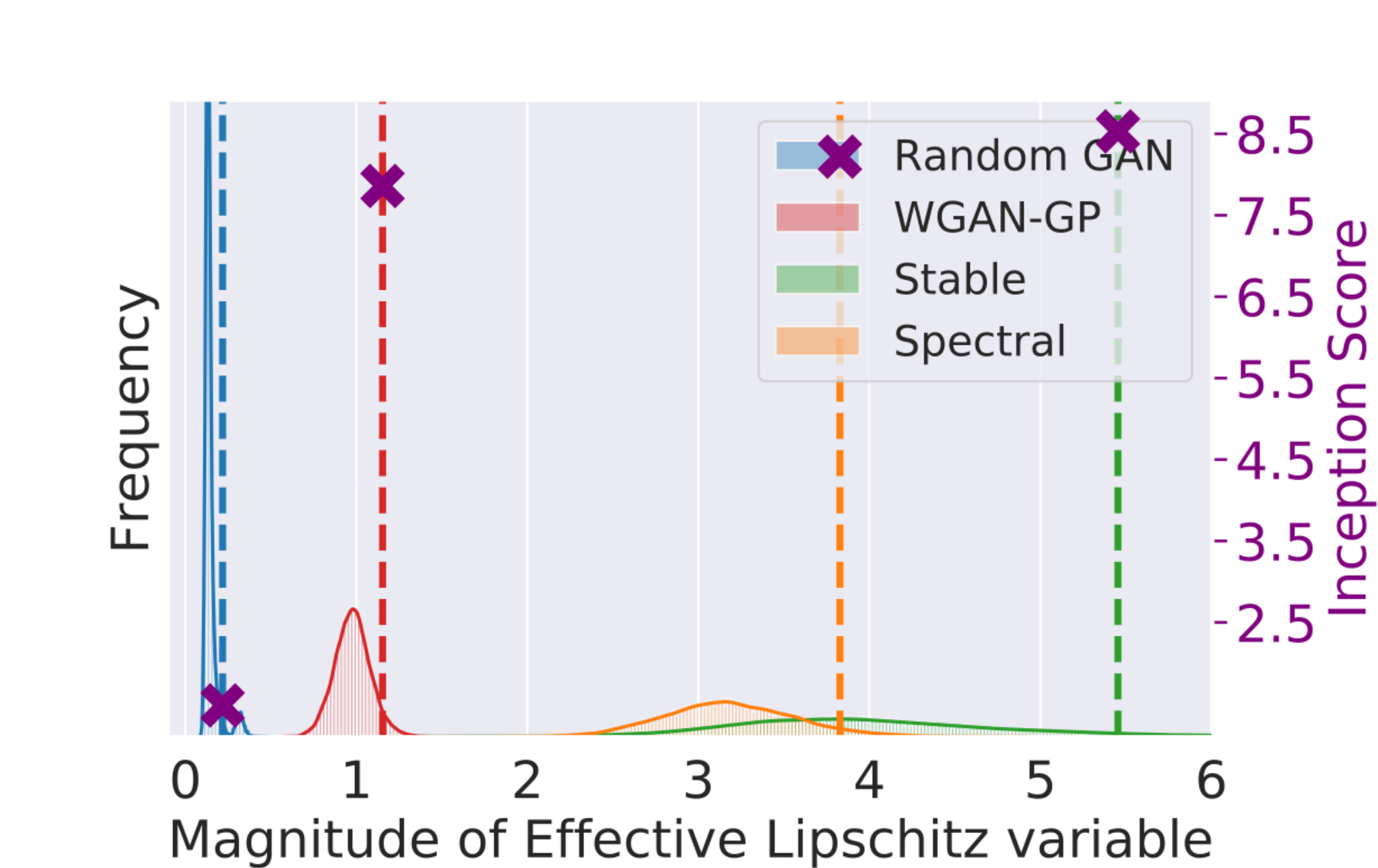_tex}} \caption{Unconditional GAN setting.}  \label{fig:lip_rank_stable_uncond_only_real_vicinity_2}
\end{subfigure}
\caption{\footnotesize Jacobian norm of the discriminator in the neighbourhood of the samples from the \emph{real dataset}  (CIFAR10).} \label{fig:lip_rank_stable_only_real_vicinity}
\end{figure}
\subsection{Training Stability}
\label{sec:noise-stability}

\begin{figure}[!h]
  \centering
  \def\svgwidth{0.5\columnwidth}
  \resizebox{0.95\textwidth}{!}{\input{./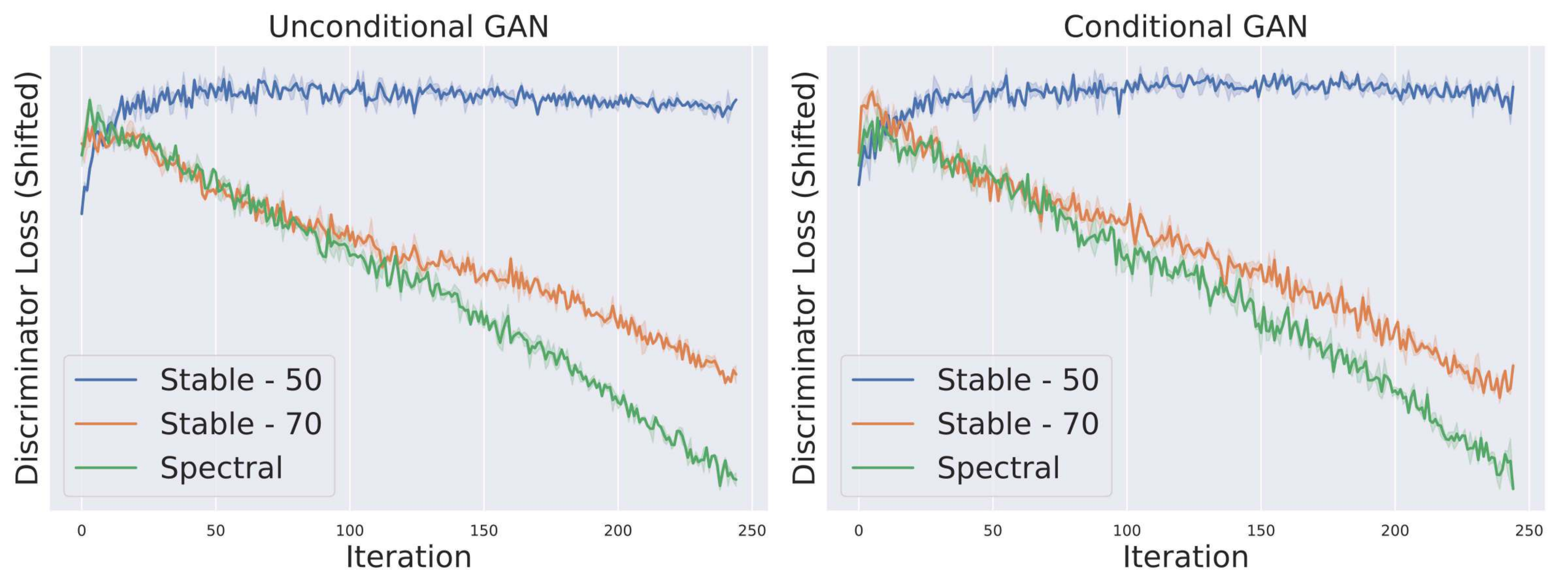_tex}} \caption{\footnotesize
    Loss incurred by the discriminator. The loss of \gls{srngan} with the stable rank constraint of $70$ is shifted upwards by $0.2$ so that we can compare the change of the loss during training as opposed to the absolute magnitude of the loss.}  \label{fig:disc_training_stability}
\end{figure}

\paragraph{Training Stability}

In~\cref{fig:disc_training_stability} we show the discriminator loss during the course of the training as an indicator of whether the generator gets sufficient gradient during training or not. These plots clearly suggest that the discriminator loss is more consistent for SRN than the SN.

\section{Examples of Generated Images}
\label{sec:sample-images}
\subsection{CelebA images}
For these images, we generated $100$ images from the respective models and hand-picked the $10$ best images in terms of visual quality.

\begin{center}
	\begin{figure}[!h]
	  \begin{subfigure}[c]{0.19\linewidth}
            \centering
            \def\svgwidth{0.99\columnwidth}
            \input{./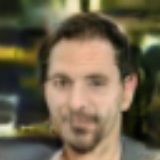_tex} 
	  \end{subfigure}
	  \begin{subfigure}[c]{0.19\linewidth}
            \centering
            \def\svgwidth{0.99\columnwidth}
            \input{./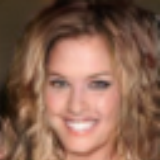_tex} 
	  \end{subfigure}
           \begin{subfigure}[c]{0.19\linewidth}
            \centering
            \def\svgwidth{0.99\columnwidth}
            \input{./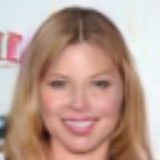_tex} 
	  \end{subfigure} \begin{subfigure}[c]{0.19\linewidth}
            \centering
            \def\svgwidth{0.99\columnwidth}
            \input{./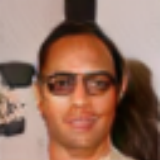_tex} 
	  \end{subfigure} \begin{subfigure}[c]{0.19\linewidth}
            \centering
            \def\svgwidth{0.99\columnwidth}
            \input{./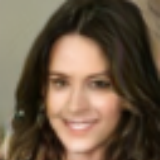_tex} 
	  \end{subfigure}\\ \begin{subfigure}[c]{0.19\linewidth}
            \centering
            \def\svgwidth{0.99\columnwidth}
            \input{./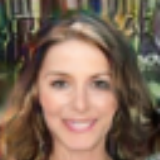_tex} 
	  \end{subfigure} \begin{subfigure}[c]{0.19\linewidth}
            \centering
            \def\svgwidth{0.99\columnwidth}
            \input{./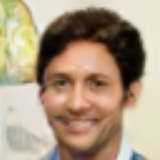_tex} 
	  \end{subfigure} \begin{subfigure}[c]{0.19\linewidth}
            \centering
            \def\svgwidth{0.99\columnwidth}
            \input{./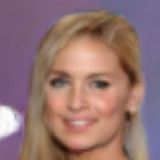_tex} 
	  \end{subfigure} \begin{subfigure}[c]{0.19\linewidth}
            \centering
            \def\svgwidth{0.99\columnwidth}
            \input{./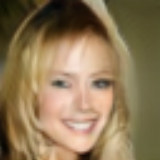_tex} 
	  \end{subfigure} \begin{subfigure}[c]{0.19\linewidth}
            \centering
            \def\svgwidth{0.99\columnwidth}
            \input{./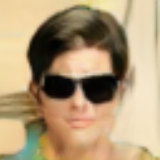_tex} 
	  \end{subfigure} 
            \caption{Image samples generated from the unconditional \gls{srngan}.}
            \label{fig:images_celeb_srn}
        \end{figure}
        
      \end{center}

\begin{center}
	\begin{figure}[!h]
	  \begin{subfigure}[c]{0.19\linewidth}
            \centering
            \def\svgwidth{0.99\columnwidth}
            \input{./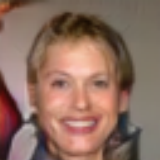_tex} 
	  \end{subfigure}
	  \begin{subfigure}[c]{0.19\linewidth}
            \centering
            \def\svgwidth{0.99\columnwidth}
            \input{./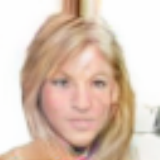_tex} 
	  \end{subfigure}
           \begin{subfigure}[c]{0.19\linewidth}
            \centering
            \def\svgwidth{0.99\columnwidth}
            \input{./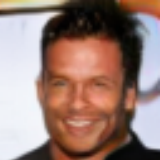_tex} 
	  \end{subfigure} \begin{subfigure}[c]{0.19\linewidth}
            \centering
            \def\svgwidth{0.99\columnwidth}
            \input{./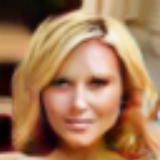_tex} 
	  \end{subfigure} \begin{subfigure}[c]{0.19\linewidth}
            \centering
            \def\svgwidth{0.99\columnwidth}
            \input{./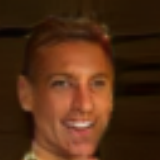_tex} 
	  \end{subfigure}\\ \begin{subfigure}[c]{0.19\linewidth}
            \centering
            \def\svgwidth{0.99\columnwidth}
            \input{./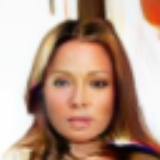_tex} 
	  \end{subfigure} \begin{subfigure}[c]{0.19\linewidth}
            \centering
            \def\svgwidth{0.99\columnwidth}
            \input{./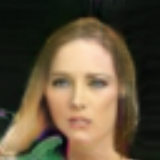_tex} 
	  \end{subfigure} \begin{subfigure}[c]{0.19\linewidth}
            \centering
            \def\svgwidth{0.99\columnwidth}
            \input{./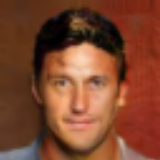_tex} 
	  \end{subfigure} \begin{subfigure}[c]{0.19\linewidth}
            \centering
            \def\svgwidth{0.99\columnwidth}
            \input{./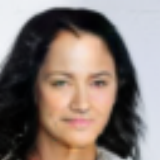_tex} 
	  \end{subfigure} \begin{subfigure}[c]{0.19\linewidth}
            \centering
            \def\svgwidth{0.99\columnwidth}
            \input{./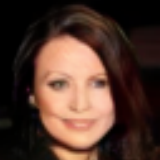_tex} 
	  \end{subfigure} 
            \caption{Image samples generated from the unconditional \gls{sngan}.}
            \label{fig:images_celeb_sn}
        \end{figure}
        
      \end{center}
      
\clearpage
\subsection{CIFAR10-Unconditional GAN}
\label{sec:unconditional-gan}
\begin{center}
  \begin{figure}[!h]\centering
    \begin{subfigure}[c]{0.9\linewidth}
      \centering
      \def\svgwidth{0.99\columnwidth}
      \input{./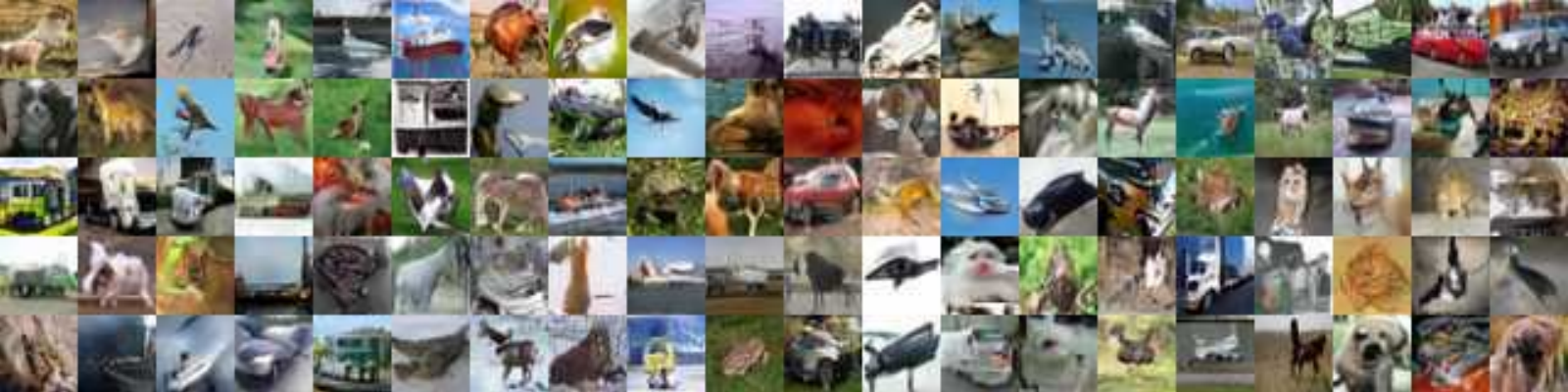_tex}\caption{SRN-70-GAN}  
    \end{subfigure}\\
    
    \begin{subfigure}[c]{0.9\linewidth}
      \centering
      \def\svgwidth{0.99\columnwidth}
      \input{./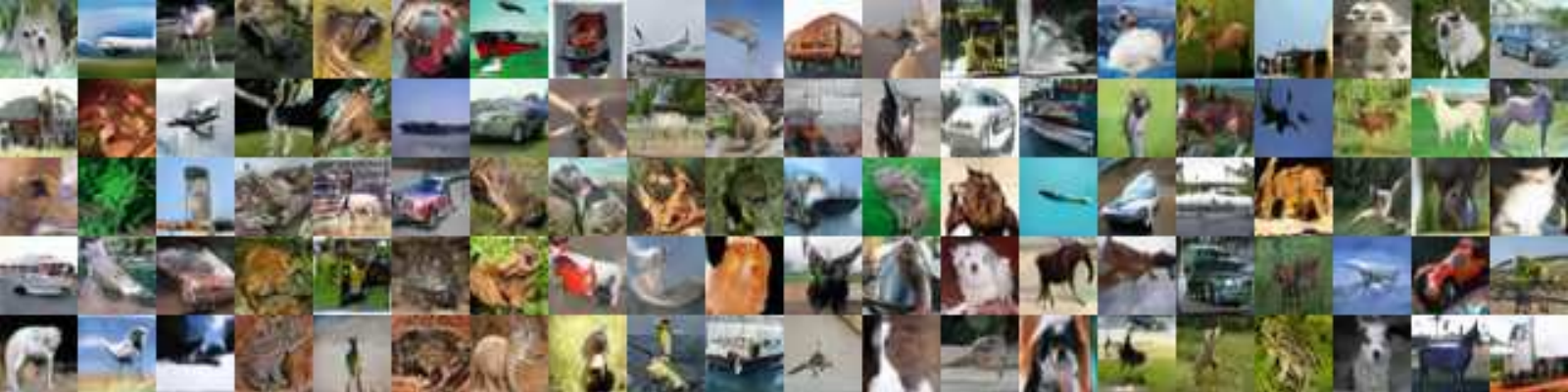_tex}\caption{SRN-50-GAN}  
    \end{subfigure}\\
    
    \begin{subfigure}[c]{0.9\linewidth}
      \centering
      \def\svgwidth{0.99\columnwidth}
      \input{./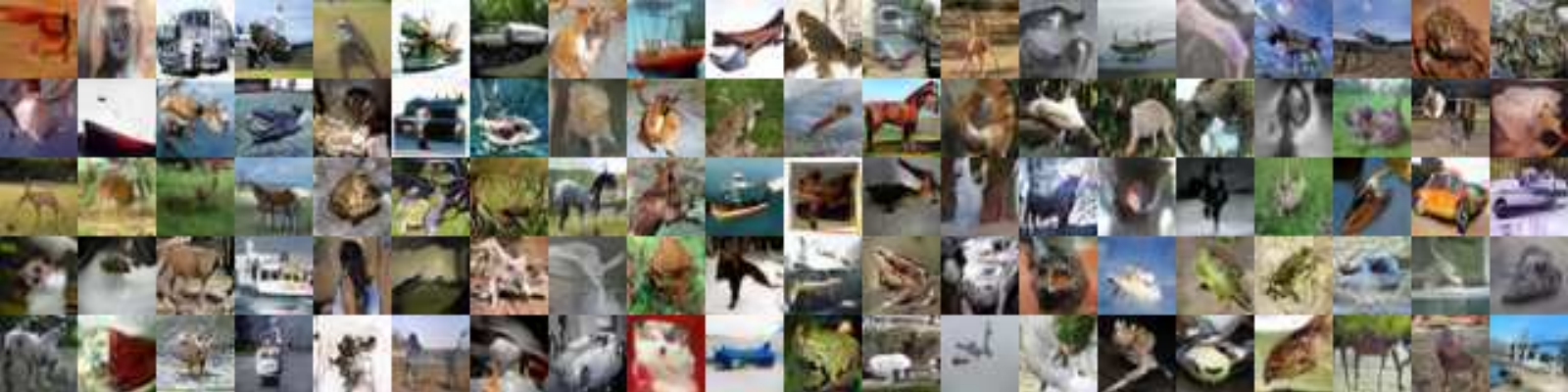_tex}\caption{\gls{sngan}}  
    \end{subfigure}\\
    
    \begin{subfigure}[c]{0.9\linewidth}
      \centering
      \def\svgwidth{0.99\columnwidth}
      \input{./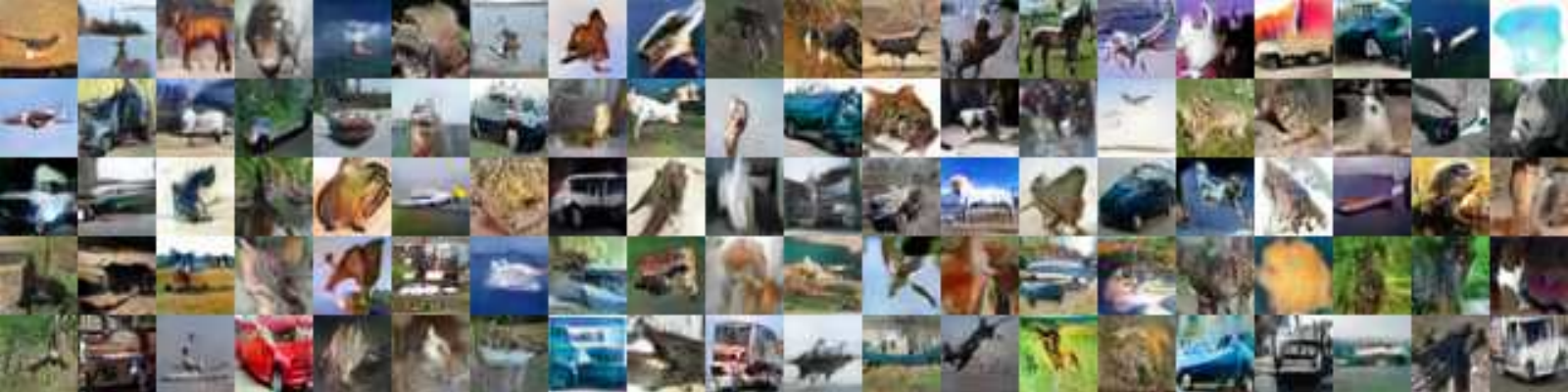_tex}\caption{WGAN-GP}  
    \end{subfigure}
    \caption{Image samples generated from the unconditional \gls{srngan},
      \gls{sngan}, and WGAN-GP.}
    \label{fig:uncond_images_gan}
  \end{figure}
  
\end{center}

\clearpage
\subsection{CIFAR10-Conditional \gls{srngan}}
\label{sec:uncond-gan}
\begin{center}
  \begin{figure}[h!]\centering
    \begin{subfigure}[c]{0.3\linewidth}
      \centering
      \def\svgwidth{0.99\columnwidth}
      \input{./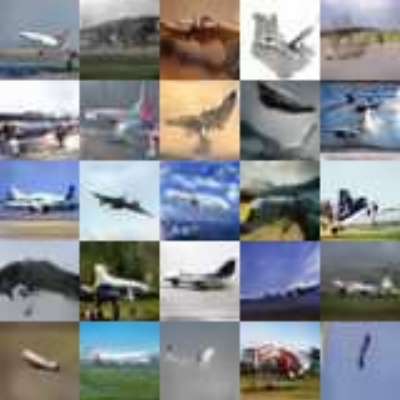_tex}\caption{Air-planes}  
    \end{subfigure}%
    \begin{subfigure}[c]{0.3\linewidth}
      \centering
      \def\svgwidth{0.99\columnwidth}
      \input{./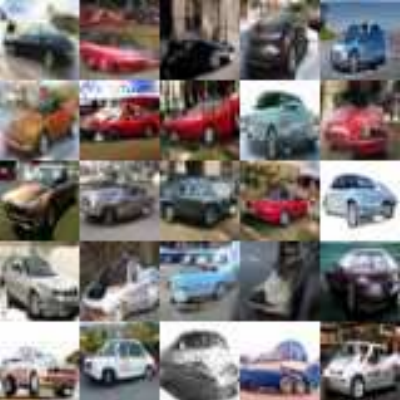_tex}\caption{Cars}  
    \end{subfigure}
    \begin{subfigure}[c]{0.3\linewidth}
      \centering
      \def\svgwidth{0.99\columnwidth}
      \input{./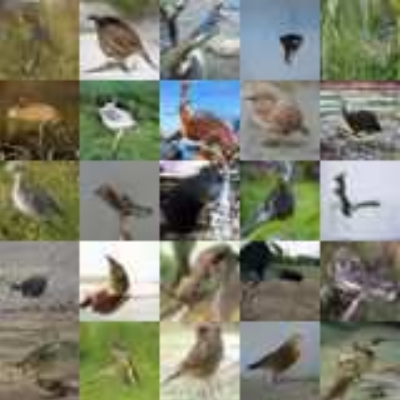_tex}\caption{Birds}  
    \end{subfigure}%
    \\
    \begin{subfigure}[c]{0.3\linewidth}
      \centering
      \def\svgwidth{0.99\columnwidth}
      \input{./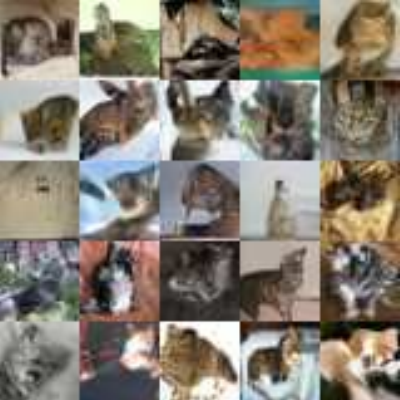_tex}\caption{Cats}  
    \end{subfigure}
       \begin{subfigure}[c]{0.3\linewidth}
      \centering
      \def\svgwidth{0.99\columnwidth}
      \input{./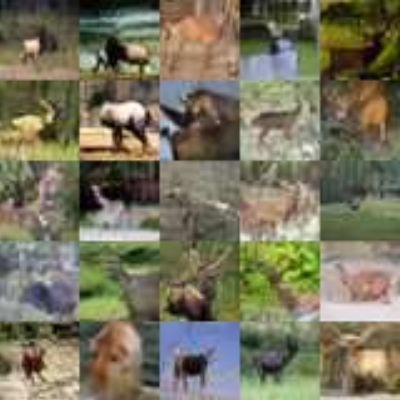_tex}\caption{Deers}  
    \end{subfigure}%
    \begin{subfigure}[c]{0.3\linewidth}
      \centering
      \def\svgwidth{0.99\columnwidth}
      \input{./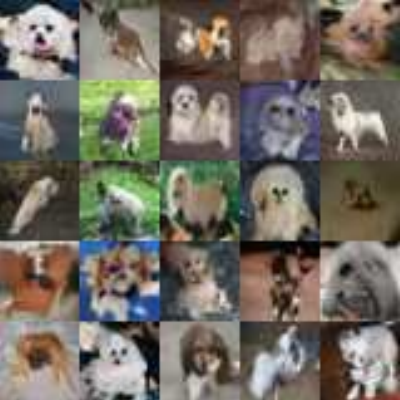_tex}\caption{Dogs}  
    \end{subfigure}
    \caption{Image samples generated from the conditional SRN-GAN  with
      projection discriminator.}
      \label{fig:Images_1}
  \end{figure}
\end{center}

\end{document}